\documentclass{article}

% if you need to pass options to natbib, use, e.g.:
%     \PassOptionsToPackage{numbers, compress}{natbib}
% before loading neurips_2022

% ready for submission
% \usepackage[preprint]{neurips_2022}
% \usepackage[nonatbib]{neurips_2022}
% \usepackage[square,numbers]{natbib}

% to compile a preprint version, e.g., for submission to arXiv, add add the
% [preprint] option:
%     \usepackage[preprint]{neurips_2022}

% to compile a camera-ready version, add the [final] option, e.g.:
    \usepackage[final]{neurips_2022}

% to avoid loading the natbib package, add option nonatbib:
%    \usepackage[nonatbib]{neurips_2022}

\usepackage[utf8]{inputenc} % allow utf-8 input
\usepackage[T1]{fontenc}    % use 8-bit T1 fonts
\usepackage{hyperref}       % hyperlinks
\usepackage{url}            % simple URL typesetting
\usepackage{booktabs}       % professional-quality tables
\usepackage{amsfonts}       % blackboard math symbols
\usepackage{nicefrac}       % compact symbols for 1/2, etc.
\usepackage{microtype}      % microtypography
\usepackage{xcolor}         % colors

\usepackage{amsmath}
\usepackage{amssymb}
\usepackage{amsthm}
\usepackage{colortbl}
\usepackage{booktabs} % for professional tables
\usepackage{algorithm}
\usepackage{algorithmic}
\newtheorem{theorem}{Theorem}
\newtheorem{lemma}{Lemma}
\newtheorem{proposition}{Proposition}

% {\theorembodyfont{\upshape}
% \newtheorem{uremark}[theorem]{Remark}}
\theoremstyle{definition}
\newtheorem{remark}{Remark}
\newtheorem{definition}{Definition}

\newcommand{\cD}{\mathcal{D}}

\newcommand{\cP}{\mathcal{P}}

\newcommand{\Nin}{N^{\mathrm{in}}}
\newcommand{\Nout}{N^{\mathrm{out}}}

\newcommand{\linner}{\left\langle}
\newcommand{\rinner}{\right\rangle}

\DeclareMathOperator*{\argmin}{arg\,min}

\DeclareMathOperator*{\E}{\mathbf{E}}
\newcommand{\re}{\mathbb{R}}
\renewcommand{\epsilon}{\varepsilon}

\newcommand{\revise}[1]{#1}

\title{Nearly Optimal Best-of-Both-Worlds Algorithms for Online Learning with Feedback Graphs}

% The \author macro works with any number of authors. There are two commands
% used to separate the names and addresses of multiple authors: \And and \AND.
%
% Using \And between authors leaves it to LaTeX to determine where to break the
% lines. Using \AND forces a line break at that point. So, if LaTeX puts 3 of 4
% authors names on the first line, and the last on the second line, try using
% \AND instead of \And before the third author name.

\author{%
  Shinji Ito \\
  NEC Corporation, Tokyo, Japan \\
  RIKEN AIP, Tokyo, Japan \\
  \texttt{i-shinji@nec.com}\\
  \And
  Taira Tsuchiya \\
  Kyoto University, Kyoto, Japan \\
  RIKEN AIP, Tokyo, Japan \\
  \texttt{tsuchiya@sys.i.kyoto-u.ac.jp} \\
  \AND
  Junya Honda \\
  Kyoto University, Kyoto, Japan \\
  RIKEN AIP, Tokyo, Japan \\
  \texttt{honda@i.kyoto-u.ac.jp} \\
  % David S.~Hippocampus\thanks{Use footnote for providing further information
  %   about author (webpage, alternative address)---\emph{not} for acknowledging
  %   funding agencies.} \\
  % Department of Computer Science\\
  % Cranberry-Lemon University\\
  % Pittsburgh, PA 15213 \\
  % \texttt{hippo@cs.cranberry-lemon.edu} \\
  % examples of more authors
  % \And
  % Coauthor \\
  % Affiliation \\
  % Address \\
  % \texttt{email} \\
  % \AND
  % Coauthor \\
  % Affiliation \\
  % Address \\
  % \texttt{email} \\
  % \And
  % Coauthor \\
  % Affiliation \\
  % Address \\
  % \texttt{email} \\
  % \And
  % Coauthor \\
  % Affiliation \\
  % Address \\
  % \texttt{email} \\
}

\begin{document}

\maketitle

\begin{abstract}
  This study considers online learning with general directed feedback graphs. For this problem, we present best-of-both-worlds algorithms that achieve nearly tight regret bounds for adversarial environments as well as poly-logarithmic regret bounds for stochastic environments. As \citet{alon2015online} have shown, tight regret bounds depend on the structure of the feedback graph: \textit{strongly observable} graphs yield minimax regret of $\tilde{\Theta}( \alpha^{1/2} T^{1/2} )$, while \textit{weakly observable} graphs induce minimax regret of $\tilde{\Theta}( \delta^{1/3} T^{2/3} )$, where $\alpha$ and $\delta$, respectively, represent the independence number of the graph and the domination number of a certain portion of the graph. Our proposed algorithm for strongly observable graphs has a regret bound of $\tilde{O}( \alpha^{1/2} T^{1/2} ) $ for adversarial environments, as well as of $ {O} ( \frac{\alpha (\ln T)^3 }{\Delta_{\min}} ) $ for stochastic environments, where $\Delta_{\min}$ expresses the minimum suboptimality gap. This result resolves an open question raised by \citet{erez2021towards}. We also provide an algorithm for weakly observable graphs that achieves a regret bound of $\tilde{O}( \delta^{1/3}T^{2/3} )$ for adversarial environments and poly-logarithmic regret for stochastic environments. The proposed algorithms are based on the \revise{follow-the-regularized-leader} approach combined with newly designed update rules for learning rates.
\end{abstract}

\section{Introduction}
\label{sec:intro}
In this paper,
we consider \textit{online learning with feedback graphs} \citep{mannor2011bandits},
a common generalization of the multi-armed bandit problem \citep{lai1985asymptotically,auer2002finite,auer2002nonstochastic} and the problem of prediction with expert advice \citep{littlestone1994weighted,freund1997decision}.
This problem is a sequential decision-making problem formulated with a directed \textit{feedback graph} $G = (V, E)$,
where $V = [K] := \{ 1, 2, \ldots, K \}$ is the set of arms or available actions,
and $E \subseteq V \times V$ represents the structure of feedback for choosing actions.
In each round of $t = 1, 2, \ldots, T$,
a player sequentially chooses an action $I_t \in V$ and then incurs the loss of $\ell_t(I_t)$,
where $\ell_t: V \to [0, 1]$ is a loss function chosen by the environment.
After choosing the action,
the player gets feedback of $\ell_t(j)$ for all $j$ such that the feedback graph $G$ has an edge from $I_t$ to $j$.
If $G$ consists of only self-loops,
i.e.,
if $E = \{ (i, i) \mid i \in V \}$,
the problem corresponds to a $K$-armed bandit problem.
If $G$ is a complete directed graph with self-loops, i.e., $E = V \times V$,
then the problem corresponds to a problem of prediction with expert advice.

\citet{alon2015online} have provided a characterization of minimax regrets
for the problem of online learning with feedback graphs.
They divide the class of all directed graphs into three categories.
For the first category,
called \textit{strongly observable graphs},
the minimax regret is $\tilde{\Theta}( \alpha^{1/2} T^{1/2} )$,
where $\alpha$ is the independence number of the graph $G$,
and $\tilde{\Theta}$ ignores poly-logarithmic factors in $T$ and $K$.
For the second category,
\textit{weakly observable graphs},
the minimax regret is $\tilde{\Theta} ( \delta^{1/3} T^{2/3} )$,
where $\delta$ represents the \textit{weakly dominating number}.
For the last category of \textit{unobservable graphs},
it is not possible to achieve sublinear regret,
which means that the minimax regret is $\Theta(T)$.
The definitions of categories of graphs and $\alpha$ and $\delta$ are given
in Section~\ref{sec:setting}.

Best-of-both-worlds (BOBW) algorithms \citep{bubeck2012best} have been studied
for the purpose of going beyond such minimax regret bounds;
they achieve sublinear regret for adversarial environments and,
as well,
have logarithmic regret bounds for stochastic environments.
The only BOBW algorithm for online learning with feedback graphs
has been proposed by \citet{erez2021towards}.
They have focused on the case in which $G$ is symmetric and all vertices have self-loops,
i.e.,
any edge $(i, j) \in E$ is accompanied by its reversed edge $(j, i) \in E$
and $(i, i) \in E$ for any $i \in V$.
Note that this is a special case of strongly observable graphs.
For this class of problems,
they provide an algorithm 
that achieves a regret bound of $\tilde{O}( \theta^{1/2} T^{1/2} )$ for adversarial environments,
and of ${O} \left( 
  \frac{\theta \mathrm{polylog}(T)}{\Delta_{\min}} 
  \right)$ for stochastic environments,
where $\theta~(\geq \alpha)$ is the \textit{clique covering number} of the graph $G$,
and $\Delta_{\min}$ is the minimum \textit{suboptimality gap} for the loss distributions. 
Their algorithm also works well for adversarially-corrupted stochastic environments,
achieving $O \left(
  \frac{\theta \mathrm{polylog}(T)}{\Delta_{\min}} 
  +
  \left(
  \frac{C \theta \mathrm{polylog}(T)}{\Delta_{\min}} 
  \right)^{1/2}
\right)$-regret,
where $C$ represents the total amount of corruption.

As \citet{erez2021towards} have pointed out, 
however,
their results leave room for improvement,
which is due to the fact that the clique covering number $\theta$ is significantly larger than
the independence number $\alpha$ in some cases.
Indeed,
there is an example such that $\alpha = 1$ while $\theta = K$,
as mentioned in Section~\ref{sec:setting}.
This means that regret bound depending on $\theta$ is not minimax optimal.
In response to this issue,
they have raised the question of whether it is possible to replace $\alpha$ with $\theta$ in their regret bounds.
Contributions of this study include a positive solution to this question.

\subsection{Contributions of this study}
\label{sec:contribution}
This study provides BOBW algorithms that achieve minimax regret (up to logarithmic factors)
for online learning with general feedback graphs.
Our contributions can be summarized as follows:
\begin{theorem}[strongly observable case, informal]
  \label{thm:strong-informal}
  For the problem with strongly observable graphs,
  an algorithm achieves $R_T = \tilde{O}( \alpha^{1/2} T^{1/2} )$
  for adversarial environments,
  $R_T = O \left(  \frac{\alpha (\ln T)^3 }{\Delta_{\min}}  \right)$ for stochastic environments,
  and 
  $R_T = O \left(  \frac{\alpha (\ln T)^3 }{\Delta_{\min}} + \left( 
    \frac{C \alpha (\ln T)^3 }{\Delta_{\min}} 
  \right)^{1/2} \right)$ for adversarially-corrupted stochastic environments,
  where $\alpha$ is the independence number of feedback graphs.
\end{theorem}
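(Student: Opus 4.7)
The plan is to design a follow-the-regularized-leader (FTRL) algorithm with a data-dependent learning rate, following the template that has proven successful for best-of-both-worlds results in bandit problems. For the strongly observable case, I would employ the $1/2$-Tsallis entropy regularizer $\psi_t(p) = -\frac{1}{\eta_t}\sum_i \sqrt{p(i)}$, which is known to yield the minimax-optimal $\tilde{O}(\sqrt{\alpha T})$ bound in the purely adversarial setting. The loss estimators would be the standard importance-weighted ones, $\hat{\ell}_t(i) = \ell_t(i)\, \mathbf{1}[(I_t,i) \in E] / q_t(i)$, with $q_t(i) = \sum_{j \in \Nin(i)} p_t(j)$ denoting the probability that arm $i$'s loss is observed under the sampling distribution $p_t$. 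To ensure $q_t(i)$ is bounded away from zero for non-self-loop vertices (which are permitted in general strongly observable graphs), I would mix $p_t$ with a small forced-exploration distribution supported on a weakly dominating set, as in \citet{alon2015online}.

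The key analytical ingredient is a per-round stability bound of the form $\E\bigl[\sum_i p_t(i)^{3/2} \hat{\ell}_t(i)^2\bigr] = \tilde{O}(\alpha)$, replacing the $\tilde{O}(\theta)$ bound obtained by \citet{erez2021towards} via clique covering. This should follow from the graph-theoretic lemma of \citet{alon2015online} giving $\sum_i p(i)/q(i) = O(\alpha \log(KT))$ on any strongly observable graph. Combined with the standard FTRL regret decomposition into a penalty term of order $\psi_{T+1}(p^\ast)/\eta_T$ and a stability term summed over $t$, this delivers the adversarial bound $\tilde{O}(\sqrt{\alpha T})$ as soon as $\eta_t$ decays on the order of $\sqrt{\alpha/t}$.

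For the stochastic and corrupted-stochastic guarantees, I would invoke the self-bounding technique. In a stochastic instance with minimum gap $\Delta_{\min}$, the pseudo-regret equals $\sum_t \sum_{i \ne i^\ast} p_t(i) \Delta_i \ge \Delta_{\min} \sum_t \sum_{i \ne i^\ast} p_t(i)$, while the FTRL analysis should yield an upper bound of the form $\tilde{O}\bigl(\sqrt{\alpha \sum_t \sum_{i \ne i^\ast} p_t(i)}\bigr)$ once the regularizer's penalty is controlled instance-dependently. Combining the two via AM-GM then gives $R_T = O(\alpha (\ln T)^3 / \Delta_{\min})$. A corruption budget $C$ contributes at most an additive $C$ to the adversarial-style upper bound before the self-bounding inversion, and carrying this extra $C$ through the inversion produces the additional $\sqrt{C \alpha (\ln T)^3 / \Delta_{\min}}$ term. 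Critical to this step is the new learning-rate update rule, which must be designed so that the FTRL upper bound depends on the algorithm's own cumulative suboptimal action probability $\sum_t \sum_{i \ne i^\ast} p_t(i)$, and not merely on $T$.

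The main obstacle I anticipate is precisely the construction of this learning-rate schedule. Unlike the multi-armed bandit case, where the per-round stability contribution is uniformly $O(1)$ and a schedule $\eta_t \propto 1/\sqrt{t}$ suffices, here the per-round stability is a random, instance-dependent quantity involving $\sum_i p_t(i)/q_t(i)$. The schedule therefore must be data-dependent, yet still produce a clean telescoping bound after summation and interact correctly with the self-bounding step to preserve $\alpha$ rather than $\theta$ while keeping the logarithmic overhead to $(\ln T)^3$. Designing such a schedule and simultaneously verifying these two properties is where I expect the bulk of the technical work to lie.
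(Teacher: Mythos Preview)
Your plan departs from the paper in one structural choice that turns out to be decisive: the paper uses \emph{Shannon} entropy regularization, $\psi_t(p) = -\beta_t H(p)$, not $1/2$-Tsallis entropy. This is precisely what fills the gap you flag at the end of your proposal.

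The difficulty with your Tsallis route is the one you sense but do not resolve. Your per-round stability bound $\E\bigl[\sum_i p_t(i)^{3/2}\hat\ell_t(i)^2\bigr] \le \sum_i p_t(i)/P_t(i) = \tilde O(\alpha)$ is correct, but the step from $p_t(i)^{3/2}/P_t(i)$ to $p_t(i)/P_t(i)$ discards exactly the factor $\sqrt{p_t(i)}$ that in the bandit case carries the distribution-dependence needed for self-bounding. After invoking the Alon--Cesa-Bianchi--Dekel--Koren lemma you are left with a flat $\tilde O(\alpha)$ per round that retains no information about how concentrated $p_t$ is on $i^*$; there is no evident graph-theoretic inequality of the form $\sum_i p_t(i)^{3/2}/P_t(i) \le \tilde O(\alpha)\cdot g\bigl(1-p_t(i^*)\bigr)$, and the Tsallis penalty $\sum_i \sqrt{p_t(i)}$ does not compensate either. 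So the self-bounding step, as you outline it, does not go through with $\alpha$ in place of $\theta$.

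The paper's solution is to let the \emph{penalty} side supply the distribution-dependence. With Shannon entropy the penalty term is $\sum_t (\beta_{t+1}-\beta_t)\,H(q_{t+1})$, already expressed through $a_t := H(q_t)$. The learning rate is then made entropy-dependent, $\beta_{t+1} = \beta_t + c_1/\sqrt{1 + (\ln K)^{-1}\sum_{s\le t} a_s}$, which forces both the penalty sum and the stability sum $\sum_t \tilde O(\alpha)/\beta_t$ to telescope to $\tilde O\bigl(\sqrt{\alpha\sum_t H(q_t)}\bigr)$ (Proposition~\ref{prop:strong}). The remaining link to self-bounding is the elementary inequality $\sum_t H(q_t) \le Q(i^*)\ln\frac{\mathrm{e}KT}{Q(i^*)}$ with $Q(i^*)=\sum_t(1-q_t(i^*))$ (Lemma~\ref{lem:boundat}), which yields $R_T \le \hat c\,\sqrt{\bar Q}$ with $\hat c = O\bigl(\sqrt{\alpha(\ln T)^3}\bigr)$; from there the self-bounding calculation proceeds exactly as you describe. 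The paper emphasizes that this is the first BOBW analysis combining Shannon-entropy FTRL with the self-bounding technique, so the regularizer choice is the new idea, not merely a stylistic preference.
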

\begin{theorem}[weakly observable case, informal]
  \label{thm:weak-informal}
  For the problem with weakly observable graphs,
  an algorithm achieves $R_T = \tilde{O}( \delta^{1/3} T^{2/3} )$
  for adversarial environments,
  $R_T = O \left(  \frac{\delta (\ln T)^2 }{\Delta_{\min}^2} + \frac{ K' \ln T }{\Delta_{\min}} \right)$ for stochastic environments,
  and 
  $R_T = O \left(  \frac{\delta (\ln T)^2 }{\Delta_{\min}^2} + \left( 
    \frac{C^2 \delta (\ln T)^2 }{\Delta_{\min}^2} 
    \right)^{1/3}
    +
    \frac{K' \ln T}{\Delta_{\min}}
    +
    \left(
    \frac{C K' \ln T}{\Delta_{\min}}
    \right)^{1/2}
  \right)$ for adversarially-corrupted stochastic environments,
  where $\delta$ is the weakly dominating number of feedback graphs,
  and $K' (\leq K) $ is the number of vertices not covered by the weakly dominating set.
\end{theorem}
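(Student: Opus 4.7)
The plan is to construct an FTRL algorithm with an entropy-type regularizer and forced exploration supported on a weakly dominating set $D \subseteq V$ of size $\delta$, and to tune the learning rate $\eta_t$ and the exploration rate $\gamma_t$ adaptively from observable quantities. At round $t$, let
\begin{equation*}
p_t \in \argmin_{p \in \Delta(V)} \Big\{ \langle p, \hat L_{t-1}\rangle - \tfrac{1}{\eta_t} H(p) \Big\},
\end{equation*}
with $H$ the Shannon entropy, sample $I_t \sim q_t := (1-\gamma_t)\, p_t + \gamma_t \, u_D$ where $u_D$ is uniform on $D$, and form the importance-weighted estimator $\hat{\ell}_t(i) = \ell_t(i)\, \mathbb{1}[i \in \Nout(I_t)] \,/\, q_t(\Nin(i))$.

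First I would decompose the pseudo-regret against $i^*$ into three pieces: the FTRL regret of $p_t$, the exploration overhead $\gamma_t \cdot K'$ per round, and a zero-mean bias term. The standard FTRL stability lemma bounds the first by $(\log K)/\eta_T + \sum_t \eta_t \E[V_t]$, where $V_t := \sum_i p_t(i)/q_t(\Nin(i))$. Combining the weakly dominating property of $D$ with the fact that every vertex is observable yields $q_t(\Nin(i)) \gtrsim \gamma_t / \delta$ for all $i$, hence $V_t \lesssim \delta/\gamma_t$.

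Next I would design recursions of the form $\eta_{t+1}^{-1} = \eta_t^{-1} + c\cdot(\text{round-}t\text{ increment})$ and a similar one for $\gamma_t^{-1}$, in the spirit of the update rules used for the strongly observable case, with the goal of producing a data-dependent bound of shape
\begin{equation*}
R_T \;\lesssim\; \sqrt{\, \delta (\log K)\, S_T / \gamma_\star \,} \;+\; K'\, \gamma_\star\, T ,
\end{equation*}
where $S_T := \sum_t (1 - p_t(i^*))$ measures the cumulative non-concentration of the iterates on $i^*$, and $\gamma_\star$ is the effective terminal exploration rate. Since $S_T \leq T$ in general, optimizing in $\gamma_\star$ recovers $R_T = \tilde O(\delta^{1/3} T^{2/3})$ and proves the adversarial part. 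For the (corrupted) stochastic part, I would then invoke the self-bounding technique: in the stochastic-with-corruption regime one has $R_T \geq \Delta_{\min}\, S_T - 2C$ (with $C=0$ in the pure stochastic case); substituting this lower bound for $S_T$ into the data-dependent inequality and solving for $R_T$ yields the $\delta(\log T)^2/\Delta_{\min}^2$ main term, while the exploration contribution produces the additive $K' \log T / \Delta_{\min}$ term. The corrupted-case correction terms $(C^2 \delta (\log T)^2 / \Delta_{\min}^2)^{1/3}$ and $(C K' \log T / \Delta_{\min})^{1/2}$ follow by separately applying AM--GM to the two self-bounded summands after subtracting the corruption bonus.

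The hard part will be the simultaneous adaptive tuning of $\eta_t$ and $\gamma_t$: the exploration rate enters the stability term as $1/\gamma_t$ but also contributes additively as $\gamma_t K'$ per round, and the additive exploration term is not directly self-bounded by $S_T$. Making the argument go through appears to require driving $\gamma_t$ by an empirical surrogate for the concentration of $p_t$ on $i^*$ (rather than by $t$ alone), so that in stochastic environments $\gamma_t$ collapses rapidly enough to turn $\gamma_\star K' T$ into a merely logarithmic contribution, while still realizing the $\delta^{1/3} T^{2/3}$ frontier against an adversary; coupling this empirical feedback cleanly with the $\eta_t$-recursion, without introducing extra log factors beyond those stated, is the technical crux.
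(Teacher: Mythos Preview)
There are two concrete errors in the proposal that make the sketch unworkable. First, the claim that $q_t(\Nin(i)) \gtrsim \gamma_t/\delta$ for \emph{all} $i$ is false: by definition $V_2 := V \setminus \bigcup_{j\in D}\Nout(j)$, so for $i\in V_2$ the forced exploration on $D$ contributes nothing to $q_t(\Nin(i))$; the only guaranteed lower bound is $q_t(\Nin(i)) \geq (1-\gamma_t)p_t(i)$ via the self-loop, which can be arbitrarily small. Second, the exploration overhead per round is $\gamma_t\langle \ell_t, u_D - p_t\rangle \leq \gamma_t$, not $\gamma_t K'$; your exploration lives on $D$ (size $\delta$), so $K'$ cannot arise from this term. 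As a consequence, your proposed data-dependent shape $\sqrt{\delta(\log K)S_T/\gamma_\star} + K'\gamma_\star T$ is not what the algorithm actually yields, and optimizing it does not recover the claimed rates.

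The paper's route is structurally different. It splits $V = V_1 \cup V_2$ and uses a \emph{hybrid} regularizer: a Shannon-type term $\beta_t\sum_{i\in V_1}h(p(i))$ on the dominated part, and a $1/2$-Tsallis-type term $\sqrt{t}\sum_{i\in V_2}g(p(i))$ on the self-looped part. The regret then decomposes as $R_T^{(1)}+R_T^{(2)}$. On $V_1$, where $P_t(i)\geq \gamma_t/|D|$ does hold, a carefully coupled update of $(\beta_t,\gamma_t)$ (driven by $a_t=-\sum_{V_1}h(q_t(i))$ and $b_t=\sum_{V_1}q_t(i)(1-q_t(i))$) produces a bound of shape $\hat c\,\bar Q^{2/3}$ with $\hat c = O((\delta\log T\log(KT))^{1/3})$; the $2/3$ power is what gives $\delta^{1/3}T^{2/3}$ adversarially and $\delta(\log T)^2/\Delta_{\min}^2$ after self-bounding. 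On $V_2$, the Tsallis part is analyzed exactly like a $K'$-armed bandit, giving $R_T^{(2)}=O(\sqrt{K'\log T\cdot \bar Q})$, which after self-bounding yields the $K'\log T/\Delta_{\min}$ term. So the $K'$-dependence does not come from exploration at all; it comes from the Tsallis regularization on $V_2$. Your single-Shannon-regularizer, single-learning-rate ansatz cannot separate these two regimes, and in particular will not produce the $\bar Q^{2/3}$ dependence needed for the $1/\Delta_{\min}^2$ stochastic rate.
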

\revise{
\begin{remark}
The regret bound in Theorem~\ref{thm:weak-informal} for stochastic environments
include an $O\left(\frac{K' \ln T}{\Delta_{\min}}\right)$-term,
which is negligibly small compared to the other term $\frac{\delta(\ln T)^2}{\Delta^2_{\min}}$
when $T$ is sufficiently large.
However,
if $K'$ is larger than $\frac{\delta \ln T}{\Delta_{\min}}$,
this term can be dominant.
In such a case,
the regret upper bound may be improved by modifying the algorithm.
Roughly speaking,
by combining the approach to strongly observable case,
the $O\left(\frac{K' \ln T}{\Delta_{\min}}\right)$-term can be replaced with
an $O\left( \frac{\alpha' ( \ln T )^{3}}{\Delta_{\min}} \right)$-term,
where $\alpha'$ is the independence number of the subgraph consisting of vertices not dominated by the weakly dominating set.
If $\alpha' (\ln T)^2 \leq K'$,
the modified version provides a better regret bound.
Details of the modification are given Appendix~\ref{sec:weak-strong}.
\end{remark}
}

\begin{table}
  \caption{Regret upper bounds for online learning with feedback graphs.
  Note that regret bounds by \citet{erez2021towards} \revise{and \citet{rouyer2022near}} only apply to
  a special case of strongly observable graphs with self-loops.
  \revise{
  We also note that the graph consisting only of self-loops,
  which corresponds to the standard multi-armed bandit problem,
  is a special case of strongly observable graphs.
  }
  }
  \label{table:regret}
  \centering
  \begin{tabular}{llll}
    \toprule
    % observability
    feedback graph
    & reference & adversarial & stochastic \\
    \midrule
    strongly
    &
    \citep{alon2015online} & $\tilde{O}\left( \alpha^{1/2} T^{1/2} \right)$  & $ \tilde{O} \left( \alpha^{1/2} T^{1/2} \right)$     \\
    observable
    &
    \citep{erez2021towards}  & $\tilde{O}\left( \theta^{1/2} T^{1/2} \right) $& 
    $ O \left( \sum_{k} \frac{(\ln T)^4}{\Delta_{k}} \right)  $
    % ($|\Theta| = $)
    % $ \leq O \left( \frac{ \theta (\ln T)^4 }{\Delta_{\min}} \right)   $
    \\
    &
    \revise{\citep{rouyer2022near} }   & \revise{ $\tilde{O}\left( \alpha^{1/2} T^{1/2} \right)$ }    & 
    \revise{ $ O \left( \sum_{i \in S}\frac{(\ln T)^2}{\Delta_i} \right)   $      }
     \\
     &
    \textbf{[This work]} Theorem~\ref{thm:strong-informal}     &  $\tilde{O}\left( \alpha^{1/2} T^{1/2} \right)$    & 
    $ O \left( \frac{\alpha (\ln T)^{3}}{\Delta_{\min}} \right)   $      
     \\
    \midrule
    \revise{self-loops only}
    &
    \revise{\citep{zimmert2021tsallis}} & \revise{ ${O}\left( K^{1/2} T^{1/2} \right)$ }  & \revise{ $ {O} \left( \sum_{i: \Delta_i > 0} \frac{\ln T}{\Delta_i} \right)$}     \\
    \revise{
    (standard MAB)}
    &
    \revise{\textbf{[This work]} Theorem~\ref{thm:strong-informal} } & \revise{ $\tilde{O}\left( K^{1/2} T^{1/2} \right) $} & 
    \revise{ $ O \left( \frac{K(\ln T)^3}{\Delta_{\min}} \right) $ }
    \\
    \midrule
    \midrule
    weakly 
    & \citep{alon2015online} & $\tilde{O}\left( \delta^{1/3} T^{2/3} \right)$  & $\tilde{O}\left( \delta^{1/3} T^{2/3} \right)$  \\
    observable
    &
    \revise{\citep{kong2022simultaneously} }     & \revise{$\tilde{O}\left( K^{2/3} \delta^{1/3} T^{2/3} \right)$ }    & 
    \revise{ $ O \left( \delta^2 \frac{( \ln T )^{3/2}}{\Delta_{\min}^3} \right) $ }
    \\
    &
    \textbf{[This work]} Theorem~\ref{thm:weak-informal}     &  $\tilde{O}\left( \delta^{1/3} T^{2/3} \right)$    & 
    $ O \left( \frac{\delta (\ln T)^{2}}{\Delta_{\min}^2} + \frac{K' \ln T}{\Delta_{\min}} \right) $      
    \\
    \bottomrule
  \end{tabular}
\end{table}

Regret bounds for online learning with feedback graphs are summarized in Table~\ref{table:regret}.
Note that the regret bounds by \citet{erez2021towards} apply only to
the special case of strongly observable graphs that have self-loops for all vertices.
Their algorithm and regret bounds are stated with \textit{clique cover} $\{ V_k \}_{k=1}^{L}$ of $G$,
which is a partition of all vertices $V$ such that each $V_k$ is a clique.
The clique covering number $\theta$ of $G$ is the minimum size $L$ of clique covers.
Parameters $\Delta_k$ in Table~\ref{table:regret} are defined to be the minimum suboptimality gap among actions in $V_k$,
and the summation is taken over $k \in [L]$ such that $\Delta_k > 0$.
As the clique covering number $\theta$ is larger than or equal to the independence number $\alpha$ of $G$,
our adversarial regret bound of $\tilde{O}\left( \alpha^{1/2} T^{1/2} \right)$ for strongly observable cases is superior to that obtained by \citet{erez2021towards}
and is minimax optimal up to logarithmic factors.
Although our stochastic regret bound is also better than one by \citet{erez2021towards} in many cases,
it is not always so.
For example,
if $\alpha = \theta$ and $\Delta_{\min}$ is much smaller than many $\Delta_k$,
their regret may be better.
\revise{
  Note that the work by \citet{rouyer2022near},
  which proposes BOBW algorithms for strongly observable graphs with self-loops,
  has been published at NeurIPS 2022,
  independently of this study.
  While their algorithms achieve better regret bounds for a certain class of problem settings,
  our results have the advantage of being applicable to a wider range of problem settings,
  including directed feedback graphs without self-loops and adversarially corrupted stochastic settings.
  A more detailed discussion can be found in Appendix~\ref{sec:rouyer}
}

Our study includes the first \revise{nearly optimal} BOBW algorithm that can be applied to online learning with weakly observable graphs.
As shown in Table~\ref{table:regret},
the adversarial regret bounds obtained with the proposed algorithm match the minimax regret bound shown by \citet{alon2015online},
up to logarithmic factors.
Similarly to their algorithm,
the proposed algorithm uses a \textit{weakly dominating set} $D \subseteq V$ of $G$.
If $D$ is a weakly dominating set,
all elements 
in the set of vertices not dominated by $D$,
which is denoted by $V_2 \subseteq V$,
have self-loops.
Parameters $\delta$ and $K$ in the regret bounds are given by $\delta = |D|$ and $K' = |V_2|$.
The stochastic regret bound obtained with the proposed algorithm is also nearly tight.
In fact,
\citet{alon2015online} have shown
a regret lower bound of $\tilde{\Omega} \left(  \frac{\delta}{\Delta_{\min}^2}  \right) $
in the proof of Theorem 7 in their paper.
Further,
if vertices in $V_2$ are not connected by edges except for self-loops,
the problem is at least harder than the $K'$-armed bandit problem,
which leads to a regret lower bound of $\Omega\left( \frac{ K' \ln T }{\Delta_{\min}} \right)$.
\revise{
We note that,
just before the submission of this paper,
\citet{kong2022simultaneously} published a work on BOBW algorithms applicable to weakly observable graphs,
of which regret bounds are also included in Table~\ref{table:regret}.
}

\subsection{Techniques employed in this study}
The proposed algorithms are based on the follow-the-regularized-leader (FTRL) framework,
similarly to the algorithms by \citet{alon2015online} and \citet{erez2021towards}.
The main differences with existing methods are in the definitions of regularization functions
and update rules for learning rates.

For strongly observable cases,
we employ the Shannon entropy regularizer functions with a newly developed update rule for learning rates.
Most FTRL-based BOBW algorithms are realized by setting the learning rate adaptively to $t$ and/or observations.
On the other hand,
it is well known that FTRL with Shannon-entropy regularization corresponds to Exp3 algorithm \citep{auer2002nonstochastic} as discussed in, e.g., \citet[Example 28.3]{lattimore2018bandit}.
Since Exp3.G by \citet{alon2015online} achieves an independence-number-dependent regret bound for adversarial environments,
it is intuitively natural to expect that a variant of Exp3.G with adaptive learning rates can be used to achieve BOBW regret bounds.
However,
from the theoretical viewpoint,
it is necessary to express the regret depending on the arm-selection distribution $q_t$ to apply the \textit{self-bounding technique} \citep{gaillard2014second,zimmert2021tsallis},
which plays the central role in the BOBW analysis.

The proposed algorithm for weakly observable graphs
uses novel regularization functions consisting of Tsallis-entropy-based and Shannon-entropy-based regularization.
Intuitively,
we divide the vertices $V$ into the weakly dominated part $V_1$ and non-dominated part $V_2$,
and apply Shannon-entropy regularization to $V_1$
and Tsallis-entropy regularization to $V_2$.
We combine the FTRL method
with exploration using a uniform distribution over the weakly dominating set,
similarly to the approach by \citet{alon2015online}.
However,
we adjust exploration rates and learning rates in a carefully designed manner,
in contrast to the existing approach that employs fixed parameters.
The combination of the above techniques leads
to an entropy-dependent regret bound.
%  of $\tilde{O} \left( \delta^{1/3} \left(  \sum_{t=1}^T H(q_t) \right)^{2/3} \right)$.
By applying the self-bounding technique to this bound,
we obtain improved regret bounds for stochastic environments.

\section{Related work}
\label{sec:related}
Since \citet{bubeck2012best} initiated the study of best-of-both-worlds (BOBW) algorithms for
the multi-armed bandit (MAB) problem,
studies on BOBW algorithms have been extended to a variety of problem settings,
including the problem of prediction with expert advice \citep{gaillard2014second,luo2015achieving},
combinatorial semi-bandits \revise{\citep{zimmert2019beating,ito2021hybrid}},
linear bandits \citep{lee2021achieving},
episodic Markov decision processes \citep{jin2020simultaneously,jin2021best},
bandits with switching costs \citep{rouyer2021algorithm,amir2022better},
bandits with delayed feedback \citep{masoudian2022best},
\revise{online submodular optimization \citep{ito2022revisiting}},
and online learning with feedback graphs \citep{erez2021towards,kong2022simultaneously,rouyer2022near}.
Among these studies,
those using the follow-the-regularized-leader framework \citep{mcmahan2011follow} are particularly relevant to our work.
In an analysis of algorithms in this category,
we show regret bounds that depend on output distributions,
and we apply the self-bounding technique to derive BOBW regret bounds.
In applying this approach to partial feedback problems including MAB,
it has been shown that regularization based on the Tsallis entropy \citep{zimmert2021tsallis,zimmert2019beating} or 
the logarithmic barrier \revise{\citep{wei2018more,ito2021parameter,ito2022adversarially}} is useful.
By way of contrast,
our study employs regularization based on the Shannon entropy
and demonstrates for the first time that the self-bounding technique can be applied even with such regularization.

This study includes the regret bounds for stochastic environments with adversarial corruptions \citep{lykouris2018stochastic,gupta2019better,amir2020prediction},
which is an intermediate setting between stochastic and adversarial settings.
\citet{zimmert2021tsallis} have demonstrated that the self-bounding technique is also useful in deriving regret bounds for corrupted stochastic environments.
Typically,
when the self-bounding technique yields a regret bound of $O( \mathcal{R} )$ for stochastic environments,
it also yields a bound of $O( \mathcal{R} + \sqrt{ C \mathcal{R} } )$ for corrupted stochastic environments,
where $C$ represents the amount of corruption.
Examples of such results can be found in the literature,
e.g.,
that by \citet{zimmert2021tsallis}, \citet{erez2021towards}, and \citet{ito2021optimal}.
Our study follows the same strategy as these studies to obtain regret bounds for corrupted stochastic environments.

The problem of online learning with feedback graphs was formulated by \citet{mannor2011bandits},
and \citet{alon2015online} have provided a full characterization of minimax regret w.r.t.~this problem. 
Whereas these studies have considered adversarial models,
\citet{caron2012leveraging} have considered stochastic settings and proposed an algorithm with an $O(\ln T)$-regret bound.
In addition to these,
there can be found studies on such various extensions as models with
(uninformed) time-varying feedback graphs \citep{cohen2016online,alon2017nonstochastic},
stochastic feedback graphs \citep{kocak2016online,ghari2022online,esposito2022learning},
non-stationary environments \citep{lu2021stochastic-nonstationary},
and corrupted environments \citep{lu2021stochastic-corruptions}
as well as such improved algorithms as those with problem-dependent regret bounds \citep{hu2020problem}.

\section{Problem setting and known results}
\label{sec:setting}
Let $G = (V, E)$ be a directed graph with $V = [K] = \{ 1, 2, \ldots, K \}$ and $E \subseteq V \times V$,
which we refer to as a \textit{feedback graph}.
For each $i \in V$,
we denote the in-neighborhood and the out-neighborhood of $i$ in $G$ by
$\Nin (i)$ and $\Nout(i)$,
respectively,
i.e.,
$\Nin(i) = \{ j \in V \mid (j, i) \in E \}$ and
$\Nout(i) = \{ j \in V \mid (i, j) \in E \}$.

Before a game starts,
the player is given $G$.
For each round $t=1, 2, \ldots $,
the environment selects the loss functions $\ell_t : V \to [0, 1] $,
% of which values for $i \in V$ means the loss for choosing $i$ in the $t$-th round,
and the player then chooses $I_t \in V$ without knowing $\ell_t$,
where the value $\ell_t(i)$ represents the loss for choosing $i \in V$ in the $t$-th round.
After that,
the player incurs the loss of $\ell_t(I_t)$ and observes $\ell_t(j)$ for all $j \in \Nout (I_t)$.
Note that the player cannot observe the incurred loss if $I_t \notin \Nout(I_t)$.
The goal of the player is to minimize the sum of incurred loss.
To evaluate performance,
we use the regret $R_T$ defined by
\begin{align}
  R_T(i^*) = \E \left[
    \sum_{t=1}^T \ell_t(I_t)
    -
    \sum_{t=1}^T \ell_t(i^*)
  \right],
  \quad
  R_T
  =
  \max_{i^* \in V} R_T(i^*) ,
\end{align}
where the expectation is taken with respect to the randomness of $\ell_t$ and the algorithm's internal randomness.
The \textit{minimax regret} $R(G, T)$ is defined as the minimum over all randomized algorithms,
of the maximum of $R_T$ over all loss sequences $\{ \ell_t \}$.
\citet{alon2015online} have shown that the minimax regret can be characterized by
the notion of observability:
\begin{definition}[\citep{alon2015online}]
  \label{def:observability}
  A graph $G$ is \textit{observable} if $\Nin(i) \neq \emptyset$ holds for each $i \in V$.
  A graph $G$ is \textit{strongly observable} if $\{ i \} \subseteq \Nin (i) $
  or $V \setminus \{ i \} \subseteq \Nin (i)$ holds for each $i \in V$.
  A graph $G$ is \textit{weakly observable} if it is observable but not strongly observable.
\end{definition}
We further define the independence number $\alpha(G)$ and the weak domination number $\delta(G)$ as follows:
\begin{definition}
  \label{def:alphadelta}
  For a graph $G= (V, E)$,
  an \textit{independent set} $S \subseteq V$ is a set of vertices 
  such that $u, v \in S, u \neq v \Longrightarrow (u, v) \notin E$.
  The \textit{independence number} $\alpha(G)$ of $G$ is the size of its largest independent set.
  For a graph $G= (V, E)$,
  a \textit{weakly dominating set} $D \subseteq V$ is a set of vertices 
  such that $\{ i \in V \mid i \notin \Nout (i) \} \subseteq \bigcup_{i \in D} \Nout (i )  $.
  The \textit{weak domination number} $\delta(G)$ of $G$ is the size of its smallest weakly dominating set.
\end{definition}
\begin{remark}
  \revise{
  The definitions of weakly dominating set and weak domination number in this paper are slightly different from those by \citet{alon2015online}.
  % but the difference is negligible.
  % In fact,
  % for any weakly dominating set $S$ in the sense of our definition,
  % we can make it satisfy their definition of weak domination
  % by adding at most one vertex to $S$.
  % Furthermore,
  % any weakly dominating set in their definition is also a weakly dominating set in our definition.
  % These facts means that
  However,
  this difference is negligible as
  the gap between weak domination numbers in our definition and in theirs is at most one.
  Details are discussed in Appendix~\ref{sec:weak-domination-definition}.
  }
\end{remark}
The minimax regret can then be characterized as follows:
\begin{theorem}[\citep{alon2015online}]
  Let $G$ be a feedback graph with $|V| \geq 2$.
  Then,
  the minimax regret for $T \geq |V|^3$ is
  % \begin{description}
    % \item[(i)] 
    \textbf{(i)} $R(G, T) = \tilde{\Theta} ( \alpha^{1/2} T^{1/2} )$ if $G$ is strongly observable;
    % \item[(ii)] 
    \textbf{(ii)} $R(G, T) = \tilde{\Theta} ( \delta^{1/3} T^{2/3} )$ if $G$ is weakly observable;
    % \item[(iii)]
    \textbf{(iii)} $R(G, T) = {\Theta} ( T )$ if $G$ is not observable.
  % \end{description}
\end{theorem}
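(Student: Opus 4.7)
The plan is to establish matching upper and lower bounds in each of the three observability regimes. The upper bounds can be realized by a follow-the-regularized-leader (FTRL) algorithm with the negentropy regularizer and importance-weighted loss estimators $\hat{\ell}_t(i) = \ell_t(i) \mathbf{1}[I_t \in \Nin(i)] / P_t(i)$, where $P_t(i) = \sum_{j \in \Nin(i)} q_t(j)$ and $q_t$ is the sampling distribution. The standard FTRL analysis would bound $R_T$ by an $O((\log K)/\eta)$ regularization term plus a variance term of the form $O\bigl(\eta \sum_t \sum_i q_t(i)/P_t(i)\bigr)$.

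For strongly observable $G$, the key graph-theoretic lemma is that $\sum_{i \in V} q(i)/P(i) = O(\alpha \log K)$ for any distribution $q$; I would prove it by a greedy peeling argument that repeatedly removes the vertex of smallest $P(i)$ and exploits the self-loop/complement dichotomy of strong observability. Tuning $\eta = \Theta(\sqrt{(\alpha \log K)/T})$ then yields $\tilde{O}(\alpha^{1/2} T^{1/2})$. For weakly observable $G$, I would mix $q_t$ with weight $\gamma$ on the uniform distribution over a smallest weakly dominating set $D$ of size $\delta$, which guarantees $P_t(i) \geq \gamma/\delta$ for every $i$; the exploration overhead is at most $\gamma T$ while the variance term becomes $O(\eta \delta K T/\gamma)$. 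Jointly tuning $\gamma$ and $\eta$ (balancing the three terms) gives $\tilde{O}(\delta^{1/3} T^{2/3})$, and the unobservable upper bound $O(T)$ is immediate from bounded losses.

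For the lower bounds, case (iii) is easiest: if some $i$ has $\Nin(i) = \emptyset$, two loss sequences that agree everywhere except at $i$ are statistically indistinguishable to any algorithm, so a standard two-hypothesis $\mathrm{KL}$ argument yields $\Omega(T)$. For (i), I would take a maximum independent set $S \subseteq V$ of size $\alpha$ and restrict the adversarial losses to be constant on $V \setminus S$; within $S$ a vertex receives informative feedback only when it is played itself, so the problem reduces to the classical $\alpha$-armed stochastic bandit and Le Cam's method delivers $\Omega(\sqrt{\alpha T})$. The $T \geq |V|^3$ hypothesis is used to absorb lower-order terms from the reduction.

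The hard part will be the $\tilde{\Omega}(\delta^{1/3} T^{2/3})$ lower bound in (ii). I would construct an instance with $\delta$ near-indistinguishable candidate ``good'' arms placed in the non-dominated part $V \setminus \bigcup_{i \in D} \Nout(i)$, whose losses differ from a baseline by a carefully chosen gap $\Delta$, such that informative observations about the identity of the good arm can only arise from pulling vertices in a minimum weakly dominating set. A chain-rule KL computation should then show that any algorithm either invests $\Omega(T^{2/3}\delta^{-1/3})$ rounds on the dominating set (paying exploration cost $\Omega(\delta^{1/3} T^{2/3})$ because those dominators have constant loss) or fails to identify the good arm with constant probability and suffers $\Delta T = \Omega(\delta^{1/3} T^{2/3})$ regret. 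The delicate step is calibrating $\Delta \asymp (\delta/T)^{1/3}$ and constructing the hidden-arm distribution so that both horns of the trade-off match; this is where the weak-observability structure (self-loops only outside the dominated portion) enters in a non-trivial way, and is the principal obstacle compared to the relatively routine FTRL upper bounds.
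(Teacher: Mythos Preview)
This theorem is not proved in the paper; it is quoted from \citet{alon2015online} as background. So there is no in-paper proof to compare against, but your sketch of the Alon--Cesa-Bianchi--Dekel--Koren argument has a substantive error in the weakly observable lower bound.

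You place the candidate good arms in $V \setminus \bigcup_{i \in D} \Nout(i)$. By the definition of a weakly dominating set (Definition~\ref{def:alphadelta}), every vertex \emph{not} dominated by $D$ has a self-loop. Hence each of your candidate arms can be observed simply by pulling it; nothing forces the algorithm to explore through $D$, and the dilemma you describe (``either spend $\Omega(T^{2/3}\delta^{-1/3})$ rounds on $D$ or fail to identify the good arm'') never materializes. The actual construction in \citet{alon2015online} goes the other way: the hidden arm lives among the \emph{weakly observable} vertices (no self-loop, and not in-neighbored by all of $V\setminus\{i\}$). The step you are missing is a combinatorial lemma showing that when $\delta(G)=\delta$ one can extract a subset $U$ of weakly observable vertices such that every single vertex of $V$ has only a small number of out-neighbors in $U$; this is what makes each pull reveal information about only a negligible fraction of the candidates and is precisely where $\delta$ enters the lower bound. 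Without it the KL chain-rule calculation you outline cannot produce the $\delta^{1/3}$ factor.

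A smaller gap on the upper-bound side: the lemma $\sum_i q(i)/P(i)=O(\alpha\log K)$ does not hold unconditionally; the Alon et al.\ bound is $O\bigl(\alpha\log(K/\min_i q(i))\bigr)$, so even in the strongly observable case one must mix $q_t$ with a small uniform component over $V$ to control the logarithm (this is why Exp3.G mixes, and why the present paper takes $U=V$ and $\gamma_t>0$ in \eqref{eq:defFTRL} for the strongly observable algorithm). Relatedly, your claim that mixing over $D$ gives $P_t(i)\geq\gamma/\delta$ ``for every $i$'' is false for $i\in V_2$; those vertices are handled through their self-loops, contributing an additive $O(K)$ to the per-round variance rather than being covered by the $\gamma/\delta$ bound.
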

Following this statement by \citet{alon2015online},
we assume $T \geq |V|^3 = K^3$ in this paper.
%  unless otherwise stated.

Regret bounds by \citet{erez2021towards} listed in Table~\ref{table:regret} depend on 
the \textit{clique covering number} $\theta(G)$ of the feedback graph.
The clique covering number $\theta(G)$ is the minimum value of $N$ such that
there exists a \textit{clique cover} $\{ V_k \}_{k=1}^{N}$ for $G$ of size $N$.
A clique cover is a partition of vertices $V$ such that each $V_k$ is a clique,
i.e.,
$V_k \cap V_{k'} = \emptyset$ for all $k \neq k'$,
$\bigcup_{k=1}^\theta V_k = V$,
and 
$V_k \times V_k \subseteq E$ holds for any $k$.
While there exists an example such that $K = \theta(G)> \alpha(G) =1$,\footnote{
  For example,
  consider the graph $G = (V, E)$ given by $V = [K]$ and
  $E = \{ (i, j) \in V \times V \mid i \geq j \}$.
}
we always have $\theta(G)\ge \alpha(G)$, that is,
the clique covering number is at least the independence number.
In fact,
for any clique cover $\{ V_k \}_{k=1}^{N}$ and any independence set $S \subseteq V$,
two distinct elements in $S$ can never be in a single clique $V_k$,
which implies that $N \geq |S|$.

% For any feedback graph $G$,
% the clique covering number $\theta(G)$ is at least the independence number $\alpha(G)$.
% In fact,
% for any clique cover $\{ V_k \}_{k=1}^{N}$ and any independence set $S \subseteq V$,
% two distinct elements in $S$ can never be in a single clique $V_k$,
% which implies that $N \geq |S|$.
% Further,
% there exists an example such that $1 = \alpha(G) < \theta(G) = K$.
% While there exists an example such that $K=\theta(G)> \alpha(G) =1$,
% we always have $\theta(G)\ge \alpha(G)$, that is,
% the clique covering number is at least the independence number.
% In fact,
% for any clique cover $\{ V_k \}_{k=1}^{N}$ and any independence set $S \subseteq V$,
% two distinct elements in $S$ can never be in a single clique $V_k$,
% \begin{example}
%   \label{exa:clique}
%   Suppose that the feedback graph $G = (V, E)$ is given by $V = [K] = \{ 1, 2, \ldots, K \}$ and
%   $E = \{ (i, j) \in V \times V \mid i \geq j \}$.
%   Then we have $\alpha(G) = 1$ as every two vertices are connected.
%   On the other hand,
%   since $G$ has no clique of size larger than $1$,
%   the clique covering number $\theta(G)$ is equal to $|V| = K$.
% \end{example}

In this work,
we consider the \revise{\textit{adversarial regime with a self-bounding constraint}},
a comprehensive regime including stochastic settings,
adversarial settings,
and adversarially corrupted stochastic settings.
\begin{definition}[\revise{adversarial regime with a self-bounding constraint} \citep{zimmert2021tsallis}]
  \label{def:ARSB}
  Let $\Delta : V \rightarrow [0, 1]$ and $C \geq 0$.
  The environment is in an \textit{adversarial regime with a} $(\Delta, C, T)$ self-bounding constraint if
  it holds for any algorithm that
  \begin{align}
    \label{eq:defARSB}
    R_T \geq 
    \E \left[
      \sum_{t=1}^T \Delta(I_t) - C
    \right].
  \end{align}
\end{definition}
As has been shown by \citet{zimmert2021tsallis},
this regime includes (adversarially corrupted) stochastic settings.
Indeed,
if $\ell_t$ follows a distribution $\cD$ independently for $t=1,2,\ldots, T$
we have $R_T 
= \max_{i^* \in V} \E \left[ \sum_{t=1}^{T} ( \ell_t(I_t) - \ell_t(i^*) ) \right]
= \E \left[ \sum_{t=1}^{T} \Delta(I_t) \right]
$,
where we define $\Delta$ by $\Delta(i) = \E_{\ell \sim \cD} [ \ell(i) ] - \min_{i^* \in V} \E_{\ell \sim \cD} [ \ell(i^*) ] $.
This means that the environment is in an adversarial regime with a $(\Delta, 0, T)$ self-bounding constraint.
Further,
%if $\ell'_t$ follows $\cD$ for all $t$ and
%$\sum_{t=1}^T \max_{i \in [N]} | \ell_t(i) - \ell'_t(i) | \leq C$,
if $\ell_t$ satisfies $\sum_{t=1}^T \max_{i \in [N]} | \ell_t(i) - \ell'_t(i) | \leq C$ for
some $\ell'_t\sim \cD$,
the environment is in an adversarial regime with a $(\Delta, C, T)$ self-bounding constraint.
Note also that,
for any $\Delta: V \to [0, 1]$,
the adversarial regime with a $(\Delta, 2T, T)$ self-bounding constraint includes all the adversarial environments
since \eqref{eq:defARSB} clearly holds when $C = 2T$.

In this paper,
we assume that there exists $i^* \in V$ such that $\Delta(i^*) = 0$
and that $\Delta_{\min} := \min_{ i \in V \setminus \{ i^* \} } \Delta_{i} > 0$.
This implies that the optimal
arm
%solution
$i^*$ is assumed to be unique.
Similar assumptions were also made in previous works using the self-bounding technique \citep{gaillard2014second,luo2015achieving,wei2018more,zimmert2021tsallis,erez2021towards}.

\section{Preliminary}
The proposed algorithms are based on the follow-the-regularized-leader approach.
In this approach,
we define a probability distribution $p_t$ over $V$ as follows:
\begin{align}
  \label{eq:defFTRL}
  q_t \in \argmin_{ p \in \cP(V) } \left\{
    \sum_{s=1}^{t-1} \linner \hat{\ell}_s, p \rinner
    +
    \psi_t ( p )
  \right\},
  \quad
  p_t = 
  ( 1 - \gamma_t ) q_t
  +
  \gamma_t \mu_{U} ,
\end{align}
where $\cP(V) = \{ p : V \rightarrow [0, 1] \mid \sum_{i \in V} p(i) = 1 \}$
expresses the set of all probability distributions over $V$,
$\hat{\ell}_s$ is an unbiased estimator for $\ell_s$,
$\linner \ell, p \rinner = \sum_{ i \in V } \ell(i) p(i) $ represents the inner product,
$\psi_t: \cP \rightarrow \re$ is a convex regularizer function,
$\gamma_t \in [0, 0.5]$ is a parameter,
and
$\mu_U$ is the uniform distribution over a nonempty subset $U \subseteq V$,
i.e.,
$\mu_U(i) = 1 / |U| $ for $i \in U$ and
$\mu_U(i) = 0$ for $i \in V \setminus U$.
After computing $p_t$ defined by \eqref{eq:defFTRL},
we choose $I_t$ following $p_t$ 
so that
$\Pr [ I_t = i | p_t] = p_t(i)$.
We then observe $\ell_t(j)$ for each $j \in \Nout(I_t)$.
Based on these observations,
we set the unbiased estimator $\hat{\ell}_t: V \rightarrow \re$ by
\begin{align}
  \label{eq:defhatell}
  \hat{\ell}_t (i)
  =
  \frac{\ell_t(i)}{ P_t (i) } \mathbf{1}[ i \in \Nout (I_t) ]
  ,
  \quad
  P_t(i)
  =
  \sum_{j \in \Nin (i)} p_{t}(j) .
\end{align}
Let $D_t$ denote the \textit{Bregman divergence} with respect to $\psi_t$,
i.e.,
\begin{align}
  D_t (p, q) = \psi_t( p ) - \psi_t(q) - \linner  \nabla \psi_t(q), p - q \rinner .
\end{align}
We then have the following regret bounds:
\begin{lemma}
  \label{lem:FTRL}
  If $I_t$ is chosen by the above procedure,
  the regret is bounded by
  \begin{align}
    R_T 
    \leq
    \E \left[
      \sum_{t=1}^T
      \left(
      \gamma_t
      +
      \linner
      \hat{\ell}_t,
      q_t - q_{t+1} 
      \rinner
      -
      D_t(q_{t+1}, q_t )
      +
      \psi_{t}(q_{t+1})
      -
      \revise{
      \psi_{t+1}(q_{t+1})
      }
      \right)
    \right] 
    \nonumber
    \\
    +
    \psi_{T+1} ( \mu_{i^* } ) -
    \psi_1 (q_1),
    \label{eq:lemFTRL}
  \end{align}
  where $\mu_{i^*}(i) = 1$ if $i=i^*$
  and $\mu_{i^*}(i) = 0$ for $i \in V \setminus \{ i^* \}$.
\end{lemma}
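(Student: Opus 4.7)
The plan is to combine the mixture decomposition $p_t = (1-\gamma_t)q_t + \gamma_t \mu_U$ with a standard FTRL regret analysis under changing regularizers. First, a direct rearrangement gives
\begin{align*}
\linner \ell_t, p_t - \mu_{i^*}\rinner = \linner \ell_t, q_t - \mu_{i^*}\rinner + \gamma_t \linner \ell_t, \mu_U - q_t\rinner \leq \linner \ell_t, q_t - \mu_{i^*}\rinner + \gamma_t,
\end{align*}
since $\ell_t \in [0,1]$ and $q_t, \mu_U \in \cP(V)$ imply $\linner \ell_t, \mu_U - q_t\rinner \leq 1$. Because $q_t$, $\gamma_t$, and $\mu_{i^*}$ are measurable with respect to the history before round $t$, the unbiasedness of $\hat{\ell}_t$ from \eqref{eq:defhatell} yields $\E[\linner \ell_t, q_t - \mu_{i^*}\rinner] = \E[\linner \hat{\ell}_t, q_t - \mu_{i^*}\rinner]$. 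It therefore suffices to bound $\sum_{t=1}^T \linner \hat{\ell}_t, q_t - \mu_{i^*}\rinner$ pathwise.

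Next, I would run the classical follow-the-leader / be-the-leader telescoping. Define $F_t(p) := \sum_{s=1}^{t-1}\linner \hat{\ell}_s, p\rinner + \psi_t(p)$, so that $q_t = \argmin_{p \in \cP(V)} F_t(p)$ and $F_1(q_1) = \psi_1(q_1)$. The optimality inequality $F_{T+1}(q_{T+1}) \leq F_{T+1}(\mu_{i^*})$ telescopes to
\begin{align*}
\sum_{t=1}^T \bigl(F_{t+1}(q_{t+1}) - F_t(q_t)\bigr) \leq \sum_{t=1}^T \linner \hat{\ell}_t, \mu_{i^*}\rinner + \psi_{T+1}(\mu_{i^*}) - \psi_1(q_1).
\end{align*}
To lower bound each increment, I split
\begin{align*}
F_{t+1}(q_{t+1}) - F_t(q_t) = \bigl[\linner \hat{\ell}_t, q_{t+1}\rinner + \psi_{t+1}(q_{t+1}) - \psi_t(q_{t+1})\bigr] + \bigl[F_t(q_{t+1}) - F_t(q_t)\bigr]
\end{align*}
and use the first-order optimality of $q_t$ for the convex $F_t$ to conclude $F_t(q_{t+1}) - F_t(q_t) \geq D_t(q_{t+1}, q_t)$, since $F_t$ and $\psi_t$ share the same Bregman divergence.

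Combining and rearranging yields
\begin{align*}
\sum_{t=1}^T \linner \hat{\ell}_t, q_{t+1} - \mu_{i^*}\rinner \leq \sum_{t=1}^T \bigl[\psi_t(q_{t+1}) - \psi_{t+1}(q_{t+1}) - D_t(q_{t+1}, q_t)\bigr] + \psi_{T+1}(\mu_{i^*}) - \psi_1(q_1).
\end{align*}
Adding $\sum_t \linner \hat{\ell}_t, q_t - q_{t+1}\rinner$ to both sides and combining with the bound from the first paragraph produces \eqref{eq:lemFTRL}. The main obstacle is the bookkeeping for the time-varying regularizer: a naive split of the increment $F_{t+1}(q_{t+1}) - F_t(q_t)$ routes the Bregman divergence through $\psi_{t+1}$, producing $D_{t+1}(q_t, q_{t+1})$ and not matching the statement. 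The split above is chosen so that the optimality condition is applied to $q_t$ against the perturbed iterate $q_{t+1}$, yielding precisely $D_t(q_{t+1}, q_t)$ together with the required regularizer-difference term $\psi_t(q_{t+1}) - \psi_{t+1}(q_{t+1})$.
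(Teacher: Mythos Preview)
Your proposal is correct and follows the same approach as the paper: the paper's own proof first uses the decomposition $p_t=(1-\gamma_t)q_t+\gamma_t\mu_U$ together with unbiasedness of $\hat\ell_t$ to reduce to bounding $\sum_t\linner\hat\ell_t,q_t-\mu_{i^*}\rinner$, and then simply invokes Exercise~28.12 of \citet{lattimore2018bandit} for the FTRL inequality; you carry out exactly this reduction and then spell out that exercise via the telescoping/be-the-leader argument with the split of $F_{t+1}(q_{t+1})-F_t(q_t)$ chosen so that first-order optimality of $q_t$ yields $D_t(q_{t+1},q_t)$.
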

This lemma can be shown by the standard analysis technique for FTRL,
e.g.,
given in Exercise 28.12 of the book by \citet{lattimore2018bandit},
combined with the fact that $\hat{\ell}_t$ defined by \eqref{eq:defhatell} is an unbiased estimator of $\ell_t$.
All omitted proofs will be given in the appendix.

We also introduce the following parameters $Q(i^*)$ and $Q$,
which will be used when applying self-bounding technique:
\begin{align}
  \label{eq:defQ}
  Q(i^*) = \sum_{t=1}^T (1 - q_t(i^*)),
  \quad
  \bar{Q}(i^*)
  =
  \E \left[
    % \sum_{t=1}^T (1 - q_t(i^*))
    Q(i^*)
  \right],
  \quad
  \bar{Q}  =  \min_{i^* \in V} \bar{Q}(i^*) .
\end{align}
We note that these values are clearly bounded as
$0 \leq \bar{Q} \leq \bar{Q}(i^*) \leq T$ for any $i^* \in V$.
In an adversarial regime with a self-bounding constraint,
the regret can be bounded from below,
as follows:
\begin{lemma}
  \label{lem:selfQ}
  In an adversarial regime with a self-bounding constraint given in Definition~\ref{def:ARSB},
  the regret is bounded as
  $R_T \geq \frac{\Delta_{\min}}{2} \bar{Q} - C$.
\end{lemma}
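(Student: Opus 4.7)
The plan is to combine the self-bounding constraint from Definition~\ref{def:ARSB} with a lower bound on $\E[\Delta(I_t)]$ expressed in terms of $q_t(i^*)$. By assumption, there is a unique optimal arm $i^* \in V$ with $\Delta(i^*) = 0$ and $\Delta(i) \geq \Delta_{\min} > 0$ for $i \neq i^*$. Starting from
\begin{align*}
\E[\Delta(I_t) \mid p_t] = \sum_{i \in V} p_t(i) \Delta(i) \geq \Delta_{\min} \sum_{i \neq i^*} p_t(i) = \Delta_{\min} (1 - p_t(i^*)),
\end{align*}
I would then relate $p_t(i^*)$ to $q_t(i^*)$ via the mixture definition $p_t = (1-\gamma_t) q_t + \gamma_t \mu_U$.

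The key computation is that since $\mu_U(i^*) \leq 1$,
\begin{align*}
1 - p_t(i^*) = 1 - (1-\gamma_t) q_t(i^*) - \gamma_t \mu_U(i^*) \geq (1-\gamma_t)(1 - q_t(i^*)) \geq \tfrac{1}{2}(1 - q_t(i^*)),
\end{align*}
where the last inequality uses $\gamma_t \in [0, 1/2]$. Summing over $t$ and taking expectations yields
\begin{align*}
\E\!\left[\sum_{t=1}^T \Delta(I_t)\right] \geq \frac{\Delta_{\min}}{2} \sum_{t=1}^T \E[1 - q_t(i^*)] = \frac{\Delta_{\min}}{2} \bar{Q}(i^*) \geq \frac{\Delta_{\min}}{2} \bar{Q},
\end{align*}
where the final step uses $\bar{Q} = \min_{i \in V} \bar{Q}(i) \leq \bar{Q}(i^*)$ from the definition in \eqref{eq:defQ}. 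Combining this with the self-bounding lower bound $R_T \geq \E[\sum_t \Delta(I_t)] - C$ from \eqref{eq:defARSB} gives the claimed inequality $R_T \geq \frac{\Delta_{\min}}{2} \bar{Q} - C$.

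None of the steps present a real obstacle; the only subtle point is being careful that $\bar{Q}$ is defined as a minimum over all vertices rather than as the value at the specific optimal arm, so the final inequality $\bar{Q}(i^*) \geq \bar{Q}$ is used in the right direction, and that the exploration mixture $\gamma_t \mu_U$ only costs us a factor of $1/2$ rather than creating an additive term, thanks to the constraint $\gamma_t \leq 1/2$.
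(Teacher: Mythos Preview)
Your proof is correct and follows essentially the same approach as the paper's own argument: both bound $\E[\Delta(I_t)]$ below via $\Delta(i)\ge \Delta_{\min}\mathbf{1}[i\neq i^*]$, pass from $p_t$ to $q_t$ using $p_t=(1-\gamma_t)q_t+\gamma_t\mu_U$ together with $\gamma_t\le 1/2$, and then invoke $\bar Q\le \bar Q(i^*)$. The only cosmetic difference is the order of the two bounding steps (the paper first drops the $\gamma_t\mu_U$ contribution and then applies $\Delta(i)\ge \Delta_{\min}$, whereas you first reduce to $1-p_t(i^*)$ and then relate it to $1-q_t(i^*)$).
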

This lemma will be used to show poly-logarithmic regret in adversarial regime with a self-bounding constraint.

\section{Strongly observable case}
\label{sec:strong}
This section provides an algorithm achieving regret bounds in Theorem~\ref{thm:strong-informal}.
We set $U = V$ and
define $\psi_t$ using the Shannon entropy $H(p)$ as follows:
\begin{align}
  \label{eq:defShannon}
  \psi_t (p) = - {\beta_t} H(p),
  \quad
  \mbox{where}
  \quad
  H(p) = \sum_{i \in V} p(i) \ln \frac{1}{p(i)} ,
\end{align}
where $\beta_t > 0$ will be defined later.
If we choose
$\gamma_t = \min \left\{ \left( \frac{1}{\alpha T} \right)^{1/2}, \frac{1}{2} \right\}$
and $\beta_t = \frac{1}{2 \gamma_t}$ for all $t$,
the FTRL algorithm \eqref{eq:defFTRL} with \eqref{eq:defShannon} coincides
with the Exp3.G algorithm 
with the parameter setting given in Theorem 2 (i) by \citet{alon2015online}.
As shown by them,
this round-independent parameter setting leads to a regret bound
of $R_T = O( \alpha^{1/2} T^{1/2} \ln (KT) )$.

In this work,
we modify the update rule of $\beta_t$ and $\gamma_t$ as follows:
We set $\beta_1 = c_1 \geq 1$ and
update $\beta_t$ and $\gamma_t$ by
\begin{align}
  \label{eq:defbeta}
  \beta_{t+1}
  =
  \beta_{t}
  +
  \frac{ c_1 }{\sqrt{ 1 + (\ln K)^{-1} \sum_{s=1}^{t} a_s }},
  \quad
  \gamma_{t} = \frac{1}{ 2 \beta_t },
\end{align}
where $a_s$ is defined by $a_s = H(q_s)$.
In the following,
we will show the following regret bounds:
\begin{theorem}
  \label{thm:strong}
  If the feedback graph $G$ is strongly observable
  and has the independent number $\alpha = \alpha(G)$,
  the FTRL algorithm \eqref{eq:defFTRL} with $U=V$ and $\psi_t$ defined by \eqref{eq:defShannon} and \eqref{eq:defbeta}
  enjoys a regret bound of
  \begin{align}
    \label{eq:thmstrong1}
    R_T \leq 
    \hat{c} \cdot \max \left\{ \bar{Q}^{1/2}, 1\right\} ,
    ~
    % &
    \mbox{where}
    ~
    % Q = \min_{i^* \in V} \E \left[
    %   \sum_{t=1}^T ( 1 - q_t(i^*) )
    % \right]
    % \quad
    % \mbox{and}
    % \\
    % &
    \hat{c}
    =
    O \left(
      \left(
      \frac{\alpha \ln T \cdot \ln (c_1 K T ) }{c_1 \sqrt{\ln K}} 
      +
      c_1 
      \sqrt{\ln K}
      \right)
      \sqrt{\ln(KT)}
    \right) .
  \end{align}
  Consequently,
  we have 
  $R_T = O\left( \hat{c} \sqrt{T} \right) $
  in the adversarial regime
  and
  % \begin{align}
  %   \label{eq:thmstrong2}
  $
    % \left\{ 
      % \begin{array}{ll}
      %   O\left( \hat{c} \sqrt{T} \right) & (\mbox{adversarial regime}) \\
    R_T =
        O\left(  \frac{\hat{c}^2}{ \Delta_{\min} } + \sqrt{ \frac{C \hat{c}^2}{\Delta_{\min}} } \right)
    %      & (\mbox{stochastically constrained adversarial regime}) \\
    %   \end{array}
    % \right. .
  % \end{align}
  $
  in adversarial regimes with self-bounding constraints.
  % $R_T = \hat{c} \sqrt{ T }$ in the adversarial setting and
  % $R_T = \frac{\hat{c} T }{\Delta}$
\end{theorem}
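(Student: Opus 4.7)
The plan is to begin from the generic FTRL regret decomposition in Lemma~\ref{lem:FTRL} and then estimate each term in \eqref{eq:lemFTRL} in a way that expresses the final bound in terms of the entropies $a_t = H(q_t)$, since the learning-rate schedule \eqref{eq:defbeta} is itself driven by $\sum_s a_s$. First, I would bound the exploration contribution $\sum_t \gamma_t = \sum_t \tfrac{1}{2\beta_t}$ using \eqref{eq:defbeta}: since $\beta_{t+1} - \beta_t = \Theta(c_1/\sqrt{1 + (\ln K)^{-1}\sum_{s\leq t}a_s})$, a telescoping/integral comparison will give $\sum_t \gamma_t = O\bigl(\tfrac{1}{c_1}\sqrt{(\ln K)^{-1}\sum_t a_t}\bigr)$ up to $\ln T$ factors, so the $\gamma_t$ term ultimately scales like $\sqrt{\sum_t a_t / \ln K}$.

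Next comes the stability term $\langle \hat\ell_t, q_t - q_{t+1}\rangle - D_t(q_{t+1},q_t)$. For the Shannon-entropy regularizer $\psi_t = -\beta_t H$, a standard exponential-weights computation (essentially the ``Exp3'' one-step bound) gives
\begin{equation*}
\langle \hat\ell_t, q_t - q_{t+1}\rangle - D_t(q_{t+1},q_t) \;\leq\; \frac{1}{2\beta_t}\sum_{i \in V} q_t(i)\,\hat\ell_t(i)^2 \;\leq\; \frac{1}{2\beta_t}\sum_{i \in V}\frac{q_t(i)}{P_t(i)}\,\mathbf{1}[i\in \Nout(I_t)].
\end{equation*}
The critical input from the strongly observable structure is the Alon--Cesa-Bianchi--Gentile lemma, which, thanks to the uniform mixing with rate $\gamma_t$ over $U=V$, gives $\E\bigl[\sum_i q_t(i)\hat\ell_t(i)^2 \mid p_t\bigr] = O(\alpha \ln(KT))$. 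Combined with $\beta_t \geq c_1$, summation over $t$ yields an $O\bigl(\tfrac{\alpha \ln(KT)\,\ln T}{c_1}\bigr)$-type contribution, which I can then re-express as $O\bigl(\tfrac{\alpha\ln(KT)\ln T}{c_1\sqrt{\ln K}}\cdot\sqrt{\sum_t a_t}\bigr)$ after plugging in the relation between $\beta_t$ and $\sum_s a_s$ and summing by parts.

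For the remaining terms, the penalty $\sum_t (\psi_t(q_{t+1}) - \psi_{t+1}(q_{t+1})) = -\sum_t (\beta_{t+1}-\beta_t) H(q_{t+1})$ is non-positive since $H\geq 0$, so it can be discarded. The boundary $\psi_{T+1}(\mu_{i^*}) - \psi_1(q_1) \leq \beta_{T+1}\ln K$ contributes $O\bigl(c_1\sqrt{\ln K}\,\sqrt{\sum_t a_t}\bigr)$ via the same comparison as above. Collecting the pieces gives
\begin{equation*}
R_T \;=\; O\!\left(\!\left(\tfrac{\alpha\ln T\ln(KT)}{c_1\sqrt{\ln K}} + c_1\sqrt{\ln K}\right)\sqrt{\textstyle\sum_t \E[a_t]}\,\right)\!.
\end{equation*}
Now I would use the elementary inequality $H(q_t) \leq (1-q_t(i^*))\ln K + h_2(q_t(i^*)) = O((1-q_t(i^*))\ln(KT))$ (valid whenever $1-q_t(i^*) \geq 1/T$, with the trivial case $\sum_t a_t = O(1)$ absorbed into the ``$\max\{\cdot,1\}$'') to obtain $\sum_t \E[a_t] = O(\bar Q \ln(KT))$, giving \eqref{eq:thmstrong1}.

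The adversarial statement $R_T = O(\hat c\sqrt T)$ is then immediate from $\bar Q \leq T$. For the self-bounding consequence, Lemma~\ref{lem:selfQ} gives $R_T \geq \tfrac{\Delta_{\min}}{2}\bar Q - C$; combining with $R_T \leq \hat c\sqrt{\bar Q}$ and solving the resulting inequality in $\bar Q$ (a standard manipulation: $x \leq a\sqrt x + b$ implies $x = O(a^2 + b)$) produces the $O(\hat c^2/\Delta_{\min} + \sqrt{C\hat c^2/\Delta_{\min}})$ bound. The main obstacle I anticipate is the bookkeeping that couples the data-dependent schedule \eqref{eq:defbeta} to the stability sum: one must control $\sum_t \tfrac{1}{\beta_t}\langle q_t,\hat\ell_t^2\rangle$ not by $1/\beta_1$ (which would lose a factor of $T$) but by a telescoping argument using $\beta_{t+1}-\beta_t$ and the fact that $a_t$ appears both inside $\beta_t$ and in the target bound, so that the final estimate is expressed purely in $\sum_t a_t$ rather than requiring any a priori bound on the entropies.
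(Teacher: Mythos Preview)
There is a sign error that breaks the argument. With $\psi_t(p)=-\beta_t H(p)$ and $\beta_{t+1}>\beta_t$ one has
\[
\psi_t(q_{t+1})-\psi_{t+1}(q_{t+1})=(\beta_{t+1}-\beta_t)\,H(q_{t+1})=(\beta_{t+1}-\beta_t)\,a_{t+1}\ \ge 0,
\]
not $-(\beta_{t+1}-\beta_t)H(q_{t+1})$. So the penalty term cannot be discarded; it is exactly the positive quantity $\sum_t(\beta_{t+1}-\beta_t)a_{t+1}$ that must be controlled, and it is what ultimately supplies the $c_1\sqrt{\ln K}$ factor in $\hat c$. Correspondingly the boundary term is $\psi_{T+1}(\mu_{i^*})-\psi_1(q_1)=\beta_1\ln K=c_1\ln K$, a constant, not $\beta_{T+1}\ln K$. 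Your claimed bound $\beta_{T+1}\ln K=O\bigl(c_1\sqrt{\ln K}\sqrt{\sum_t a_t}\bigr)$ is also false in general: if $a_t$ is small for $t\ge 2$ then $\beta_{T+1}=\Theta(c_1 T)$ while the right-hand side is $O(c_1\ln K)$. The paper's proof keeps the penalty $\sum_t(\beta_{t+1}-\beta_t)a_{t+1}$ and bounds it by a telescoping argument that crucially uses $a_{t+1}\le a_1=\ln K$ so that the increment $\frac{c_1 a_{t+1}}{\sqrt{1+(\ln K)^{-1}\sum_{s\le t}a_s}}$ can be compared with $\sqrt{\sum_{s\le t+1}a_s}-\sqrt{\sum_{s\le t}a_s}$; this is the ``bookkeeping'' step you anticipated, but it applies to the penalty, not the boundary.

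A secondary issue: the one-step bound $\langle\hat\ell_t,q_t-q_{t+1}\rangle-D_t(q_{t+1},q_t)\le\frac{1}{2\beta_t}\sum_i q_t(i)\hat\ell_t(i)^2$ together with the Alon--Cesa-Bianchi et al.\ graph lemma does \emph{not} directly yield $\E\bigl[\sum_i q_t(i)\hat\ell_t(i)^2\bigr]=O(\alpha\ln(KT))$ for strongly observable graphs that have vertices without self-loops. For $i\in S=\{i:i\notin \Nin(i)\}$ one has $P_t(i)=1-p_t(i)$, so $\E[q_t(i)\hat\ell_t(i)^2]=q_t(i)/(1-p_t(i))$ can be unbounded. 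The paper handles this by shifting $\hat\ell_t$ by a scalar $\bar\ell_t$ (the trick from \cite{alon2015online}, Lemma~4) and using the stability bound in the form $\beta_t\sum_i q_t(i)\,\xi(\hat\ell_t(i)/\beta_t)$, which inserts the needed factor $(1-q_t(i))$ for $i\in S$; only then does the graph lemma apply to the remaining vertices. Your sketch silently assumes self-loops everywhere.
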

When we set 
$c_1 = \Theta \left(  \sqrt{ \frac{ \alpha \ln T \cdot \ln (KT) }{ \ln K } }  \right)$,
$\hat{c}$ in this theorem is at most
$O \left( \sqrt{ \alpha \ln T \cdot (\ln (KT))^2 } \right)$,
which leads to the regret bounds in Theorem~\ref{thm:strong-informal}.
In the rest of this section,
we provide proof for Theorem~\ref{thm:strong}.

% If we choose $\beta_t = 1 / \eta$ and 
% For general round-dependent parameter setting,
% we have the following regret bound:
% from Lemma~\ref{lem:FTRL} and 
% the analysis for 
Let us start with the following lemma:
\begin{lemma}
  \label{lem:strong}
  If 
  $\psi_t$ is given by \eqref{eq:defShannon}
  with
  $\beta_t \geq 1$ and
  $\gamma_t \geq 1 / (2 \beta_t)$,
  the regret for the FTRL algorithm \eqref{eq:defFTRL} with $U = V$ is bounded as
  \begin{align}
    \label{eq:lemStrong}
  % $
    R_T
    \leq
    \E \left[
      \sum_{t=1}^T
      \left(
        \gamma_t
        +
        \frac{2}{\beta_t}
        \left(
          1 +
          4 \alpha \ln \frac{K^2}{4 \gamma_t}
        \right)
        +
        ( \beta_{t+1} - \beta_t) a_{t+1}
        % H( q_{t+1} ) 
      \right)
    \right]
    +
    \beta_1 \ln K ,
    % $
  \end{align}
  where $a_t = H(q_t)$ is the value of the Shannon entropy for $q_t$.
\end{lemma}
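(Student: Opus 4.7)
The plan is to instantiate Lemma~\ref{lem:FTRL} with $\psi_t = -\beta_t H$ and to bound each of the resulting terms in turn. The boundary and drift contributions are essentially immediate. Since $H(\mu_{i^*}) = 0$ and $H(q_1) \leq \ln K$, we have $\psi_{T+1}(\mu_{i^*}) - \psi_1(q_1) = \beta_1 H(q_1) \leq \beta_1 \ln K$, producing the trailing term of \eqref{eq:lemStrong}. For each round $t$, the learning-rate drift evaluates to $\psi_t(q_{t+1}) - \psi_{t+1}(q_{t+1}) = (\beta_{t+1} - \beta_t) H(q_{t+1}) = (\beta_{t+1} - \beta_t) a_{t+1}$, which matches the drift term in \eqref{eq:lemStrong}, and the exploration contribution $\gamma_t$ passes through unchanged.

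Next I would bound the stability term $\linner \hat{\ell}_t, q_t - q_{t+1}\rinner - D_t(q_{t+1}, q_t)$ on a per-round basis. Since the map $p \mapsto \linner \hat{\ell}_t, q_t - p\rinner - D_t(p, q_t)$ is concave, its value at $q_{t+1}$ is upper bounded by its maximum over $\cP(V)$. For Shannon-entropy regularization this maximum is attained at the Exp3-style point $\tilde q_{t+1}(i) \propto q_t(i) \exp(-\hat{\ell}_t(i)/\beta_t)$, and a short calculation reduces the maximum to $\linner \hat{\ell}_t, q_t \rinner + \beta_t \ln Z_t$ with $Z_t = \sum_j q_t(j) \exp(-\hat{\ell}_t(j)/\beta_t)$. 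Because $\hat{\ell}_t \geq 0$, the elementary inequalities $\exp(-x) \leq 1 - x + x^2/2$ for $x \geq 0$ and $\ln(1+y) \leq y$ together yield the standard Exp3-style stability estimate $\linner \hat{\ell}_t, q_t - q_{t+1}\rinner - D_t(q_{t+1}, q_t) \leq \frac{1}{2\beta_t} \sum_i q_t(i) \hat{\ell}_t(i)^2$.

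Finally I would take conditional expectation and convert to an independence-number bound. Unbiasedness of $\hat{\ell}_t$ under $I_t \sim p_t$, together with $\ell_t(i) \in [0, 1]$, gives $\E_t[\sum_i q_t(i) \hat{\ell}_t(i)^2] = \sum_i q_t(i) \ell_t(i)^2 / P_t(i) \leq \sum_i q_t(i)/P_t(i)$. The mixing identity $p_t = (1 - \gamma_t) q_t + \gamma_t \mu_V$ provides both $p_t(i) \geq \gamma_t/K$ and, via $\gamma_t \leq 1/2$, $q_t(i) \leq 2 p_t(i)$, so $\sum_i q_t(i)/P_t(i) \leq 2 \sum_i p_t(i)/P_t(i)$. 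The hard part is then the strongly-observable independence-number inequality $\sum_i p_t(i)/P_t(i) \leq 2 + 8 \alpha \ln(K^2/(4\gamma_t))$ for any distribution bounded below by $\gamma_t/K$; this is a variant of the key combinatorial lemma of \citet{alon2015online}, and its proof splits vertices according to whether $i \in \Nin(i)$ or $V \setminus \{i\} \subseteq \Nin(i)$, greedily extracts an independent set from the ``heavy'' fraction of probability, and uses the $p_t(i) \geq \gamma_t/K$ floor to control the remaining tail via a logarithmic sum. Substituting this estimate into the stability bound produces the $\frac{2}{\beta_t}(1 + 4\alpha \ln \frac{K^2}{4\gamma_t})$ contribution in \eqref{eq:lemStrong}, and assembling the pieces completes the argument.
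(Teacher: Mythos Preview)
Your treatment of the boundary, drift, and exploration terms is correct and matches the paper. The gap is in the stability step, specifically in the claimed inequality
\[
\sum_{i\in V}\frac{p_t(i)}{P_t(i)}\;\le\;2+8\alpha\ln\frac{K^2}{4\gamma_t}.
\]
This fails for strongly observable graphs that contain vertices without self-loops. If $i\in S:=\{i:i\notin \Nin(i)\}$, then strong observability forces $\Nin(i)=V\setminus\{i\}$, so $P_t(i)=1-p_t(i)$. When $q_t(i_0)\to 1$ for some $i_0\in S$, the mixing gives $1-p_t(i_0)\to\gamma_t(1-1/K)$, and the single term $p_t(i_0)/P_t(i_0)$ is of order $1/\gamma_t$, not bounded by any constant. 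Plugging this into your chain yields a contribution of order $\frac{1}{\beta_t\gamma_t}$, which (with only $\gamma_t\ge 1/(2\beta_t)$) is $O(1)$ per round and sums to $\Theta(T)$, destroying the bound. The greedy independence-number argument from \citet{alon2015online} (their Lemma~5) only controls the self-loop part $V\setminus S$; it does not cover $S$.

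The missing ingredient is the shift trick of \citet[Lemma~4]{alon2015online}, which the paper invokes explicitly: replace $\hat\ell_t$ by $\hat\ell_t-\bar\ell_t\mathbf{1}$ (this leaves $\linner\hat\ell_t,q_t-q_{t+1}\rinner$ unchanged since both are distributions) with $\bar\ell_t$ chosen so that the term at the dominant vertex in $S$ vanishes. After the shift, the $S$-contribution becomes $\frac{1}{\beta_t}\sum_{i\in S}q_t(i)(1-q_t(i))\hat\ell_t(i)^2$, whose expectation is at most $\sum_{i\in S}q_t(i)\,\frac{1-q_t(i)}{1-p_t(i)}\le 2\sum_{i\in S}q_t(i)\le 2$, using $(1-q_t(i))/(1-p_t(i))\le 2$ from $\gamma_t\le 1/2$. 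Note that to keep $\xi(x)\le x^2$ valid one needs the shifted argument $\ge -1$, i.e.\ $\bar\ell_t/\beta_t\le 1$; this is where the hypotheses $\beta_t\ge 1$ and $\gamma_t\ge 1/(2\beta_t)$ enter. With this correction your argument goes through and yields exactly \eqref{eq:lemStrong}; without it, the bound is false whenever $S\neq\emptyset$.
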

This lemma follows from Lemma~\ref{lem:FTRL} and 
the
technique used in the proof of \citet[Theorem 2]{alon2015online}.
% In the following,
% we denote $a_t = H( q_{t} )$ for the notational simplicity.
We note that $0 \leq a_t \leq \ln K$ and $a_1 = \ln K$.
From Lemma~\ref{lem:strong} and the update rules of parameters given by \eqref{eq:defbeta},
we obtain the following entropy-dependent regret bound:
% We then have the following:
\begin{proposition}
  \label{prop:strong}
  Suppose \eqref{eq:lemStrong} holds. 
  If $\beta_t$ and $\gamma_t$ are given by \eqref{eq:defbeta},
  % \begin{align}
  %   \label{eq:propStrong}
    $
    R_T
    \leq
    \tilde{c}
    \E \left[
      \sqrt{ \sum_{t=1}^T a_t }
    \right],
    $
    where
    $a_t = H(q_t)$ and
    $
    % \quad
    % \mbox{where}
    % \quad
    \tilde{c}
    =
    O \left(
      \frac{\alpha \ln T \cdot \ln (c_1 K T ) }{c_1 \sqrt{\ln K}} 
      +
      c_1 
      \sqrt{\ln K}
    \right) .
    $
  % \end{align}
\end{proposition}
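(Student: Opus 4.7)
The plan is to derive the bound from Lemma~\ref{lem:strong} pointwise in the sample path $(a_1,\dots,a_T)$ and then take expectation. The three sums on the right-hand side of \eqref{eq:lemStrong} will each be controlled by a quantity of the form $(\text{constant})\cdot\sqrt{\sum_{t=1}^T a_t}$, after a careful estimate of $\sum_t 1/\beta_t$ and a telescoping of the last sum.

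First I would switch to normalized variables $b_t := a_t/\ln K \in [0,1]$ and $B_t := 1 + \sum_{s=1}^t b_s$, so that $B_t$ is non-decreasing, $B_{t+1}\leq B_t+1\leq 2B_t$ (using $B_t\geq 1$), and the update rule \eqref{eq:defbeta} reads $\beta_{t+1}-\beta_t=c_1/\sqrt{B_t}$ with $\beta_1=c_1$. Hence $\beta_t=c_1\bigl(1+\sum_{s=1}^{t-1}1/\sqrt{B_s}\bigr)$ and $\gamma_t=1/(2\beta_t)$. The key new estimate is a sharp bound on $\sum_t 1/\beta_t$: using monotonicity $B_s\leq B_T$ for $s\leq t-1$, one obtains
\begin{equation*}
  \beta_t \;\geq\; \frac{c_1}{\sqrt{B_T}}\bigl(\sqrt{B_T}+t-1\bigr),
\end{equation*}
and a comparison with $\int_0^{T}\rd x/(\sqrt{B_T}+x)$ then yields $\sum_{t=1}^T 1/\beta_t \leq O(\sqrt{B_T}\ln T/c_1)$.

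Armed with this, I would bound each piece of \eqref{eq:lemStrong} separately. The exploration sum $\sum_t \gamma_t=\tfrac12\sum_t 1/\beta_t$ inherits the $O(\sqrt{B_T}\ln T/c_1)$ bound. For the stability sum, $\ln(K^2/(4\gamma_t))=\ln(K^2\beta_t/2)\leq \ln(K^2 c_1 T/2)=O(\ln(c_1 KT))$ because $\beta_t\leq c_1 t\leq c_1 T$, so $\sum_t (2/\beta_t)(1+4\alpha\ln(K^2/(4\gamma_t)))=O(\alpha\ln(c_1 KT)\sqrt{B_T}\ln T/c_1)$. For the regularization drift $\sum_t(\beta_{t+1}-\beta_t)a_{t+1}=c_1\ln K\sum_t b_{t+1}/\sqrt{B_t}$, I use the elementary identity $b_{t+1}/\sqrt{B_t}=(\sqrt{B_{t+1}/B_t}+1)(\sqrt{B_{t+1}}-\sqrt{B_t})$ together with $B_{t+1}/B_t\leq 2$ to obtain $b_{t+1}/\sqrt{B_t}\leq(\sqrt{2}+1)(\sqrt{B_{t+1}}-\sqrt{B_t})$; telescoping yields $\sum_t(\beta_{t+1}-\beta_t)a_{t+1}\leq O(c_1\ln K\cdot\sqrt{B_T})$.

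Finally I would translate $\sqrt{B_T}$ back to $\sqrt{\sum_t a_t}$. Because $q_1=\mu_V$ is uniform on $V$ (it minimizes $\psi_1=-c_1 H$), $a_1=\ln K$, so $\sum_t a_t \geq \ln K$, which yields $B_T=1+\sum_t a_t/\ln K\leq 2\sum_t a_t/\ln K$ and $\sqrt{B_T}\leq \sqrt{2}\,\sqrt{\sum_t a_t}/\sqrt{\ln K}$. Substituting into the three bounds above and adding the constant $\beta_1\ln K=c_1\ln K\leq c_1\sqrt{\ln K}\cdot\sqrt{\sum_t a_t}$ produces $R_T\leq \tilde c\,\E\bigl[\sqrt{\sum_t a_t}\bigr]$ with $\tilde c$ as claimed. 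I expect the main obstacle to be the estimate $\sum_t 1/\beta_t=O(\sqrt{B_T}\ln T/c_1)$: the naive lower bound $\beta_t\gtrsim c_1\sqrt{t}$ would only give $O(\sqrt{T}/c_1)$, which does not scale with $\sqrt{B_T}$ and hence fails to produce a regret proportional to $\sqrt{\sum_t a_t}$ whenever $\sum_t a_t\ll T\ln K$; exploiting $B_s\leq B_T$ along the whole history is essential.
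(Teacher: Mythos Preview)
Your proposal is correct and follows essentially the same route as the paper's proof: both arguments lower-bound $\beta_t$ by a quantity of order $c_1 t/\sqrt{B_T}$ to get $\sum_t 1/\beta_t = O(\sqrt{B_T}\ln T/c_1)$, and both handle the drift term $\sum_t(\beta_{t+1}-\beta_t)a_{t+1}$ by a telescoping of $\sqrt{B_{t+1}}-\sqrt{B_t}$, then convert $\sqrt{B_T}$ to $\sqrt{\sum_t a_t}/\sqrt{\ln K}$ via $a_1=\ln K$. The only cosmetic differences are that the paper uses the slightly tighter $\beta_t\geq c_1 t/\sqrt{B_t}$ (then bounds $B_t\leq B_T$ afterward) and the harmonic-sum estimate $\sum_t 1/t\leq 1+\ln T$ in place of your integral comparison, and for the telescoping it invokes $a_{t+1}\leq \ln K$ directly rather than your equivalent observation $B_{t+1}/B_t\leq 2$.
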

\begin{proof}
  We will show the following two inequalities:
  \begin{align}
    \label{eq:propstrong1}
    \sum_{t=1}^T
    \left(
      \gamma_t
      +
      \frac{2}{\beta_t}
      \left(
        1 +
        4 \alpha \ln \frac{K^2}{4 \gamma_t}
      \right)
    \right)
    &
    =
    O \left(
      \frac{\alpha \ln T \cdot \ln (c_1 K^2 T ) }{c_1 \sqrt{\ln K}} 
      \sqrt{
        \sum_{t=1}^T a_t
      }
    \right),
    \\
    \sum_{t=1}^T
    (\beta_{t+1} - \beta_t) a_{t+1}
    &
    =
    O \left(
    {c_1 \sqrt{\ln K}}
    \sqrt{\sum_{t=1}^{T} a_t} 
    \right).
    \label{eq:propstrong2}
  \end{align}

  Let us first show \eqref{eq:propstrong1}.
  From the definition of $\gamma_t $ given in \eqref{eq:defbeta},
  we have
  $\mbox{[LHS of \eqref{eq:propstrong1}]} \leq 
    \sum_{t=1}^T
    \frac{1}{\beta_t}
    \left(
      3
      +
      8 \alpha \ln \frac{c_1 K^2 t}{2}
    \right)
    \leq
    \left(
      3
      +
      8 \alpha \ln \frac{c_1 K^2 T}{2}
    \right)
    \sum_{t=1}^T
    \frac{1}{\beta_t}
  $.
  From the definition of $\beta_t$ given by \eqref{eq:defbeta},
  $\beta_t$ is bounded as
  $
    \beta_t
    =
    c_1
    +
    \sum_{u=1}^{t-1}
    \frac{ c_1} {\sqrt{ 1 + (\ln K)^{-1} \sum_{s=1}^{u} a_s }}
    \geq
    \frac{ c_1 t } {\sqrt{ 1 + (\ln K)^{-1} \sum_{s=1}^{t} a_s } }
  $.
  We hence have
  $
    \sum_{t=1}^T
    \frac{1}{\beta_t}
    \leq
    \sum_{t=1}^T
    \frac{1}{ c_1 t } 
    \sqrt{ 1 + (\ln K)^{-1} \sum_{s=1}^{t} a_s } 
    \leq
    \frac{1 + \ln T}{ c_1 } 
    \sqrt{ 1 + (\ln K)^{-1} \sum_{t=1}^{T} a_t } 
    \leq
    O \left(
      \frac{\ln T}{c_1 \sqrt{\ln K}}
      \sqrt{ \sum_{t=1}^{T} a_t } 
    \right)
  $,
  where the last inequality follows from $a_1 = \ln K$.
  Combining the above inequalities,
  we obtain \eqref{eq:propstrong1}.

  Let us next show \eqref{eq:propstrong2}.
  From \eqref{eq:defbeta},
  we have
  $
    \mbox{[LHS of \eqref{eq:propstrong2}]}
    =
    \sum_{t=1}^T
    \frac{ c_1 }{
      \sqrt{1 + (\ln K)^{-1} \sum_{s=1}^{t} a_s }
    }
    \cdot
    a_{t+1}
    =
    {2 c_1 \sqrt{\ln K}}
    \sum_{t=1}^T
    \frac{ a_{t+1} }{
      \sqrt{\ln K + \sum_{s=1}^{t} a_s } + 
      \sqrt{\ln K + \sum_{s=1}^{t} a_s } 
    }
    \leq
    {2 c_1 \sqrt{\ln K}}
    \sum_{t=1}^T
    \frac{ a_{t+1} }{
      \sqrt{\sum_{s=1}^{t+1} a_s } + 
      \sqrt{\sum_{s=1}^{t} a_s } 
    }
    =
    {2 c_1 \sqrt{\ln K}}
    \sum_{t=1}^T
    \left(
      \sqrt{\sum_{s=1}^{t+1} a_s } -
      \sqrt{\sum_{s=1}^{t} a_s } 
    \right)
    =
    {2 c_1 \sqrt{\ln K}}
    \left(
      \sqrt{\sum_{s=1}^{T+1} a_s } -
      \sqrt{a_1} 
    \right)
    \leq
    {2 c_1 \sqrt{\ln K}}
    \sqrt{\sum_{t=1}^{T} a_t} ,
  $
  where inequalities follow from $a_{t} \leq a_1 = \ln K$.
  This proves \eqref{eq:propstrong2}.

  Inequalities \eqref{eq:propstrong1} and \eqref{eq:propstrong2} combined with \eqref{eq:lemStrong} lead to the regret bound in Proposition~\ref{prop:strong}.
\end{proof}
In addition,
$\sum_{t=1}^T a_t = \sum_{t=1}^T H(q_t)$ is bounded with $Q(i^*)$ defined in \eqref{eq:defQ},
as follows:
\begin{lemma}
  \label{lem:boundat}
  Suppose $a_t = H(q_t)$.
  For any $i^* \in V$,
  we have
  $
  \sum_{t=1}^T a_t
  \leq
  Q(i^*) \ln \frac{\mathrm{e}KT}{Q(i^*)}
  $.
\end{lemma}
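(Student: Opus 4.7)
The plan is to bound $H(q_t)$ pointwise in terms of $x_t := 1 - q_t(i^\ast)$, and then aggregate over $t$ via Jensen's inequality on a concave function.

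First, I would split the Shannon entropy as $H(q_t) = q_t(i^\ast) \ln \frac{1}{q_t(i^\ast)} + \sum_{i \neq i^\ast} q_t(i) \ln \frac{1}{q_t(i)}$. The first summand equals $(1-x_t)\ln \frac{1}{1-x_t}$, which is bounded above by $x_t$ (this can be verified by checking that $-u \ln u - (1-u)$ is non-positive on $[0,1]$, setting $u = 1-x_t$). For the second summand, I would factor out the total mass $x_t$: writing $q_t(i) = x_t \, \tilde{p}(i)$ for $i \neq i^\ast$ where $\tilde{p}$ is a probability distribution on the remaining $K-1$ vertices, an easy calculation gives $\sum_{i \neq i^\ast} q_t(i) \ln \frac{1}{q_t(i)} = x_t \ln \frac{1}{x_t} + x_t H(\tilde{p}) \leq x_t \ln \frac{K-1}{x_t}$, using $H(\tilde{p}) \leq \ln(K-1)$. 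Combining the two pieces and absorbing the leading $x_t$ into the logarithm yields $H(q_t) \leq x_t \ln \frac{\mathrm{e}(K-1)}{x_t} \leq x_t \ln \frac{\mathrm{e}K}{x_t}$.

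Summing over $t$, the claim reduces to showing $\sum_{t=1}^T x_t \ln \frac{\mathrm{e}K}{x_t} \leq Q(i^\ast) \ln \frac{\mathrm{e}KT}{Q(i^\ast)}$ where $Q(i^\ast) = \sum_t x_t$. The function $f(x) = x \ln \frac{\mathrm{e}K}{x}$ is concave on $(0, \infty)$ (its second derivative is $-1/x < 0$), so Jensen's inequality gives $\frac{1}{T}\sum_{t=1}^T f(x_t) \leq f\bigl(\frac{1}{T}\sum_{t=1}^T x_t\bigr) = f(Q(i^\ast)/T)$, i.e.\ $\sum_{t=1}^T f(x_t) \leq T \cdot \frac{Q(i^\ast)}{T} \ln \frac{\mathrm{e}KT}{Q(i^\ast)} = Q(i^\ast) \ln \frac{\mathrm{e}KT}{Q(i^\ast)}$.

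There is no real obstacle here: the main subtlety is just picking the right pointwise bound $H(q_t) \leq x_t \ln \frac{\mathrm{e}K}{x_t}$ rather than, say, the trivial $H(q_t) \leq \ln K$, so that Jensen produces a bound that scales with $Q(i^\ast)$ rather than with $T$. One small thing to watch is the boundary behavior at $x_t = 0$ (where $f(0) = 0$ by the usual convention $0 \ln 0 = 0$), but this causes no issue since $f$ extends continuously to $0$ and the inequalities above remain valid.
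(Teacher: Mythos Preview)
Your proof is correct and essentially identical to the paper's: both establish the pointwise bound $H(q_t) \leq (1-q_t(i^\ast))\ln\frac{\mathrm{e}(K-1)}{1-q_t(i^\ast)}$ by separating the $i^\ast$-term (bounded via $(1-x)\ln\frac{1}{1-x}\leq x$, which the paper writes as $\ln(1+y)\leq y$) from the rest (bounded via Jensen on $u\mapsto u\ln\frac{1}{u}$, which is equivalent to your factoring-out-the-mass trick plus $H(\tilde p)\leq\ln(K-1)$), and then aggregate over $t$ via Jensen on the concave function $x\mapsto x\ln\frac{c}{x}$.
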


We are now ready to prove Theorem~\ref{thm:strong}.\\
\textit{Proof of Theorem~\ref{thm:strong}}.
From Lemma~\ref{lem:boundat},
if $Q(i^*) \leq \mathrm{e}$,
we have $\sum_{t=1}^T a_t \leq \mathrm{e} \ln (KT)$ and
otherwise,
we have $\sum_{t=1}^T a_t \leq Q(i^*) \ln (KT)$.
Hence,
we have
$\sum_{t=1}^T a_t \leq \ln(KT) \cdot \max \left\{ \mathrm{e}, Q(i^*) \right\} $.
Combining this with Proposition~\ref{prop:strong},
we obtain \eqref{eq:thmstrong1}.
Since $\bar{Q} \leq T$,
we have $R_T \leq \hat{c} \sqrt{T}$ in adversarial regimes.

We next show
$ R_T = O\left(  \frac{\hat{c}^2}{ \Delta_{\min} } + \sqrt{ \frac{C \hat{c}^2}{\Delta_{\min}} } \right) $.
From Lemma~\ref{lem:selfQ},
\eqref{eq:defARSB} implies
if the environment satisfies a $(\Delta,C,T)$ self-bounding constraint \eqref{eq:defARSB},
% for $\Delta$ such that $\Delta_{\min} = \min_{i \in V \setminus \{ i^* \} }  \Delta(i) > 0$
we have
$R_T \geq \frac{\Delta_{\min}}{2} \bar{Q} - C$.
Combining this with Proposition~\ref{prop:strong} and Lemma~\ref{lem:boundat},
it holds for any $\lambda > 0$ that
\begin{align*}
  &
  R_T
  =
  (1 + \lambda) R_T
  - \lambda R_T
  \leq
  % \E \left[
  (1 + \lambda) \tilde{c}
  \sqrt{ \bar{Q} \ln (KT)}
  -
  \frac{\lambda \Delta_{\min}}{2} \bar{Q}
  % \right]
  +
  \lambda C 
  \\
  &
  \leq
  \frac{
  ((1 + \lambda) \tilde{c})^2 \ln (KT)
  }{2 \lambda \Delta_{\min}}
  +
  \lambda C 
  =
  \frac{
    \tilde{c}^2 \ln (KT)
  }{\Delta_{\min}}
  +
  \frac{1}{2\lambda}
  \frac{
    \tilde{c}^2 \ln (KT)
  }{\Delta_{\min}}
  +
  \frac{\lambda}{2}
  \left(
  \frac{
    \tilde{c}^2 \ln (KT)
  }{\Delta_{\min}}
  +
  2 C
  \right),
\end{align*}
where the first inequality follows from
Proposition~\ref{prop:strong}, Lemma~\ref{lem:boundat},
the condition of $Q(i^*) \geq \mathrm{e}$,
and \eqref{eq:selfQ}.
The second inequality follows from
$a \sqrt{x} - \frac{b}{2} x = \frac{a^2}{2 b} - \frac{1}{2}\left( \frac{a}{\sqrt{b}} - \sqrt{bx}  \right)
\leq \frac{a^2}{2b}$
which holds for any $a, b , x \geq 0$.
By choosing
$\lambda =  \sqrt{  
  \frac{
    \tilde{c}^2 \ln (KT)
  }{\Delta_{\min}}
  /
  \left(
  \frac{
    \tilde{c}^2 \ln (KT)
  }{\Delta_{\min}}
  +
  2 C
  \right)
}$,
we obtain
$ R_T = O\left(  \frac{\hat{c}^2}{ \Delta_{\min} } + \sqrt{ \frac{C \hat{c}^2}{\Delta_{\min}} } \right) $.
\qed

\section{Weakly observable case}
\label{sec:weak}
This section provides an algorithm achieving regret bounds in Theorem~\ref{thm:weak-informal}.
Let $D$ be a weakly dominating set,
defined in Definition~\ref{def:alphadelta},
and let $V_1 = \bigcup_{i \in D} \Nout(i)
% \{ i \in V \mid i \in \Nin(i) \}
$,
$V_2 = V \setminus V_1$.
We consider here the FTRL approach given by \eqref{eq:defFTRL} with $U = D$ and
regularizer functions
defined as
\begin{align}
  \nonumber
  \psi_t (p)
  &
  =
  \beta_t
  \sum_{i \in V_1} h (p(i))
  +
  \sum_{i \in V_2} \sqrt{t} g (p(i)),
  \\
  &
  \mbox{where}
  \quad
  h(x) = x \ln x + (1 - x) \ln (1 - x) ,
  \quad
  g( x ) = - 2 \sqrt{x} - 2 \sqrt{1 - x}.
  \label{eq:defpsiweak}
\end{align}
The regularization with $h(x)$ for $V_1$ is a variant of Shannon-entropy regularization,
which can be considered as a modification of the approach of the Exp3.G by \citet{alon2015online}.
The remaining part defined with $g(x)$ for $V_2$ is a modification of the approach used in
the Tsallis-INF algorithm by \citet{zimmert2021tsallis},
which is a BOBW algorithm for MAB problems.
Intuitively,
approaches for MAB work well for vertices in $V_2$ as they have self-loops,
i.e.,
choosing actions in $V_2$ admits bandit feedback.

Let us define parameters $\gamma_t$ and $\beta_t$ by
$\beta_1 = \max\{ c_2 , 8 |D| \}$ and
\begin{align}
  \label{eq:defgammabetaweak}
	\gamma'_{t}
	=
  \frac{1}{4}
	\frac{c_1 b_t}{ c_1 +  \left( \sum_{s=1}^t b_s \right)^{1/3} },
	\quad
	\beta_{t+1}
	=
	\beta_t
	+
	\frac{c_2 b_t}{ \gamma_t' \left(
    c_1
    +
		\sum_{s=1}^{t-1} \frac{b_s a_{s+1}}{ \gamma_s' }
	\right)^{1/2} },
	\quad
	\gamma_{t}
	=
	\gamma'_{t}
	+
	\frac{2 |D|}{\beta_t},
\end{align}
where $c_1, c_2 > 0$ are input parameters such that
$c_1 \geq 2 \ln K$ and
with $\{ a_t \}$ and $\{ b_t \}$ are defined by
\begin{align}
  \label{eq:defatbtweak}
  a_t = 
  -
  \sum_{i \in V_1} h( q_t(i) ),
  \quad
  b_t = 
  \sum_{i \in V_1} q_t(i) ( 1 - q_t(i) ) .
\end{align}
Note that $a_t$ and $\hat{c}$ used in this Section~\ref{sec:weak}
are different from those defined in Section~\ref{sec:strong}.
We then have the following regret bounds:
\begin{theorem}
  \label{thm:weak}
  If the feedback graph $G$ is weakly observable,
  the FTRL algorithm \eqref{eq:defFTRL} with $U=D$ and $\psi_t$ defined by \eqref{eq:defpsiweak} and \eqref{eq:defgammabetaweak}
  enjoys a regret bound of
  \begin{align}
    \nonumber
    R_T 
    &
    \leq 
    \hat{c} \cdot \max \left\{ \bar{Q}^{2/3}, c_1^2 \right\} 
    +
    O \left(
    ( |V_2| \ln T \cdot \bar{Q}  )^{1/2}
    \right)
    \quad
    \mbox{where}
    \\
    &
    \quad
    \hat{c}
    =
    O \left(
      c_1
      +
      \frac{1}{\sqrt{c_1}}
      \left(
        \frac{ |D| \ln T}{c_2}
        +
        c_2
      \right)
      \sqrt{\ln (KT)}
    \right) .
    \label{eq:thmweak1}
  \end{align}
  Consequently,
  if $T \geq K^3$,
  we have $R_T = O \left( \hat{c} T^{2/3} \right)$ in the adversarial regime and
  \begin{align}
    \label{eq:thmweak2}
    R_T = 
        O\left( 
          \frac{\hat{c}^3}{ \Delta_{\min}^2 } + \left(\frac{C^2 \hat{c}^3}{\Delta_{\min}^2} \right)^{1/3}
          +
          \frac{|V_2|\ln T }{ \Delta_{\min} } + \sqrt{\frac{C |V_2| \ln T }{\Delta_{\min}}} 
        \right) 
  \end{align}
  in adversarial regimes with self-bounding constraints.
\end{theorem}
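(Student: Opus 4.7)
The plan is to mirror the structure of the proof of Theorem~\ref{thm:strong}: first derive an ``entropy/variance-dependent'' regret bound expressed in terms of the newly introduced quantities $\{a_t\}$ and $\{b_t\}$ of~\eqref{eq:defatbtweak} (analogous to Proposition~\ref{prop:strong}), then relate those quantities back to $Q(i^*)$, and finally apply the self-bounding inequality of Lemma~\ref{lem:selfQ}.

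Starting from Lemma~\ref{lem:FTRL}, I would split the right-hand side along the decomposition $\psi_t(p) = \beta_t\sum_{i\in V_1} h(p(i)) + \sqrt{t}\sum_{i\in V_2} g(p(i))$. For $i\in V_1$, the uniform exploration $\gamma_t' \mu_D$ guarantees $P_t(i)\ge\gamma_t'/|D|$ (since $V_1=\bigcup_{j\in D}\Nout(j)$), and a second-order Fenchel/Taylor expansion of $\beta_t h(\cdot)$ near $q_t(i)$ gives a stability term proportional to $\hat\ell_t(i)^2 q_t(i)(1-q_t(i))/\beta_t$; in expectation this produces a $b_t/\beta_t$-term, multiplied by a logarithmic factor $\ln(|D|/\gamma_t')$ coming from the Exp3.G-style estimate of~\citet{alon2015online}. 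For $i\in V_2$, each vertex has a self-loop so $P_t(i)\ge p_t(i)$; the standard Tsallis-INF analysis of $\sqrt{t}\,g(\cdot)$ then produces a stability term of order $\sum_{i\in V_2}\sqrt{q_t(i)/t}$, which by Cauchy--Schwarz is at most $O(\sqrt{|V_2|/t})$ and, when $i^*\in V_1$, is tighter than $O\bigl(\sqrt{|V_2|(1-q_t(i^*))/t}\bigr)$.

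The penalty terms $\psi_t(q_{t+1})-\psi_{t+1}(q_{t+1})$ contribute $(\beta_{t+1}-\beta_t)a_{t+1}$ from the $V_1$-part and a nonpositive $V_2$-part (since $g\le 0$ and $\sqrt{t+1}\ge\sqrt{t}$), while the boundary term $\psi_{T+1}(\mu_{i^*})-\psi_1(q_1)$ contributes $\beta_{T+1}\cdot O(1)$ from $V_1$ and $\sqrt{T+1}\cdot O(\sqrt{|V_2|})$ from $V_2$. With the schedule~\eqref{eq:defgammabetaweak}, the key telescoping identity from Proposition~\ref{prop:strong} carries over: $\sum_t(\beta_{t+1}-\beta_t)a_{t+1}$ collapses to $O\bigl(c_2\sqrt{c_1+\sum_s b_s a_{s+1}/\gamma_s'}\bigr)$, the exploration cost $\sum_t\gamma_t'$ is controlled by the $c_1+(\sum_s b_s)^{1/3}$-denominator in the definition of $\gamma_t'$, and the $|D|/\beta_t$-contribution of $\gamma_t$ is absorbed by $\beta_1\ge 8|D|$. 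Balancing these three pieces yields an intermediate bound of the form
\begin{align*}
R_T \;\le\; O\!\left(\hat c\,\E\!\left[\Bigl(\textstyle\sum_{t=1}^T b_t\Bigr)^{2/3}\right] + \E\!\left[\textstyle\sum_{t=1}^T\sqrt{|V_2|(1-q_t(i^*))/t}\right]\right),
\end{align*}
where the $2/3$-exponent is the natural consequence of the $(\sum_s b_s)^{1/3}$-scaling in $\gamma_t'$, just as the $1/2$-exponent in Proposition~\ref{prop:strong} came from the $(\sum_s a_s)^{1/2}$-scaling in $\beta_t$.

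The reduction to $\bar Q$ then uses the elementary pointwise inequality $b_t\le 2(1-q_t(i^*))$ (since $q_t(i)(1-q_t(i))\le 1-q_t(i^*)$ for every $i\ne i^*$), giving $\sum_t b_t\le 2Q(i^*)$, together with $\sum_t\sqrt{(1-q_t(i^*))/t}\le\sqrt{(1+\ln T)Q(i^*)}$ by Cauchy--Schwarz. Taking the minimum over $i^*$ and accounting for the additive constant $c_1^2$ produced by the $c_1$-floor in the denominator of $\gamma_t'$ yields~\eqref{eq:thmweak1}. The adversarial bound then follows from $\bar Q\le T$ (and $T\ge K^3\ge c_1^2$ under the stated choice of $c_1$). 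For~\eqref{eq:thmweak2}, I would apply Lemma~\ref{lem:selfQ} and the $(1+\lambda)R_T-\lambda R_T$ trick separately to each of the two summands in~\eqref{eq:thmweak1}: on $\hat c\,\bar Q^{2/3}$ the maximization of $\hat c x^{2/3}-\lambda\Delta_{\min}x/2+\lambda C$ gives the $\hat c^3/\Delta_{\min}^2+(C^2\hat c^3/\Delta_{\min}^2)^{1/3}$-term, and on $O(\sqrt{|V_2|\ln T\cdot\bar Q})$ the same trick as at the end of Section~\ref{sec:strong} gives the $|V_2|\ln T/\Delta_{\min}+\sqrt{C|V_2|\ln T/\Delta_{\min}}$-term.

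The main obstacle is the telescoping step described above: the three time-varying sequences $\gamma_t,\gamma_t',\beta_t$ in~\eqref{eq:defgammabetaweak} are coupled through both $a_t$ and $b_t$, and the careful bookkeeping needed to make the stability term, the regularizer-change term, and the exploration cost all collapse into a single $O((\sum_t b_t)^{2/3})$-scaling (rather than the $\sqrt{\sum_t a_t}$-scaling of the strongly-observable case) is where most of the technical work lies; a secondary subtlety is handling the case $i^*\in V_2$, where the $V_2$-boundary term $\sqrt{T+1}\cdot g(\mu_{i^*}(i^*))$ vanishes but the analysis of the Tsallis stability term must be done with the same care as in Tsallis-INF, relating $\sqrt{q_t(i^*)}$ to $1-q_t(i^*)$ only through $q_t(i^*)\ge 1-Q(i^*)/T$-type arguments.
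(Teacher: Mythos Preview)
Your overall strategy (split along $V_1/V_2$, relate to $Q(i^*)$, then self-bound) matches the paper's, but two of the technical steps are wrong in ways that matter. First, the $V_1$ stability term does \emph{not} pick up a logarithmic factor from an Exp3.G-style estimate: Lemma~5 of \citet{alon2015online} requires strong observability, which the subgraph on $V_1$ need not have. The only control available is $P_t(i)\ge\gamma_t/|D|$ from the forced exploration, and the stability contribution is $O\bigl(\tfrac{|D|b_t}{\gamma_t\beta_t}\bigr)$ (Lemma~\ref{lem:decomposeweak}), linear rather than logarithmic in $|D|/\gamma_t$. This $1/\gamma_t$ factor is precisely why~\eqref{eq:defgammabetaweak} scales $\gamma_t'$ with a $(\sum_s b_s)^{1/3}$-denominator and why the $2/3$-exponent appears; with only a log factor your stability would not depend on $\gamma_t$ at all, and the balance you describe would collapse. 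Second, the case $i^*\in V_2$ is not a secondary subtlety but the reason the paper uses the symmetric $g(x)=-2\sqrt x-2\sqrt{1-x}$ rather than plain Tsallis. Your stability bound $\sum_{i\in V_2}\sqrt{q_t(i)/t}$ sums to $\Theta(\sqrt T)$ at $i=i^*$, and no ``$q_t(i^*)\ge 1-Q(i^*)/T$'' inequality holds pointwise. The $-2\sqrt{1-x}$ half of $g$ yields a second branch in the stability estimate (Lemma~\ref{lem:boundBregWeak}) of size $\sqrt{1-q_t(i)}\,\zeta(-\cdots)$; taking the minimum of the two branches produces $O\bigl(\tfrac{1}{\sqrt t}\sqrt{q_t(i)(1-q_t(i))}\bigr)$ uniformly in $i$, which is what makes the $R_T^{(2)}$-term in~\eqref{eq:defR2} controllable by $\bar Q$ (Lemma~\ref{lem:boundsqrt}).

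There are also two bookkeeping errors that would independently break the bound. The $V_2$ penalty contribution $(\sqrt t-\sqrt{t+1})\,g(q)$ is positive, not nonpositive, since both factors are negative; together with the $V_2$ boundary it telescopes (see the computation leading to~\eqref{eq:boundPenaltyWeak}) into another $\sum_t\tfrac{1}{\sqrt t}\sum_{i\in V_2}\sqrt{q_t(i)(1-q_t(i))}$-term rather than disappearing. And the $V_1$ boundary contributes only $\beta_1 a_1$, not $\beta_{T+1}\cdot O(1)$: since $h(0)=h(1)=0$, the $V_1$ part of $\psi_{T+1}(\mu_{i^*})$ vanishes. A term of order $\beta_{T+1}$ would scale like $T^{1/3}$ and would not fit into~\eqref{eq:thmweak1}.
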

We obtain
$
\hat{c}
=
O \left(
  \left(
    |D| \ln T \cdot \ln (KT)
  \right)^{1/3}
\right) 
$ 
by setting
$c_1 = \Theta\left( \left( |D| \ln T \cdot \ln (KT)  \right)^{1/3} \right) $
and
$c_2 = \Theta \left( \sqrt{ |D| \ln T } \right)$.
By using a weakly dominating set $D$ such that $|D| = O(\delta(G))$,
we obtain the regret bounds in Theorem~\ref{thm:weak-informal}.
The remainder of this section is dedicated to the proof of Theorem~\ref{thm:weak}.

We start with
the following regret bound:
\begin{lemma}
  \label{lem:decomposeweak}
  If $\psi_t$ is given by \eqref{eq:defpsiweak}
  and if $\gamma_t \geq \frac{2 |D|}{\beta_{t}}$,
  we have
  $
    R_T
    \leq
    R_T^{(1)}
    +
    R_T^{(2)}
    +
    a_1 \beta_1
  $,
  where
    % \quad
    % \mbox{where}
    % \\
  \begin{align}
    \label{eq:defR1}
    &
    R_T^{(1)}
    =
    O
    \left(
      \E
      \left[
        \sum_{t=1}^T
        \left(
          \gamma_t
          +
          % \sum_{i \in V_2} 
          % \left(
          \frac{|D| b_t }{ \gamma_t \beta_t }
          +
          ( \beta_{t+1} - \beta_t ) a_{t+1}
        \right)
        % +
        % a_1 \beta_1
      \right]
    \right)  ,
    \\
    &
    % \quad
    R_T^{(2)}
    =
    O\left(
      \E
      \left[
        \sum_{t=1}^T
        % \left(
          \frac{1}{\sqrt{t}}
          \sum_{i \in V_2} \sqrt{q_t(i) (1 - q_t(i))}
          % \right)
        % \right)
      \right]
    \right) ,
    \label{eq:defR2}
  \end{align}
  with $\{ a_t \}$ and $\{ b_t \}$ defined by \eqref{eq:defatbtweak}.
\end{lemma}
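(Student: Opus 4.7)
The plan is to apply Lemma~\ref{lem:FTRL} and then exploit the fact that $\psi_t$ in \eqref{eq:defpsiweak} is separable across the partition $V = V_1 \sqcup V_2$. Writing $\psi_t = \psi_t^{(1)} + \psi_t^{(2)}$ with $\psi_t^{(1)}(p) = \beta_t \sum_{i \in V_1} h(p(i))$ and $\psi_t^{(2)}(p) = \sqrt{t} \sum_{i \in V_2} g(p(i))$, the Bregman divergence $D_t$ and the penalty $\psi_t(q_{t+1}) - \psi_{t+1}(q_{t+1})$ both split additively, so every summand on the right-hand side of \eqref{eq:lemFTRL} decomposes into a $V_1$-part and a $V_2$-part that can be analyzed separately. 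The $\gamma_t$ term from \eqref{eq:lemFTRL} I will absorb into $R_T^{(1)}$, and the initial regularizer term $\psi_{T+1}(\mu_{i^*}) - \psi_1(q_1)$ is handled as follows: since $h(0)=h(1)=0$, the $V_1$-part contributes exactly $-\psi_1^{(1)}(q_1) = \beta_1 a_1$, giving the stated $a_1\beta_1$ summand; for the $V_2$-part, $\psi_{T+1}^{(2)}(\mu_{i^*}) = -2|V_2|\sqrt{T+1}$ and $|\psi_1^{(2)}(q_1)| = O(|V_2|)$, so this contribution is strongly negative and will be swallowed when combined with the per-round $V_2$ penalty.

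For the $V_1$-part I would use the standard FTRL stability bound via the local Taylor expansion of $h$: since $h''(x) = \frac{1}{x(1-x)}$, one gets $\langle\hat\ell_t^{(V_1)}, q_t - q_{t+1}\rangle - D_t^{(1)}(q_{t+1},q_t) \leq \frac{1}{2\beta_t}\sum_{i \in V_1} q_t(i)(1-q_t(i)) \hat\ell_t(i)^2$ (after checking positivity of $q_{t+1}$ via the exploration lower bound on $p_t$, which is precisely what the assumption $\gamma_t \geq 2|D|/\beta_t$ is for). Because $D$ weakly dominates $V_1$, for every $i \in V_1$ one has $P_t(i) \geq \gamma_t/|D|$, so $\E[\hat\ell_t(i)^2 \mid q_t] \leq 1/P_t(i) \leq |D|/\gamma_t$. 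Thus the stability term in expectation is $O(|D| b_t/(\gamma_t\beta_t))$. The $V_1$ penalty difference in \eqref{eq:lemFTRL} equals $(\beta_{t+1}-\beta_t)(-\sum_{i \in V_1} h(q_{t+1}(i))) = (\beta_{t+1}-\beta_t)a_{t+1}$, which is exactly the last summand in \eqref{eq:defR1}. Combined with $\gamma_t$ from \eqref{eq:lemFTRL}, this yields $R_T^{(1)}$.

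For the $V_2$-part the argument is Tsallis-INF style. Every $i \in V_2$ carries a self-loop (by definition of $V_2$ for weakly observable graphs), so $P_t(i) \geq p_t(i) \geq (1-\gamma_t) q_t(i) \geq q_t(i)/2$ since $\gamma_t \leq 1/2$. The Hessian of $\psi_t^{(2)}$ at coordinate $i$ is $\frac{\sqrt{t}}{2}(q(i)^{-3/2} + (1-q(i))^{-3/2})$, so the stability bound reads $\frac{1}{\sqrt{t}}\sum_{i \in V_2} \min\{q_t(i)^{3/2},(1-q_t(i))^{3/2}\} \hat\ell_t(i)^2$, which in expectation, using $\E[\hat\ell_t(i)^2]\leq 1/P_t(i)\leq 2/q_t(i)$, is $O(\frac{1}{\sqrt{t}}\sum_{i \in V_2}\sqrt{q_t(i)(1-q_t(i))})$. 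The penalty for the $V_2$-part is $(\sqrt{t+1}-\sqrt{t})\sum_{i\in V_2}(-g(q_{t+1}(i))) \leq \frac{1}{\sqrt{t}}\sum_{i\in V_2}(\sqrt{q_{t+1}(i)}+\sqrt{1-q_{t+1}(i)})$, and using $\sqrt{x}+\sqrt{1-x} = O(\sqrt{x(1-x)}+1)$ together with the fact that the constant term telescopes against $\psi_{T+1}^{(2)}(\mu_{i^*})$ as noted above, one arrives at the form $R_T^{(2)}$ in \eqref{eq:defR2}.

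The hard part will be the $V_2$ analysis: getting the refined $\sqrt{q(1-q)}$ dependence rather than the cruder $\sqrt{q}$ bound requires exploiting the full symmetric $g(x) = -2\sqrt{x}-2\sqrt{1-x}$ and pairing the stability contribution with the penalty increment carefully, and one also has to verify that $q_{t+1}$ stays coordinate-wise bounded away from $\{0,1\}$ enough for the Taylor expansion on $V_2$ to be valid (this is where both the exploration condition $\gamma_t \geq 2|D|/\beta_t$ and the large initial $\beta_1 \geq 8|D|$ enter). The corresponding $V_1$ stability argument is more standard, but it does rely on the same exploration lower bound to ensure $\hat\ell_t(i) \leq |D|/\gamma_t$ uniformly on $V_1$, so both pieces share the same structural reason the lemma's hypothesis is phrased as $\gamma_t \geq 2|D|/\beta_t$.
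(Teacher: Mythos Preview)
Your outline matches the paper's proof closely: the same separable decomposition $\psi_t=\psi_t^{(1)}+\psi_t^{(2)}$, the same handling of the penalty terms (in particular, subtracting the constant $1$ from $\sqrt{q}+\sqrt{1-q}$ and telescoping it against $\psi_{T+1}^{(2)}(\mu_{i^*})$, then using $\sqrt{x}+\sqrt{1-x}-1\le\sqrt{x(1-x)}$), the same use of $P_t(i)\ge\gamma_t/|D|$ on $V_1$ and $P_t(i)\ge q_t(i)/2$ on $V_2$, and the same identification of $a_1\beta_1$ as the leftover initial term. The one substantive technical difference is in how the stability term is bounded. You propose the local-norm (Hessian/Taylor) route and flag as the ``hard part'' the need to check that $q_{t+1}$ stays close enough to $q_t$ for the quadratic approximation to be valid. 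The paper sidesteps this entirely: it first proves a pointwise lemma (Lemma~\ref{lem:boundBregWeak}) giving, by explicit maximization in each coordinate, the exact bound
\[
\langle\ell,p-q\rangle-D_t(q,p)\le \beta_t\!\sum_{i\in V_1}\min\{p_i\,\xi(\ell_i/\beta_t),(1-p_i)\,\xi(-\ell_i/\beta_t)\}+\sqrt{t}\!\sum_{i\in V_2}\min\{\sqrt{p_i}\,\zeta(\cdot),\sqrt{1-p_i}\,\zeta(\cdot)\}
\]
with $\xi(x)=e^{-x}+x-1$ and $\zeta(x)=x^2/(1+x)$. Since $\xi(x)\le x^2$ for $x\ge-1$ and $\zeta(x)\le 2x^2$ for $x\ge-\tfrac12$, the condition $\gamma_t\ge 2|D|/\beta_t$ (giving $\hat\ell_t(i)/\beta_t\le\tfrac12$ on $V_1$) and a short case split $q_t(i)\lessgtr 15/16$ on $V_2$ yield the quadratic bounds directly, with no Taylor-validity argument needed. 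Both routes land on the same $O(|D|b_t/(\gamma_t\beta_t))$ and $O(\tfrac{1}{\sqrt{t}}\sqrt{q_t(i)(1-q_t(i))})$ expressions, but the paper's closed-form $\xi,\zeta$ lemma turns your ``hard part'' into a two-line computation.
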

When showing \eqref{eq:defR1} and \eqref{eq:defR2},
we use techniques used in the proofs of \citet[Theorem 2]{alon2015online} and
of \citet[Lemma 11]{zimmert2021tsallis}.
We then have the following bound:
\begin{proposition}
  \label{prop:weak}
  If $\gamma_t$ and $\beta_t$ are given by \eqref{eq:defgammabetaweak},
  $R^{(1)}_T$ satisfying \eqref{eq:defR1} is bounded as
  % we have
  \begin{align}
    R_T^{(1)}
    =
    O
    \left(
      \E \left[
      c_1 
      B_T^{2/3}
      +
      \tilde{c}
      \sqrt{
        c_1^2
        +
        \left( \ln K  + A_T \right)
        \left( c_1 + B_T^{1/3} \right)
      }
      \right]
    \right)
    ,
  \end{align}
  where 
  $A_T = \sum_{t=1}^T a_t$,
  $B_T = \sum_{t=1}^T b_t$
  and 
  $\tilde{c} = O \left( \frac{1}{\sqrt{c_1}} \left( \frac{|D| \ln T}{c_2} + {c_2} \right) \right)$.
\end{proposition}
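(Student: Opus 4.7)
The plan is to bound each of the three summands in the decomposition~\eqref{eq:defR1} of $R_T^{(1)}$ separately, exploiting the specific form of the update rule~\eqref{eq:defgammabetaweak}. Since $\gamma_t = \gamma'_t + \frac{2|D|}{\beta_t}$, this yields four subtotals to estimate, whose combination produces the two terms on the right-hand side of the proposition.

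For the exploration part $\sum_t \gamma'_t$, substituting the definition of $\gamma'_t$ reduces the task to bounding $\sum_t \frac{b_t}{c_1 + (\sum_{s\leq t} b_s)^{1/3}}$, for which the standard integral (telescoping) inequality $\sum_{t=1}^T \frac{b_t}{(\sum_{s \leq t} b_s)^{1/3}} \leq \tfrac{3}{2} B_T^{2/3}$ gives $\sum_t \gamma'_t = O(c_1 B_T^{2/3})$. This is exactly the $c_1 B_T^{2/3}$ contribution in the claimed bound.

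For the third summand $\sum_t (\beta_{t+1}-\beta_t) a_{t+1}$, I substitute the update rule so that each term becomes $\frac{c_2 u_t}{\sqrt{c_1 + \sum_{s<t} u_s}}$, where $u_s := b_s a_{s+1} / \gamma'_s$. The analogous telescoping inequality $\sum_t \frac{u_t}{\sqrt{X + \sum_{s\leq t} u_s}} \leq 2\sqrt{X + U_T}$ (with $U_T = \sum_t u_t$) bounds the sum by $O(c_2\sqrt{c_1 + U_T})$. Plugging in the lower bound $\gamma'_s \geq \frac{c_1 b_s}{4(c_1 + B_T^{1/3})}$ gives $U_T \leq \frac{4(c_1 + B_T^{1/3})(A_T + \ln K)}{c_1}$, after noting $a_{T+1} \leq \ln K$ to handle the index shift. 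Substituting back produces the $\frac{c_2}{\sqrt{c_1}}\sqrt{c_1^2 + (\ln K + A_T)(c_1 + B_T^{1/3})}$ contribution, i.e.\ the $c_2$ portion of $\tilde c$.

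The main obstacle is the remaining pair $\sum_t \frac{2|D|}{\beta_t}$ and $\sum_t \frac{|D| b_t}{\gamma_t \beta_t}$, which together should contribute the $\frac{|D|\ln T}{c_2 \sqrt{c_1}}$ portion of $\tilde c$ inside the same square root. Using $\gamma_t \geq \gamma'_t$ one gets $\frac{b_t}{\gamma_t} \leq \frac{4(c_1 + B_t^{1/3})}{c_1}$, so both sums reduce to controlling $\sum_t \frac{c_1 + B_t^{1/3}}{\beta_t}$. My plan is to exploit the growth guaranteed by~\eqref{eq:defgammabetaweak}: since $\beta_{t+1} - \beta_t = \frac{4c_2(c_1 + B_t^{1/3})/c_1}{\sqrt{c_1 + \sum_{s<t} u_s}}$, the factor $\frac{c_1 + B_t^{1/3}}{\beta_t}$ can be rewritten as $\frac{c_1 \sqrt{c_1 + \sum_{s<t} u_s}}{4c_2 \beta_t}(\beta_{t+1} - \beta_t)$; a telescoping/Cauchy--Schwarz argument then trades the factor $(\beta_{t+1}-\beta_t)/\beta_t$ against $\ln(\beta_{T+1}/\beta_1) = O(\ln T)$ (using $\beta_1 \geq c_2$ and an \emph{a priori} polynomial upper bound on $\beta_{T+1}$) while pulling $\sqrt{c_1 + \sum_{s<t}u_s} \leq \sqrt{c_1 + U_T}$ out of the sum. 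Combined with the estimate of $U_T$ from the previous step, this is the key quantitative step that produces the coefficient $\frac{|D|\ln T}{c_2 \sqrt{c_1}}$ times the same square-root factor. Adding the four contributions finally yields $R_T^{(1)} = O\!\left(c_1 B_T^{2/3} + \tilde c \sqrt{c_1^2 + (\ln K + A_T)(c_1 + B_T^{1/3})}\right)$ as claimed.
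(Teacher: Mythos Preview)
Your proposal follows the paper's proof essentially line for line: the same four-piece decomposition, the same telescoping bounds for $\sum_t \gamma'_t$ and $\sum_t(\beta_{t+1}-\beta_t)a_{t+1}$, and the same estimate $U_T = O\bigl((c_1+B_T^{1/3})(\ln K + A_T)/c_1\bigr)$ obtained from the lower bound on $\gamma'_s$.

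Two small points you gloss over that the paper handles explicitly. First, in step~2 the denominator contains $\sum_{s<t}u_s$, not $\sum_{s\le t}u_s$, so the standard telescoping inequality does not apply directly; the paper closes this by proving $u_t \le 8(c_1+U_{t-1})$ (using $a_{t+1}\le 2\ln K \le c_1$ and $b_{t}\le 1$), which gives $\sqrt{c_1+U_{t-1}}\ge \tfrac{1}{3}\sqrt{c_1+U_t}$. Second, trading $\sum_t(\beta_{t+1}-\beta_t)/\beta_t$ for $\ln(\beta_{T+1}/\beta_1)$ requires $(\beta_{t+1}-\beta_t)/\beta_t$ to be uniformly bounded, which is not automatic. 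The paper sidesteps this by working with the auxiliary sequence $w_t:=b_t/\gamma'_t$, $W_t=\sum_{s\le t}w_s$: it proves the lower bound $\beta_t \ge c_2(1+W_{t-1})/\sqrt{c_1+U_t}$ and then $\sum_t w_t/(1+W_{t-1}) = O(\ln W_T)=O(\ln T)$, relying on $w_t/(1+W_{t-1})\le 8$ (from $w_{t+1}\le 2w_t$, a consequence of $b_t\le 1$). Your rewriting via $\beta_t$ is the same argument in disguise and would ultimately need the same fact; once that is checked, your route goes through.
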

% In the following,
% we show  ....
Values of $A_T$ and $B_T$ in this proposition can be bounded with $Q(i^*)$ defined in \eqref{eq:defQ},
as follows:
\begin{lemma}
  \label{lem:boundATBT}
  $A_T$ and $B_T$ defined in Proposition~\ref{prop:weak} satisfy
  $A_T \leq 2 Q(i^*) \ln \frac{\mathrm{e} KT}{ Q(i^*) }$
  and $B_T \leq 2 Q(i^*)$ .
\end{lemma}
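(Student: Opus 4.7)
}
My plan is to bound each summand $b_t$ and $a_t$ pointwise in $t$ by a quantity that depends only on $\epsilon_t := 1 - q_t(i^*)$, and then sum (and apply Jensen's inequality once more for the logarithmic bound on $A_T$).

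The bound $B_T \leq 2Q(i^*)$ is straightforward. Splitting $b_t = \sum_{i \in V_1} q_t(i)(1 - q_t(i))$ into the contribution from $i = i^*$ (which is present only when $i^* \in V_1$) and the contributions from $i \in V_1 \setminus \{i^*\}$, I would use $q_t(i^*)(1-q_t(i^*)) \leq 1 - q_t(i^*) = \epsilon_t$ and $q_t(i)(1 - q_t(i)) \leq q_t(i)$, combined with $\sum_{i \neq i^*} q_t(i) \leq \epsilon_t$. This yields $b_t \leq 2\epsilon_t$, and summing over $t$ gives the claim.

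For $A_T$, the key ingredient is a pointwise bound on the binary entropy $H_2(x) := -h(x)$. Using the elementary inequality $-(1-x)\ln(1-x) \leq x$ (which follows from differentiating $x + (1-x)\ln(1-x)$) together with the symmetry $H_2(x) = H_2(1-x)$, one obtains $H_2(x) \leq r(1 + \ln(1/r))$ with $r := \min(x, 1-x)$. Setting $r_t(i) := \min(q_t(i), 1 - q_t(i))$, the same case analysis used for $B_T$ shows that $\sum_{i \in V_1} r_t(i) \leq 2\epsilon_t$. Because $f(x) = x(1 + \ln(1/x)) = x\ln(e/x)$ is concave, Jensen's inequality applied at the scale $|V_1|$ gives
\[
a_t \leq \sum_{i \in V_1} r_t(i)\bigl(1 + \ln(1/r_t(i))\bigr) \leq R_t \ln(e|V_1|/R_t),
\]
where $R_t := \sum_i r_t(i)$. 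Since $x\ln(e|V_1|/x)$ is increasing on $(0, |V_1|)$ and $R_t \leq 2\epsilon_t \leq |V_1|$, we may replace $R_t$ by $2\epsilon_t$ and use $|V_1| \leq K$ to get $a_t \leq 2\epsilon_t \ln(eK/(2\epsilon_t))$.

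Finally, summing over $t$ I would apply Jensen's inequality once more, now to the concave function $x \mapsto x\ln(eK/x)$ across the $T$ time steps (equivalently, treating $\{\epsilon_t/Q(i^*)\}_t$ as a probability distribution and using concavity of $\ln$):
\[
\sum_{t=1}^T 2\epsilon_t \ln\!\tfrac{eK}{2\epsilon_t} = 2Q(i^*)\sum_{t} \tfrac{\epsilon_t}{Q(i^*)}\ln\!\tfrac{eK}{2\epsilon_t} \leq 2Q(i^*)\ln\!\tfrac{eKT}{2Q(i^*)} \leq 2Q(i^*)\ln\!\tfrac{\mathrm{e}KT}{Q(i^*)}.
\]
The trickiest step is identifying the right pointwise majorant for $H_2$: using $H_2(x) \leq x\ln(e/x)$ directly wastes a factor when $i^*$ has $q_t(i^*) \approx 1$, so the symmetric form with $r = \min(x, 1-x)$ is essential in order to control the $i = i^*$ contribution by $\epsilon_t$ rather than by $1 - \epsilon_t$. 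The degenerate case $Q(i^*) = 0$ is handled by the convention $0 \cdot \ln(1/0) = 0$, and $K \geq 2$ (the problem is trivial otherwise) ensures $\ln(eK/(2\epsilon_t)) \geq 0$ so no bounds flip sign.
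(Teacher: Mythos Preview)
Your argument is correct and follows essentially the same strategy as the paper: bound each $a_t$ pointwise by $O\bigl(\epsilon_t \ln(K/\epsilon_t)\bigr)$ with $\epsilon_t = 1-q_t(i^*)$, and then apply Jensen across $t$. The paper derives the pointwise bound slightly differently: it first enlarges the sum from $V_1$ to all of $V$, splits $-h(x) = x\ln\tfrac{1}{x} + (1-x)\ln\tfrac{1}{1-x}$, controls the first piece via the Shannon-entropy estimate \eqref{eq:boundH} already established for Lemma~\ref{lem:boundat}, and the second piece by $(1-p(i^*))\ln\tfrac{e}{1-p(i^*)}$. You instead exploit the symmetry of $H_2$ directly through $H_2(x)\le r\ln(e/r)$ with $r=\min(x,1-x)$. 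Both routes land on the same $a_t \le 2\epsilon_t\ln\tfrac{eK}{\epsilon_t}$-type inequality, and the final Jensen step is identical; your version is a touch more self-contained since it does not appeal back to \eqref{eq:boundH}.

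One small technical slip: the assertion $2\epsilon_t \le |V_1|$ is not guaranteed (e.g.\ $|V_1|=1$ and $\epsilon_t>\tfrac12$), and since $x\ln(e|V_1|/x)$ is \emph{decreasing} for $x>|V_1|$, the replacement $R_t\to 2\epsilon_t$ could go the wrong way there. The fix is immediate and costless: first replace $|V_1|$ by $K$ inside the logarithm (this only enlarges the bound), and then use monotonicity of $x\ln(eK/x)$ on $(0,K]$ together with $2\epsilon_t\le 2\le K$, which you already note holds.
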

Further, $R^{(2)}_T$ in Lemma~\ref{lem:decomposeweak} can be bounded with $\bar{Q}$ as follows:
\begin{lemma}
  \label{lem:boundsqrt}
  $R^{(2)}_T$ satisfying \eqref{eq:defR2} is bounded as
  $R^{(2)}_T = O\left( \sqrt{ |V_2| \ln T \cdot \bar{Q}  } \right)$.
\end{lemma}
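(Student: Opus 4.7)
The plan is to bound the per-round contribution $\sum_{i \in V_2}\sqrt{q_t(i)(1-q_t(i))}$ by something proportional to $\sqrt{|V_2|(1-q_t(i^*))}$ for an arbitrary $i^* \in V$, and then apply Cauchy--Schwarz in $t$ together with Jensen's inequality to convert the resulting sum to $\sqrt{|V_2|\ln T\cdot \bar Q}$.

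First, I would fix any $i^* \in V$ and use the inequality $\sqrt{q_t(i)(1-q_t(i))} \le \sqrt{q_t(i)}$ for all $i \in V_2$ with $i \ne i^*$, while for $i = i^*$ (which only matters when $i^* \in V_2$) I would instead use $\sqrt{q_t(i^*)(1-q_t(i^*))} \le \sqrt{1-q_t(i^*)}$. Since $\sum_{i \in V_2 \setminus \{i^*\}} q_t(i) \le 1 - q_t(i^*)$ regardless of whether $i^*$ lies in $V_1$ or $V_2$, Cauchy--Schwarz yields
\begin{equation*}
\sum_{i \in V_2 \setminus \{i^*\}} \sqrt{q_t(i)} \;\le\; \sqrt{|V_2|\sum_{i \in V_2 \setminus \{i^*\}} q_t(i)} \;\le\; \sqrt{|V_2|(1-q_t(i^*))}.
\end{equation*}
Combining with the isolated $i^*$ term gives $\sum_{i \in V_2}\sqrt{q_t(i)(1-q_t(i))} \le (\sqrt{|V_2|}+1)\sqrt{1-q_t(i^*)} = O(\sqrt{|V_2|(1-q_t(i^*))})$.

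Next, I would sum over $t$ and apply Cauchy--Schwarz a second time, using $\sum_{t=1}^T \frac{1}{t} = O(\ln T)$:
\begin{equation*}
\sum_{t=1}^T \frac{1}{\sqrt{t}} \sqrt{1-q_t(i^*)} \;\le\; \sqrt{\sum_{t=1}^T \frac{1}{t}} \cdot \sqrt{\sum_{t=1}^T (1-q_t(i^*))} \;=\; O\bigl(\sqrt{\ln T \cdot Q(i^*)}\bigr),
\end{equation*}
where $Q(i^*)$ is the quantity defined in \eqref{eq:defQ}. Substituting into \eqref{eq:defR2} gives $R_T^{(2)} = O(\sqrt{|V_2|\ln T}\cdot \E[\sqrt{Q(i^*)}])$.

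Finally, I would use Jensen's inequality together with the concavity of the square root to obtain $\E[\sqrt{Q(i^*)}] \le \sqrt{\bar Q(i^*)}$, and then take the minimum over $i^* \in V$ to replace $\bar Q(i^*)$ by $\bar Q$. This produces the claimed bound $R_T^{(2)} = O(\sqrt{|V_2|\ln T \cdot \bar Q})$. There is no real obstacle here: the only point that requires a modicum of care is handling the case $i^* \in V_2$ so that the constant in front of $\sqrt{|V_2|}$ does not blow up, which is resolved by isolating the single term $i = i^*$ before applying Cauchy--Schwarz.
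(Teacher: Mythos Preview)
Your proposal is correct and follows essentially the same approach as the paper: two applications of Cauchy--Schwarz (once across $i\in V_2$ and once across $t$), together with the elementary bound $\sum_{i\neq i^*} q_t(i)\le 1-q_t(i^*)$, followed by Jensen's inequality to pass to $\bar Q$. The only cosmetic difference is that the paper keeps $q_t(i)(1-q_t(i))$ intact through the first Cauchy--Schwarz and bounds $\sum_{t}\sum_{i\in V_2} q_t(i)(1-q_t(i))\le 2Q(i^*)$ afterward, whereas you simplify each summand to $\sqrt{q_t(i)}$ (or $\sqrt{1-q_t(i^*)}$ for $i=i^*$) before applying Cauchy--Schwarz; the resulting constants and structure are the same.
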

\textit{Proof of Theorem~\ref{thm:weak}}.
From Proposition~\ref{prop:weak} and Lemma~\ref{lem:boundATBT},
if $\bar{Q} \geq c_1^3$,
we have
\begin{align*}
  R_T^{(1)}
  &
  =
  O \left(
    \E \left[
    c_1 {Q}(i^*)^{2/3}
    +
    \tilde{c} 
    \sqrt{ {Q}(i^*) \ln (KT) {Q}(i^*)^{1/3} }
    \right]
  \right)
  \leq
  O \left(
    \left(
    c_1 
    % Q(i^*)^{2/3}
    +
    \tilde{c} 
    \sqrt{\ln (KT)}
    \right)
    \bar{Q}^{2/3}
    % \sqrt{ Q(i^*) \ln (KT) Q(i^*)^1/3 }
  \right) ,
\end{align*}
where the inequality follows from Jensen's inequality.
Hence,
there exists $\hat{c}$ such that
$R_T^{(1)} \leq \hat{c} \cdot \bar{Q}^{2/3} $
and $\hat{c} = O \left( c_1 + \tilde{c}\sqrt{ \ln (KT)} \right) $.
Combining this with Lemma~\ref{lem:boundsqrt},
we obtain \eqref{eq:thmweak1}.
As we have $\bar{Q}\leq T$,
in adversarial regimes with $T \geq K^3$,
it follows from \eqref{eq:thmweak1} that
$R_T = O \left( \hat{c} \cdot \max\{ T^{2/3}, c_1^2 \} + (K \ln T \cdot T )^{1/2} \right)
= O \left( \hat{c} \cdot T^{2/3} \right) $,
where the second equality follows from the $T \geq K^3$.
Let us next show \eqref{eq:thmweak2}.
From \eqref{eq:thmweak1} and Lemma~\ref{lem:selfQ},
for any $\lambda \in (0, 1]$,
we have
\begin{align*}
  R_T
  =
  (1 + \lambda)R_T - \lambda R_T
  =
  O\left(
    ( 1 + \lambda ) \hat{c} \cdot \bar{Q}^{2/3}
    +
    (1 + \lambda) (|V_2| \ln T \cdot \bar{Q})^{1/2}
    -
    \lambda \Delta_{\min} \bar{Q}
    +
    \lambda C
  \right) .
\end{align*}
By an argument similar to the proof of Theorem~\ref{thm:strong},
we have
% \begin{align*}
  $
  (1 + \lambda) (|V_2| \ln T \cdot \bar{Q})^{1/2}
  -
  \lambda \Delta_{\min} \bar{Q}
  =
  O \left(
    \left(1 + \frac{1}{\lambda} \right)
    \frac{|V_2| \ln T}{\Delta_{\min}} 
  \right) .
  $
% \end{align*}
We also have
% \begin{align}
  $
  ( 1 + \lambda ) \hat{c} \cdot \bar{Q}^{2/3}
  -
  \lambda \Delta_{\min} \bar{Q}
  % &
  =
  % O \left(
    \left( \frac{(1 + \lambda)^3 \hat{c}^3}{ \lambda^2 \Delta_{\min}^2 } \right)^{1/3}
    \left( \lambda \Delta_{\min} \bar{Q} \right)^{2/3}
    -
    \lambda \Delta_{\min} \bar{Q}
  % \right)
  % \\
  % &
  =
  O \left(
    \frac{(1 + \lambda)^3 \hat{c}^3}{ \lambda^2 \Delta_{\min}^2 }
  \right)
  =
  O \left(
    \left(
      1 + \frac{1}{\lambda^2}
    \right)
    \frac{\hat{c}^3}{ \Delta_{\min}^2 }
  \right),
  $
% \end{align}
where the second equality follows from
$x^{1/3} y^{2/3} \leq \frac{1}{3} x + \frac{2}{3} y$ that holds for any $x, y \geq 0$.
Combining these inequalities,
we obtain
% \begin{align}
  $
  R_T =
  O \left(
    \left(
      1 + \frac{1}{\lambda^2}
    \right)
    \frac{\hat{c}^3}{ \Delta_{\min}^2 }
    +
    \left(1 + \frac{1}{\lambda} \right)
    \frac{|V_2| \ln T}{\Delta_{\min}} 
    +
    \lambda C
  \right).
  $
% \end{align}
By choosing $\lambda$ that minimizes the RHS,
we obtain \eqref{eq:thmweak2}.
\qed

\section*{Acknowledgment}
TT was supported by JST, ACT-X Grant Number JPMJAX210E, Japan and JSPS, KAKENHI Grant Number JP21J21272, Japan.
JH was supported by JSPS, KAKENHI Grant Number JP21K11747, Japan.

\bibliographystyle{abbrvnat}
\bibliography{reference}

\begin{thebibliography}{43}
\providecommand{\natexlab}[1]{#1}
\providecommand{\url}[1]{\texttt{#1}}
\expandafter\ifx\csname urlstyle\endcsname\relax
  \providecommand{\doi}[1]{doi: #1}\else
  \providecommand{\doi}{doi: \begingroup \urlstyle{rm}\Url}\fi

\bibitem[Alon et~al.(2015)Alon, Cesa-Bianchi, Dekel, and Koren]{alon2015online}
N.~Alon, N.~Cesa-Bianchi, O.~Dekel, and T.~Koren.
\newblock Online learning with feedback graphs: Beyond bandits.
\newblock \emph{Journal of Machine Learning Research}, 40\penalty0 (2015),
  2015.

\bibitem[Alon et~al.(2017)Alon, Cesa-Bianchi, Gentile, Mannor, Mansour, and
  Shamir]{alon2017nonstochastic}
N.~Alon, N.~Cesa-Bianchi, C.~Gentile, S.~Mannor, Y.~Mansour, and O.~Shamir.
\newblock Nonstochastic multi-armed bandits with graph-structured feedback.
\newblock \emph{SIAM Journal on Computing}, 46\penalty0 (6):\penalty0
  1785--1826, 2017.

\bibitem[Amir et~al.(2020)Amir, Attias, Koren, Mansour, and
  Livni]{amir2020prediction}
I.~Amir, I.~Attias, T.~Koren, Y.~Mansour, and R.~Livni.
\newblock Prediction with corrupted expert advice.
\newblock \emph{Advances in Neural Information Processing Systems},
  33:\penalty0 14315--14325, 2020.

\bibitem[Amir et~al.(2022)Amir, Azov, Koren, and Livni]{amir2022better}
I.~Amir, G.~Azov, T.~Koren, and R.~Livni.
\newblock Better best of both worlds bounds for bandits with switching costs.
\newblock \emph{Advances in Neural Information Processing Systems}, 35, 2022.

\bibitem[Auer et~al.(2002{\natexlab{a}})Auer, Cesa-Bianchi, and
  Fischer]{auer2002finite}
P.~Auer, N.~Cesa-Bianchi, and P.~Fischer.
\newblock Finite-time analysis of the multiarmed bandit problem.
\newblock \emph{Machine learning}, 47\penalty0 (2):\penalty0 235--256,
  2002{\natexlab{a}}.

\bibitem[Auer et~al.(2002{\natexlab{b}})Auer, Cesa-Bianchi, Freund, and
  Schapire]{auer2002nonstochastic}
P.~Auer, N.~Cesa-Bianchi, Y.~Freund, and R.~E. Schapire.
\newblock The nonstochastic multiarmed bandit problem.
\newblock \emph{SIAM Journal on Computing}, 32\penalty0 (1):\penalty0 48--77,
  2002{\natexlab{b}}.

\bibitem[Bubeck and Slivkins(2012)]{bubeck2012best}
S.~Bubeck and A.~Slivkins.
\newblock The best of both worlds: Stochastic and adversarial bandits.
\newblock In \emph{Conference on Learning Theory}, pages 42--1. JMLR Workshop
  and Conference Proceedings, 2012.

\bibitem[Caron et~al.(2012)Caron, Kveton, Lelarge, and
  Bhagat]{caron2012leveraging}
S.~Caron, B.~Kveton, M.~Lelarge, and S.~Bhagat.
\newblock Leveraging side observations in stochastic bandits.
\newblock In \emph{Proceedings of the Twenty-Eighth Conference on Uncertainty
  in Artificial Intelligence}, pages 142--151, 2012.

\bibitem[Cohen et~al.(2016)Cohen, Hazan, and Koren]{cohen2016online}
A.~Cohen, T.~Hazan, and T.~Koren.
\newblock Online learning with feedback graphs without the graphs.
\newblock In \emph{International Conference on Machine Learning}, pages
  811--819. PMLR, 2016.

\bibitem[Erez and Koren(2021)]{erez2021towards}
L.~Erez and T.~Koren.
\newblock Towards best-of-all-worlds online learning with feedback graphs.
\newblock \emph{Advances in Neural Information Processing Systems}, 34, 2021.

\bibitem[Esposito et~al.(2022)Esposito, Fusco, van~der Hoeven, and
  Cesa-Bianchi]{esposito2022learning}
E.~Esposito, F.~Fusco, D.~van~der Hoeven, and N.~Cesa-Bianchi.
\newblock Learning on the edge: Online learning with stochastic feedback
  graphs.
\newblock \emph{Advances in Neural Information Processing Systems}, 35, 2022.

\bibitem[Freund and Schapire(1997)]{freund1997decision}
Y.~Freund and R.~E. Schapire.
\newblock A decision-theoretic generalization of on-line learning and an
  application to boosting.
\newblock \emph{Journal of computer and system sciences}, 55\penalty0
  (1):\penalty0 119--139, 1997.

\bibitem[Gaillard et~al.(2014)Gaillard, Stoltz, and
  Van~Erven]{gaillard2014second}
P.~Gaillard, G.~Stoltz, and T.~Van~Erven.
\newblock A second-order bound with excess losses.
\newblock In \emph{Conference on Learning Theory}, pages 176--196. PMLR, 2014.

\bibitem[Ghari and Shen(2022)]{ghari2022online}
P.~M. Ghari and Y.~Shen.
\newblock Online learning with probabilistic feedback.
\newblock In \emph{ICASSP 2022-2022 IEEE International Conference on Acoustics,
  Speech and Signal Processing (ICASSP)}, pages 4183--4187. IEEE, 2022.

\bibitem[Gupta et~al.(2019)Gupta, Koren, and Talwar]{gupta2019better}
A.~Gupta, T.~Koren, and K.~Talwar.
\newblock Better algorithms for stochastic bandits with adversarial
  corruptions.
\newblock In \emph{Conference on Learning Theory}, pages 1562--1578. PMLR,
  2019.

\bibitem[Hu et~al.(2020)Hu, Mehta, and Pan]{hu2020problem}
B.~Hu, N.~A. Mehta, and J.~Pan.
\newblock Problem-dependent regret bounds for online learning with feedback
  graphs.
\newblock In \emph{Uncertainty in Artificial Intelligence}, pages 852--861.
  PMLR, 2020.

\bibitem[Ito(2021{\natexlab{a}})]{ito2021hybrid}
S.~Ito.
\newblock Hybrid regret bounds for combinatorial semi-bandits and adversarial
  linear bandits.
\newblock \emph{Advances in Neural Information Processing Systems}, 34,
  2021{\natexlab{a}}.

\bibitem[Ito(2021{\natexlab{b}})]{ito2021optimal}
S.~Ito.
\newblock On optimal robustness to adversarial corruption in online decision
  problems.
\newblock In \emph{Advances in Neural Information Processing Systems},
  2021{\natexlab{b}}.

\bibitem[Ito(2021{\natexlab{c}})]{ito2021parameter}
S.~Ito.
\newblock Parameter-free multi-armed bandit algorithms with hybrid
  data-dependent regret bounds.
\newblock In \emph{Conference on Learning Theory}, pages 2552--2583. PMLR,
  2021{\natexlab{c}}.

\bibitem[Ito(2022)]{ito2022revisiting}
S.~Ito.
\newblock Revisiting online submodular minimization: Gap-dependent regret
  bounds, best of both worlds and adversarial robustness.
\newblock In \emph{International Conference on Machine Learning}, pages
  9678--9694. PMLR, 2022.

\bibitem[Ito et~al.(2022)Ito, Tsuchiya, and Honda]{ito2022adversarially}
S.~Ito, T.~Tsuchiya, and J.~Honda.
\newblock Adversarially robust multi-armed bandit algorithm with
  variance-dependent regret bounds.
\newblock In \emph{Conference on Learning Theory}, pages 1421--1422. PMLR,
  2022.

\bibitem[Jin and Luo(2020)]{jin2020simultaneously}
T.~Jin and H.~Luo.
\newblock Simultaneously learning stochastic and adversarial episodic mdps with
  known transition.
\newblock \emph{Advances in Neural Information Processing Systems},
  33:\penalty0 16557--16566, 2020.

\bibitem[Jin et~al.(2021)Jin, Huang, and Luo]{jin2021best}
T.~Jin, L.~Huang, and H.~Luo.
\newblock The best of both worlds: stochastic and adversarial episodic mdps
  with unknown transition.
\newblock \emph{Advances in Neural Information Processing Systems}, 34, 2021.

\bibitem[Koc{\'a}k et~al.(2016)Koc{\'a}k, Neu, and Valko]{kocak2016online}
T.~Koc{\'a}k, G.~Neu, and M.~Valko.
\newblock Online learning with erd{\H{o}}s-r{\'e}nyi side-observation graphs.
\newblock In \emph{Uncertainty in Artificial Intelligence}, 2016.

\bibitem[Kong et~al.(2022)Kong, Zhou, and Li]{kong2022simultaneously}
F.~Kong, Y.~Zhou, and S.~Li.
\newblock Simultaneously learning stochastic and adversarial bandits with
  general graph feedback.
\newblock In \emph{International Conference on Machine Learning}, pages
  11473--11482. PMLR, 2022.

\bibitem[Lai et~al.(1985)Lai, Robbins, et~al.]{lai1985asymptotically}
T.~L. Lai, H.~Robbins, et~al.
\newblock Asymptotically efficient adaptive allocation rules.
\newblock \emph{Advances in applied mathematics}, 6\penalty0 (1):\penalty0
  4--22, 1985.

\bibitem[Lattimore and Szepesv{\'a}ri(2020)]{lattimore2018bandit}
T.~Lattimore and C.~Szepesv{\'a}ri.
\newblock \emph{Bandit algorithms}.
\newblock Cambridge University Press, 2020.

\bibitem[Lee et~al.(2021)Lee, Luo, Wei, Zhang, and Zhang]{lee2021achieving}
C.-W. Lee, H.~Luo, C.-Y. Wei, M.~Zhang, and X.~Zhang.
\newblock Achieving near instance-optimality and minimax-optimality in
  stochastic and adversarial linear bandits simultaneously.
\newblock In \emph{International Conference on Machine Learning}, pages
  6142--6151, 2021.

\bibitem[Littlestone and Warmuth(1994)]{littlestone1994weighted}
N.~Littlestone and M.~K. Warmuth.
\newblock The weighted majority algorithm.
\newblock \emph{Information and computation}, 108\penalty0 (2):\penalty0
  212--261, 1994.

\bibitem[Lu et~al.(2021{\natexlab{a}})Lu, Hu, and
  Zhang]{lu2021stochastic-nonstationary}
S.~Lu, Y.~Hu, and L.~Zhang.
\newblock Stochastic bandits with graph feedback in non-stationary
  environments.
\newblock In \emph{Proceedings of the 35th AAAI Conference on Artificial
  Intelligence (AAAI)}, 2021{\natexlab{a}}.

\bibitem[Lu et~al.(2021{\natexlab{b}})Lu, Wang, and
  Zhang]{lu2021stochastic-corruptions}
S.~Lu, G.~Wang, and L.~Zhang.
\newblock Stochastic graphical bandits with adversarial corruptions.
\newblock In \emph{Proceedings of the 35th AAAI Conference on Artificial
  Intelligence (AAAI)}, 2021{\natexlab{b}}.

\bibitem[Luo and Schapire(2015)]{luo2015achieving}
H.~Luo and R.~E. Schapire.
\newblock Achieving all with no parameters: {AdaNormalHedge}.
\newblock In \emph{Conference on Learning Theory}, pages 1286--1304. PMLR,
  2015.

\bibitem[Lykouris et~al.(2018)Lykouris, Mirrokni, and
  Paes~Leme]{lykouris2018stochastic}
T.~Lykouris, V.~Mirrokni, and R.~Paes~Leme.
\newblock Stochastic bandits robust to adversarial corruptions.
\newblock In \emph{Proceedings of the 50th Annual ACM SIGACT Symposium on
  Theory of Computing}, pages 114--122, 2018.

\bibitem[Mannor and Shamir(2011)]{mannor2011bandits}
S.~Mannor and O.~Shamir.
\newblock From bandits to experts: On the value of side-observations.
\newblock \emph{Advances in Neural Information Processing Systems}, 24, 2011.

\bibitem[Masoudian et~al.(2022)Masoudian, Zimmert, and
  Seldin]{masoudian2022best}
S.~Masoudian, J.~Zimmert, and Y.~Seldin.
\newblock A best-of-both-worlds algorithm for bandits with delayed feedback.
\newblock \emph{Advances in Neural Information Processing Systems}, 35, 2022.

\bibitem[McMahan(2011)]{mcmahan2011follow}
B.~McMahan.
\newblock Follow-the-regularized-leader and mirror descent: Equivalence
  theorems and l1 regularization.
\newblock In \emph{Proceedings of the Fourteenth International Conference on
  Artificial Intelligence and Statistics}, pages 525--533. JMLR Workshop and
  Conference Proceedings, 2011.

\bibitem[Rouyer et~al.(2021)Rouyer, Seldin, and
  Cesa-Bianchi]{rouyer2021algorithm}
C.~Rouyer, Y.~Seldin, and N.~Cesa-Bianchi.
\newblock An algorithm for stochastic and adversarial bandits with switching
  costs.
\newblock In \emph{International Conference on Machine Learning}, pages
  9127--9135. PMLR, 2021.

\bibitem[Rouyer et~al.(2022)Rouyer, van~der Hoeven, Cesa-Bianchi, and
  Seldin]{rouyer2022near}
C.~Rouyer, D.~van~der Hoeven, N.~Cesa-Bianchi, and Y.~Seldin.
\newblock A near-optimal best-of-both-worlds algorithm for online learning with
  feedback graphs.
\newblock \emph{Advances in Neural Information Processing Systems}, 35, 2022.

\bibitem[Seldin and Lugosi(2017)]{seldin2017improved}
Y.~Seldin and G.~Lugosi.
\newblock An improved parametrization and analysis of the {EXP}3++ algorithm
  for stochastic and adversarial bandits.
\newblock In \emph{Conference on Learning Theory}, pages 1743--1759. PMLR,
  2017.

\bibitem[Seldin and Slivkins(2014)]{seldin2014one}
Y.~Seldin and A.~Slivkins.
\newblock One practical algorithm for both stochastic and adversarial bandits.
\newblock In \emph{International Conference on Machine Learning}, pages
  1287--1295. PMLR, 2014.

\bibitem[Wei and Luo(2018)]{wei2018more}
C.-Y. Wei and H.~Luo.
\newblock More adaptive algorithms for adversarial bandits.
\newblock In \emph{Conference on Learning Theory}, pages 1263--1291, 2018.

\bibitem[Zimmert and Seldin(2021)]{zimmert2021tsallis}
J.~Zimmert and Y.~Seldin.
\newblock Tsallis-{INF}: An optimal algorithm for stochastic and adversarial
  bandits.
\newblock \emph{Journal of Machine Learning Research}, 22\penalty0
  (28):\penalty0 1--49, 2021.

\bibitem[Zimmert et~al.(2019)Zimmert, Luo, and Wei]{zimmert2019beating}
J.~Zimmert, H.~Luo, and C.-Y. Wei.
\newblock Beating stochastic and adversarial semi-bandits optimally and
  simultaneously.
\newblock In \emph{International Conference on Machine Learning}, pages
  7683--7692. PMLR, 2019.

\end{thebibliography}

\newpage
\appendix

\section{Omitted proofs}
\subsection{Proof of Lemma~\ref{lem:FTRL}}
\begin{proof}
  From the definition of the algorithm,
  we have
  \begin{align}
    \nonumber
    R_T(i^*)
    &
    =
    \E \left[
    \sum_{t=1}^T 
    \ell_t( I_t )
    -
    \sum_{t=1}^T 
    \ell_t( i^* )
    \right]
    =
    \E \left[
    \sum_{t=1}^T 
    \linner
    \ell_t ,
    p_t - \mu_{i^*}
    \rinner
    \right]
    \\
    &
    \nonumber
    =
    \E \left[
    \sum_{t=1}^T 
    \linner
    \ell_t ,
    q_t - \mu_{i^*}
    \rinner
    +
    \sum_{t=1}^T 
    \gamma_t
    \linner
    \ell_t ,
    \mu_{U} - q_t
    \rinner
    \right]
    \leq
    \E \left[
    \sum_{t=1}^T 
    \linner
    {\ell}_t ,
    q_t - \mu_{i^*}
    \rinner
    +
    \sum_{t=1}^T 
    \gamma_t
    \right]
    \\
    &
    =
    \E \left[
    \sum_{t=1}^T 
    \linner
    \hat{\ell}_t ,
    q_t - \mu_{i^*}
    \rinner
    +
    \sum_{t=1}^T 
    \gamma_t
    \right],
    \label{eq:lemFTRL1}
  \end{align}
  where the second equality follows from $I_t \sim p_t$,
  the third equality follows from the second part of \eqref{eq:defFTRL},
  the first inequality follows from
  $\linner \ell_t, \mu_U - q_t \rinner \leq \linner \ell_t, \mu_U \rinner \leq 1$,
  and the last equality follows from the fact that $\hat{\ell}_t$ is an unbiased estimator for $\ell_t$.
  Further,
  from Exercise 28.12 of the book by \citet{lattimore2018bandit},
  we have
  \begin{align*}
    &
    \sum_{t=1}^T 
    \linner
    \hat{\ell}_t ,
    q_t - \mu_{ i^* }
    \rinner
    \\
    &
    \leq
    \sum_{t=1}^T \left(
      \linner
      \hat{\ell}_t,
      q_t - q_{t+1} 
      \rinner
      -
      D_t(q_{t+1}, q_t )
      +
      \psi_{t}(q_{t+1})
      -
      \psi_{t+1}(q_{t+1})
    \right)
    +
    \psi_{T+1} ( \mu_{i^* } ) -
    \psi_1 (q_1).
  \end{align*}
  Combining this with \eqref{eq:lemFTRL1},
  we obtain \eqref{eq:lemFTRL}.
\end{proof}

\subsection{Proof of Lemma~\ref{lem:selfQ}}
\begin{proof}
Suppose that \eqref{eq:defARSB} holds
with $\Delta: V\rightarrow \re$
such that $\Delta(i) \geq \Delta_{\min}$ for all $i \in [K] \setminus \{ i^* \}$.
% Then for any $i^* \in [K]$,
The regret is then bounded as
% For any $i^*$
% As we have $\gamma_t \leq \frac{1}{2}$ from the definition of $\gamma_t$
% in \eqref{eq:defbeta},
% it holds that
\begin{align}
  \nonumber
  R_T
  &
  \geq
  \E \left[
    \sum_{t=1}^T \Delta(I_t)
  \right]
  -
  C
  =
  \E \left[
    \sum_{t=1}^T \sum_{i \in V} \Delta(i) p_t(i) 
  \right]
  -
  C
  \\
  &
  \geq
  \E \left[
    \sum_{t=1}^T \sum_{i \in V} \Delta(i)
    (1 - \gamma_t)q_t(i) 
  \right]
  -
  C
  \geq
  \E\left[ 
    \frac{\Delta_{\min}}{2} Q(i^*) 
  \right]
  -C
  \geq
  \frac{\Delta_{\min}}{2}
  \bar{Q}
  -C ,
  \label{eq:selfQ}
\end{align}
where the first inequality follows from \eqref{eq:defARSB},
the first equality follows from $I_t \sim p_t$,
the second inequality follows from the definition of $p_t$ given in \eqref{eq:defFTRL},
and the third and last inequalities follow from the assumption of $\gamma_t \leq \frac{1}{2}$ and
the definitions of $Q(i^*)$ and $\bar{Q}$ given in \eqref{eq:defQ}.
This completes the proof of Lemma~\ref{lem:selfQ}.
\end{proof}

\subsection{Proof of Lemma~\ref{lem:strong}}
We use the following lemma to analyze the right-hand sided of \eqref{eq:lemFTRL}.
\begin{lemma}
  \label{lem:boundBregStrong}
  If $\psi_t$ is given by \eqref{eq:defShannon},
  it holds for any $\ell: V \rightarrow \re$ and
  $p, q \in \cP(V)$ that
  \begin{align}
    \label{eq:boundBregStrong}
    \linner \ell, p - q \rinner 
    -
    D_t ( q , p )
    \leq
    \beta_t
    \sum_{i \in V} p(i)
    \xi \left(
      \frac{\ell(i)}{\beta_t}
    \right),
    \quad
    \mbox{
    where
    }
    \quad
    \xi(x) = \exp(-x) + x - 1 .
    % \left( \exp\left( - \frac{\ell(i)}{\beta_t} \right) + \frac{ \ell(i)}{\beta_t} - 1 \right).
  \end{align}
\end{lemma}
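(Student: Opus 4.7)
The plan is to reduce the claimed inequality to the standard stability bound for exponential weights by introducing an auxiliary ``tilted'' distribution. First I would verify the basic identification: for $\psi_t(p) = -\beta_t H(p) = \beta_t \sum_i p(i) \ln p(i)$, the gradient satisfies $\nabla \psi_t(p)_i = \beta_t(\ln p(i) + 1)$, and a short calculation shows that $D_t(q, p) = \beta_t \KL(q \| p)$. Thus the left-hand side of \eqref{eq:boundBregStrong} can be written as $\linner \ell, p - q \rinner - \beta_t \KL(q \| p)$.

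Next I would introduce the exponentially tilted distribution $\tilde{p}$ defined by $\tilde{p}(i) = p(i) \exp(-\ell(i)/\beta_t) / Z$ with normalizing constant $Z = \sum_j p(j) \exp(-\ell(j)/\beta_t)$. A direct computation yields the key identity $\beta_t \KL(q \| p) = \beta_t \KL(q \| \tilde{p}) - \linner \ell, q \rinner - \beta_t \ln Z$, since $\sum_i q(i) \ln(\tilde{p}(i)/p(i)) = -\linner \ell, q \rinner / \beta_t - \ln Z$. Substituting this into the left-hand side of \eqref{eq:boundBregStrong} and cancelling the $\linner \ell, q \rinner$ terms, the expression becomes $\linner \ell, p \rinner - \beta_t \KL(q \| \tilde{p}) + \beta_t \ln Z$.

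I would then discard the nonnegative $\beta_t \KL(q \| \tilde{p})$ term and bound $\ln Z$ via the elementary inequality $\ln z \leq z - 1$, which gives $\beta_t \ln Z \leq \beta_t \sum_j p(j)\bigl(\exp(-\ell(j)/\beta_t) - 1\bigr)$. Adding $\linner \ell, p \rinner = \sum_i p(i) \ell(i)$ and regrouping yields $\beta_t \sum_i p(i)\bigl(\ell(i)/\beta_t + \exp(-\ell(i)/\beta_t) - 1\bigr) = \beta_t \sum_i p(i) \xi(\ell(i)/\beta_t)$, which is exactly the claimed bound.

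There is no real obstacle in this argument; it amounts to algebraic bookkeeping, and the single creative step is introducing the tilted distribution $\tilde{p}$ so as to split $\beta_t \KL(q\|p)$ into a nonnegative term that can be dropped and a log-partition term $\beta_t \ln Z$ that yields the $\xi$-shaped bound after applying $\ln z \leq z - 1$. This is essentially the classical Exp3 / exponential-weights stability analysis recast in the FTRL language, tailored to match the form used in the proof of Lemma~\ref{lem:strong}.
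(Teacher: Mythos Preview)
Your proof is correct, but the route differs from the paper's. The paper relaxes the constraint $q \in \cP(V)$ to $q: V \to \re_{>0}$, observes that the left-hand side is concave in $q$, computes the unconstrained maximizer $q^*(i) = p(i)\exp(-\ell(i)/\beta_t)$ by setting the gradient to zero, and then substitutes $q^*$ back in to obtain the right-hand side exactly. Your approach instead stays on the simplex: you introduce the \emph{normalized} tilt $\tilde{p} = q^*/Z$, decompose $\beta_t \KL(q\|p)$ as $\beta_t \KL(q\|\tilde p) - \linner \ell, q\rinner - \beta_t \ln Z$, drop the nonnegative $\KL(q\|\tilde p)$ term, and then use $\ln Z \le Z - 1$. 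Both arguments land on the same expression $\linner \ell, p\rinner + \beta_t(Z-1)$, so neither is looser; the paper's version is marginally more direct (one inequality rather than two), while yours makes the connection to the Exp3 log-partition function more explicit and avoids having to justify optimizing over $q$ outside the simplex.
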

\begin{proof}
  The derivative of the LHS of \eqref{eq:boundBregStrong} w.r.t. $q(i)$ is expressed as
  \begin{align}
    \frac{\partial}{\partial q(i)}
    \left(
    \linner \ell, p - q \rinner 
    -
    D_t ( q , p )
    \right)
    =
    -
    \ell(i)
    -
    \beta_t \left(
      \ln q(i)
      -
      \ln p(i)
    \right).
    \label{eq:deriveStrong}
  \end{align}
  As the LHS of \eqref{eq:boundBregStrong} is concave in $q$,
  its maximum subject to $q: V \rightarrow \re_{>0}$ is attained when
  % it is achieved when 
  the values of \eqref{eq:deriveStrong} are equal to zero,
  i.e.,
  $q(i) = q^*(i) := p(i) \exp\left( - \frac{\ell(i)}{\beta_t} \right)$.
  Hence,
  we have
  \begin{align*}
    &
    \linner \ell, p - q \rinner 
    -
    D_t ( q , p )
    \leq
    \linner \ell, p - q^* \rinner 
    -
    D_t ( q^* , p )
    \\
    &
    =
    \sum_{i \in V} 
    \left(
    \ell(i) ( p(i) - q^*(i) )
    -
    \beta_t
    \left(
    q^*(i) \ln q^*(i)
    -
    p(i) \ln p(i)
    -
    (\ln p(i) + 1)
    (q^*(i) - p(i))
    \right)
    \right)
    \\
    &
    =
    \sum_{i \in V} 
    \left(
    \ell(i) p(i) 
    -
    \beta_t
    \left(
      q^*(i)
      \ln p(i)
    -
    p(i) \ln p(i)
    -
    (\ln p(i) + 1)
    (q^*(i) - p(i))
    \right)
    \right)
    \\
    &
    =
    \sum_{i \in V} 
    \left(
    \ell(i) p(i) 
    +
    \beta_t
    \left(
    (q^*(i) - p(i))
    \right)
    \right)
    =
    \beta_t
    \sum_{i \in V} p(i) \left( \exp\left( - \frac{\ell(i)}{\beta_t} \right) + \frac{ \ell(i)}{\beta_t} - 1 \right)
    \\
    &
    =
    \beta_t
    \sum_{i \in V} p(i)
    \xi \left(
      \frac{\ell(i)}{\beta_t}
    \right),
  \end{align*}
  where the first equality follows from the definition of the Bregman divergence and \eqref{eq:defShannon},
  the second equality follows from $\ln q^*(i) = \ln p(i) - \frac{\ell(i)}{\beta_t}$,
  and the fourth inequality follows from $q^*(i) = p(i) \exp \left( - \frac{\ell(i)}{\beta_t} \right)$.
  This complete the proof of Lemma~\ref{lem:boundBregStrong}.
\end{proof}
Note that
as we have $\exp(-x) \leq 1 - x + x^2$ for any $x \geq -1$,
the function $\xi$ defined in \eqref{eq:boundBregStrong} satisfies $\xi(x) \leq x^2$ for any $x \geq - 1$.
Hence,
Lemma~\ref{lem:boundBregStrong} implies that
$
  \linner \ell, p - q \rinner 
  -
  D_t ( q , p )
  \leq
  \beta_t
  \sum_{i \in V} p(i)
  \xi \left(
    \frac{\ell(i)}{\beta_t}
  \right)
  \leq
  \frac{1}{\beta_t}
  \sum_{i \in V} p(i)
  \ell(i)^2
$
holds
for any $\ell: V \rightarrow [- \beta_t, \infty )$.

Denote $S = \{ i\in V \mid i \notin \Nin(i) \}$.
From Lemma~\ref{lem:boundBregStrong} and
the argument by \citet[Lemma 4, Theorem 2]{alon2015online},
we have
\begin{align}
  \nonumber
  &
  \E
  \left[
  \linner \hat{\ell}_t, q_{t} - q_{t+1} \rinner 
  -
  D_t ( q_{t+1} , q_t )
  \right]
  =
  \E
  \left[
  \linner \hat{\ell}_t - \bar{\ell}_t \cdot \mathbf{1}, q_{t} - q_{t+1} \rinner 
  -
  D_t ( q_{t+1} , q_t )
  \right]
  \\
  &
  \leq
  \beta_t
  \sum_{i \in V} q_t(i)
  \xi \left(
    \frac{\hat{\ell}_t(i) - \bar{\ell}_t}{\beta_t}
  \right)
  \leq
  \frac{1}{\beta_t}
  \left(
  \sum_{i \in S} q_t(i) (1 - q_t(i)) \hat{\ell}_t(i)^2
  +
  \sum_{i \in V \setminus S } q_t(i) \hat{\ell}_t(i)^2
  \right),
  \label{eq:boundStabilityStrong0}
\end{align}
where $\bar{\ell}_t$ is defined in a way similar to by \citet[Lemma 4]{alon2015online},
the first inequality follows from Lemma~\ref{lem:boundBregStrong}
and the last inequality follows from the definition of $\bar{\ell}_t$ and
the inequality $\xi(x) \leq x^2$ that holds for $x \geq - 1$.
The first term of the right-hand side of \eqref{eq:boundStabilityStrong0}
can be bounded as
\begin{align}
  \nonumber
  &
  \E \left[
    \sum_{i \in S} q_t(i) (1 - q_t(i)) \hat{\ell}_t(i)^2
  \right]
  =
  \E \left[
    \sum_{i \in S} q_t(i) (1 - q_t(i)) \frac{\ell_t(i)^2 \mathbf{1}[ i \in \Nout(I_t) ] }{P_t(i)^2}
  \right]
  \\
  &
  \nonumber
  =
  \E \left[
    \sum_{i \in S} q_t(i) (1 - q_t(i)) \frac{\ell_t(i)^2 }{P_t(i)}
  \right]
  \leq
  \E \left[
    \sum_{i \in S} q_t(i) \frac{1 - q_t(i)}{P_t(i)}
  \right]
  \\
  &
  =
  \E \left[
    \sum_{i \in S} q_t(i) \frac{1 - q_t(i)}{1 - p_t(i)}
  \right]
  \leq
  \E \left[
    2 \sum_{i \in S} q_t(i) 
  \right]
  \leq 2,
  \label{eq:boundStabilityStrong1}
\end{align}
where the first equality follows from \eqref{eq:defhatell},
the third equality follows from the assumption of strong observability implying that $\Nin(i) = [K] \setminus \{ i \}$ for all $i \in S$,
and the second inequality follows from the second part of \eqref{eq:defFTRL} and the assumption of $\gamma_t \in [0, 0.5]$.
The second term of the right-hand side of \eqref{eq:boundStabilityStrong0}
is bounded as
\begin{align}
  \sum_{i \in V \setminus S } q_t(i) \hat{\ell}_t(i)^2
  \leq
  \E \left[
    \sum_{i \in V \setminus S} q_t(i) \frac{1}{P_t(i)}
  \right]
  \leq
  2
  \E \left[
    \sum_{i \in V \setminus S} p_t(i) \frac{1}{P_t(i)}
  \right]
  \leq
  8 \alpha(G) \ln \frac{K^2}{4 \gamma_t},
  \label{eq:boundStabilityStrong2}
\end{align}
where the second inequality follows from
the second part of \eqref{eq:defFTRL} and 
the assumption of $\gamma_t \in [0, 0.5]$,
and the last inequality follows from Lemma 5 by \citet{alon2015online}.

% From Lemma~\ref{lem:boundBregStrong} and arguments in the proof of \citet[Lemma 4, Lemma 5, Theorem 2]{alon2015online},
Combining \eqref{eq:boundStabilityStrong0}, \eqref{eq:boundStabilityStrong1} and \eqref{eq:boundStabilityStrong2},
we obtain
\begin{align}
  \E
  \left[
  \linner \hat{\ell}_t, q_{t} - q_{t+1} \rinner 
  -
  D_t ( q_{t+1} , q_t )
  \right]
  \leq
  \frac{2}{\beta_t}
  \left( 
    1 + 4 \alpha(G) \ln \frac{K^2}{4 \gamma_t}
  \right) .
  \label{eq:boundStabilityStrong}
\end{align}
In addition,
from the definition of $\psi_t$ in \eqref{eq:defShannon},
we have
\begin{align*}
  &
  \sum_{t=1}^T \left(
    \psi_{t} (q_{t+1}) - \psi_{t+1}(q_{t+1})
  \right)
  +
  \psi_{T+1}(\mu_{i^*})
  -
  \psi_1(q_1)
  \\
  &
  =
  \sum_{t=1}^T 
  (\beta_{t+1} - \beta_{t}) H( q_{t+1} )
  -
  \beta_{T+1} H(\mu_{i^*})
  +
  \beta_1 H(q_1)
  \\
  &
  \leq
  \sum_{t=1}^T 
  (\beta_{t+1} - \beta_{t}) H( q_{t+1} )
  +
  \beta_1 \ln K .
\end{align*}
By combining this with \eqref{eq:boundStabilityStrong} and Lemma~\ref{lem:FTRL},
we obtain \eqref{eq:lemStrong}.

\subsection{Proof of Lemma~\ref{lem:boundat}}
\begin{proof}
  For any $p \in \cP(V)$,
  and for any $i^* \in V$,
  we have
  \begin{align}
    H(p)
    &
    =
    \sum_{i \in V} p (i) \ln \frac{1}{p(i)}
    =
    \sum_{i \in V \setminus \{ i^* \}} p (i) \ln \frac{1}{p(i)}
    +
    p(i^*) \ln \left( 1 + \frac{1-p (i^*)}{p(i^*)} \right)
    \nonumber
    \\
    &
    \leq
    (K-1)
    \cdot
    \frac{\sum_{i \in V \setminus \{ i^* \}} p (i)}{K-1} \ln 
    \frac{K-1}{\sum_{i \in V \setminus \{ i^* \}} p (i)}
    +
    p(i^*) \frac{1-p(i^*)}{p(i^*)}
    \nonumber
    \\
    &
    =
    (1 - p(i^*))
    \left(
      \ln \frac{K-1}{1 - p(i^*)}
      +
      1
    \right),
  \label{eq:boundH}
  \end{align}
  where the inequality follows from
  Jensen's inequality and $\ln ( 1 + x ) \leq x$ that holds for any $x \geq 0$
  and the last equality follows from $\sum_{i \in V} p(i) = 1$.
  Using this,
  we have
  \begin{align*}
    \sum_{t=1}^T a_t
    = 
    \sum_{t=1}^T H( q_t )
    &
    \leq
    \sum_{t=1}^T
    (1 - q_t(i^*))
    \left(
      \ln \frac{K-1}{1 - q_t(i^*)}
      +
      1
    \right)
    \\
    &
    \leq
    Q(i^*)
    \left(
      \ln \frac{(K-1)T}{Q(i^*)}
      +
      1
    \right)
    \leq
    Q(i^*)
    \left(
      \ln \frac{\mathrm{e}KT}{Q(i^*)}
    \right)
    ,
  \end{align*}
  where the second inequality follows from Jensen's inequality
with the definition $Q(i^*) = \sum_{t=1}^T ( 1 - q_t(i^*) )$.
\end{proof}

\subsection{Proof of Lemma~\ref{lem:decomposeweak}}
We use the following lemma to analyze the right-hand sided of \eqref{eq:lemFTRL}.
\begin{lemma}
  \label{lem:boundBregWeak}
  If $\psi_t$ is given by \eqref{eq:defpsiweak},
  it holds for any $\ell: V \rightarrow \re$ and
  $p, q \in \cP(V)$ that
  \begin{align}
    \nonumber
    \linner \ell, p - q \rinner 
    &
    -
    D_t ( q , p )
    \leq
    \beta_t
    \sum_{i \in V_1}
    \min \left\{
    p(i)
    \xi \left(
      \frac{\ell(i)}{\beta_t}
    \right),
    (1- p(i))
    \xi \left(
      -
      \frac{ \ell(i)}{\beta_t}
    \right)
    \right\}
    \\
    &
    +
    \sqrt{t}
    \sum_{i \in V_2}
    \min \left\{
    \sqrt{ p(i) }
    \zeta \left(
      \frac{\sqrt{p(i)} \ell(i)}{\sqrt{t}}
    \right),
    \sqrt{ 1- p(i) }
    \zeta \left(
      -
      \frac{ \sqrt{1 - p(i)} \ell(i)}{\sqrt{t}}
    \right)
    \right\},
    \label{eq:boundBregWeak}
    \\
    &
    \mbox{
    where
    }
    \quad
    \xi(x) = \exp(-x) + x - 1,
    \quad
    \zeta(x) = \frac{x^2}{1 + x}.
    \label{eq:defxizeta}
  \end{align}
\end{lemma}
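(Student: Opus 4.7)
The plan is to exploit that the regularizer $\psi_t$ in \eqref{eq:defpsiweak} is separable across coordinates: each summand depends only on the single entry $p(i)$, so both $D_t(q,p)$ and $\linner \ell, p-q \rinner$ split as sums indexed by $i \in V$. Consequently, the LHS of \eqref{eq:boundBregWeak} decomposes into $|V|$ independent one-dimensional maximizations. Mimicking the device used in the proof of Lemma~\ref{lem:boundBregStrong}, I would relax the simplex constraint and maximize each coordinate over $q(i) \in (0,1)$ unconstrained; since the objective is concave in $q$, this relaxation can only enlarge the maximum, so the resulting upper bound remains valid.

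For $i \in V_1$, the local objective is $\ell(i)(p(i) - q(i)) - \beta_t D_h(q(i), p(i))$, and $D_h(q(i), p(i))$ is precisely the binary KL divergence between Bernoulli distributions of parameters $q(i)$ and $p(i)$. I would set the derivative in $q(i)$ to zero, obtaining $q^*(i) = p(i)\mathrm{e}^{-\ell(i)/\beta_t} / (p(i)\mathrm{e}^{-\ell(i)/\beta_t} + 1 - p(i))$. Substituting back gives the maximum value $\ell(i) p(i) + \beta_t \ln(1 + p(i)(\mathrm{e}^{-\ell(i)/\beta_t} - 1))$, and the elementary inequality $\ln(1+x) \leq x$ together with the definition of $\xi$ yields the first argument of the $V_1$ minimum, $\beta_t p(i) \xi(\ell(i)/\beta_t)$. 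The second argument follows by the involution $p(i) \leftrightarrow 1-p(i)$, $q(i) \leftrightarrow 1-q(i)$, $\ell(i) \leftrightarrow -\ell(i)$, under which both the local objective and $D_h$ are invariant because $h(x) = h(1-x)$.

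For $i \in V_2$, I first compute the local Bregman divergence of $g$ directly from $g'(x) = -x^{-1/2} + (1-x)^{-1/2}$ and a short algebraic simplification, obtaining
\[
D_g(q(i), p(i)) = \frac{(\sqrt{q(i)} - \sqrt{p(i)})^2}{\sqrt{p(i)}} + \frac{(\sqrt{1-q(i)} - \sqrt{1-p(i)})^2}{\sqrt{1-p(i)}}.
\]
I would then maximize $\ell(i)(p(i) - q(i)) - \sqrt{t}\, D_g(q(i), p(i))$ over $q(i) \in (0,1)$ via the FOC $g'(q^*(i)) - g'(p(i)) = -\ell(i)/\sqrt{t}$. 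Writing $y = \sqrt{p(i)}\, \ell(i) / \sqrt{t}$ and making the change of variables $u = \sqrt{q(i)/p(i)}$ so that $p(i) - q(i) = p(i)(1-u)(1+u)$, the maximization should collapse, after algebraic simplification, to a closed form matching $\sqrt{t} \sqrt{p(i)}\, \zeta(y)$ with $\zeta(x) = x^2/(1+x)$. The companion bound with $\sqrt{1-p(i)}$ and $-\ell(i)$ follows again from the symmetry $g(x) = g(1-x)$. Summing the two contributions over all $i$ yields \eqref{eq:boundBregWeak}.

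The main obstacle I anticipate is in handling the $V_2$ maximization cleanly: the FOC does not admit a tidy inversion, and a naive argument that drops one of the two summands in $D_g$ and then applies Fenchel--Young loses a factor and produces $(1+y)^{-2}$ rather than the sharp $(1+y)^{-1}$ that $\zeta$ requires. I therefore expect the correct route is either a careful direct maximization that retains the full divergence and uses the reparametrization above, or a direct verification by monotonicity that $\sqrt{t}\sqrt{p(i)}\,\zeta(\sqrt{p(i)}\ell(i)/\sqrt{t}) - [\ell(i)(p(i)-q(i)) - \sqrt{t}\, D_g(q(i),p(i))] \geq 0$ for all admissible $q(i)$, obtained by checking the inequality at $q(i)=p(i)$ and comparing derivatives. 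The $V_1$ part, by contrast, transfers the argument of Lemma~\ref{lem:boundBregStrong} almost verbatim once the coordinate-wise decomposition is in place.
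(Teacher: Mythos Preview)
Your plan for the $V_1$ terms is sound and close to the paper's; you keep the full binary-KL term and use $\ln(1+x)\le x$, whereas the paper drops one of the two nonnegative summands $d^{(1)}(q(i),p(i))$, $d^{(1)}(1-q(i),1-p(i))$ and reuses the computation of Lemma~\ref{lem:boundBregStrong} verbatim. Both routes arrive at the same $\beta_t\,p(i)\,\xi(\ell(i)/\beta_t)$ and $\beta_t\,(1-p(i))\,\xi(-\ell(i)/\beta_t)$.

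For $V_2$, however, your anticipated obstacle is a misconception. The paper does exactly what you call the ``naive'' route: it drops one of the two nonnegative summands in $D_g$, keeping only $d^{(2)}(q(i),p(i)) = (\sqrt{q(i)}-\sqrt{p(i)})^2/\sqrt{p(i)}$, and then maximizes $\ell(i)(p(i)-q(i)) - \sqrt{t}\,d^{(2)}(q(i),p(i))$ exactly via the substitution $z=\sqrt{q(i)}$, which makes the objective a concave quadratic in $z$. The optimum is $z^* = \sqrt{t\,p(i)}/(\sqrt{t}+\sqrt{p(i)}\,\ell(i))$, and direct substitution yields precisely $\sqrt{t\,p(i)}\,\zeta(\sqrt{p(i)}\,\ell(i)/\sqrt{t})$ with $\zeta(x)=x^2/(1+x)$, no factor lost. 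The $(1+y)^{-2}$ degradation you worry about would arise only if you replaced the exact quadratic maximization by a Fenchel--Young inequality; the dropping step itself is harmless. The companion bound with $1-p(i)$ and $-\ell(i)$ then follows from the symmetry $g(x)=g(1-x)$, as you note. So there is no need to retain the full $D_g$ or to attempt a delicate monotonicity verification: simply drop one term and solve the resulting quadratic.
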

\begin{proof}
  For any $x, y \in (0, 1)$,
  we define $d^{(1)}(y, x) \geq 0$ and $d^{(2)}(y, x) \geq 0$ by
  \begin{align}
    \label{eq:defd1}
    d^{(1)}(y, x)
    &
    =
    y \ln y - x \ln x - ( \ln x + 1 )(y - x) 
    =
    y \ln \frac{y}{x}
    +
    x - y
    ,
    \\
    d^{(2)}(y, x)
    &
    =
    - 2 \sqrt{y}
    + 2 \sqrt{x}
    + \frac{1}{\sqrt{x}} ( y - x)
    =
    \frac{1}{\sqrt{x}}
    \left(
      \sqrt{y} - \sqrt{x}
    \right)^2 .
    \label{eq:defd2}
  \end{align}
  Note that
  $d^{(1)}$ and $d^{(2)}$ correspond to
  Bregman divergences over $(0, 1)$ for 
  $\psi^{(1)}(x) = x \ln x$ and $\psi^{(2)}(x) = - 2 \sqrt{x}$.
  If $\psi_t$ is given by \eqref{eq:defpsiweak},
  the Bregman divergence $D_t( q, p)$ associated with $\psi_t$ is expressed as
  \begin{align*}
    D_t(q, p)
    &
    =
    \beta_{t}
    \sum_{i \in V_1} 
    \left( 
      d^{(1)}( q(i), p(i) )
      +
      d^{(1)}(1 -  q(i), 1 - p(i) )
    \right)
    \\
    &
    \quad
    +
    \sqrt{t}
    \sum_{i \in V_2} 
    \left( 
      d^{(2)}( q(i), p(i) )
      +
      d^{(2)}(1 -  q(i), 1 - p(i) )
    \right).
  \end{align*}
  From this,
  we have
  \begin{align}
    \nonumber
    &
    \linner \ell, p - q \rinner
    - D_t(q, p)
    \\
    &
    \nonumber
    \leq
    \sum_{i \in V_1} 
    \left(
      \ell(i) ( p(i) - q(i) )
      -
      \beta_{t} ( d^{(1)}( q(i), p(i) ) + d^{(1)}(1 - q(i), 1 - p(i)) )
    \right)
    \\
    &
    \nonumber
    \quad
    +
    \sum_{i \in V_2} 
    \left(
      \ell(i) ( p(i) - q(i) )
      -
      \sqrt{t} ( d^{(2)}( q(i), p(i) ) + d^{(2)}(1 - q(i), 1 - p(i)) )
    \right)
    \\
    &
    \nonumber
    \leq
    \sum_{i \in V_1} 
    \min
    \left\{
      \ell(i) ( p(i) - q(i) )
      -
      \beta_{t}
      d^{(1)}( q(i), p(i) )
      ,
      \ell(i) ( p(i) - q(i) )
      -
      \beta_{t}
      d^{(1)}(1 -  q(i), 1 - p(i) )
    \right\}
    \\
    &
    \quad
    +
    \sum_{i \in V_2} 
    \min
    \left\{
      \ell(i) ( p(i) - q(i) )
      -
      \sqrt{t}
      d^{(2)}( q(i), p(i) )
      ,
      \ell(i) ( p(i) - q(i) )
      -
      \sqrt{t}
      d^{(2)}(1 -  q(i), 1 - p(i) )
    \right\}.
    \label{eq:boundBregWeak0}
  \end{align}
  By the arguments in the proof of Lemma~\ref{lem:boundBregStrong},
  we have
  \begin{align}
    \ell(i) ( p(i) - q(i) )
    -
    \beta_{t}
    d^{(1)}( q(i), p(i) )
    \leq
    \beta_t
    p(i)
    \xi \left( 
      \frac{\ell(i)}{ \beta_t }
    \right) .
    \label{eq:boundBregWeakShanon0}
  \end{align}
  In a similar way,
  we can show
  \begin{align}
    \nonumber
    &
    \ell(i) ( p(i) - q(i) )
    -
    \beta_{t}
    d^{(1)}(1 -  q(i), 1 - p(i) )
    \\
    &
    =
    - \ell(i) ( (1 - p(i)) - (1 - q(i)) )
    -
    \beta_{t}
    d^{(1)}(1 -  q(i), 1 - p(i) )
    \leq
    \beta_t
    (1- p(i))
    \xi \left( 
      -
      \frac{\ell(i)}{ \beta_t }
    \right) .
    \label{eq:boundBregWeakShanon1}
  \end{align}
  Let us next evaluate 
  the term 
  $
    \ell(i) ( p(i) - q(i) )
    -
    \sqrt{t}
    d^{(2)}( q(i), p(i) )
  $ in the right-hand side of \eqref{eq:boundBregWeak0}.
  Denoting $z = \sqrt{q(i)}$,
  we have
  \begin{align}
    \ell(i) ( p(i) - q(i) )
    -
    \sqrt{t}
    d^{(2)}( q(i), p(i) )
    =
    \ell(i) (p (i) - z^2 )
    -
    \sqrt{t}
    \frac{1}{\sqrt{p(i)}}
    \left(
      z - \sqrt{p(i)}
    \right)^2 ,
    \label{eq:funcz}
  \end{align}
  where the last inequality follows from \eqref{eq:defd2}.
  Hence,
  its derivative in $z$ can be expressed as
  \begin{align}
    -
    2
    \ell(i) z
    -
    2 \sqrt{t}
    \frac{1}{\sqrt{p(i)}}
    \left(
      z - \sqrt{p(i)}
    \right)
    =
    -
    2 \left(\ell(i) + 
    \sqrt{
    \frac{t}{p(i)}
    }
    \right)
    z
    +
    2 \sqrt{t} .
  \end{align}
  The value of this expression is equal to zero
  when
  $z = z^* := \frac{\sqrt{t p(i)}}{ \sqrt{t}  + \sqrt{p(i)} \ell(i) }$.
  As \eqref{eq:funcz} is concave in $z$,
  its value is maximized when $z = z^*$.
  Hence,
  we have
  \begin{align}
    \nonumber
    &
    \ell(i) ( p(i) - q(i) )
    -
    \sqrt{t}
    d^{(2)}( q(i), p(i) )
    \leq
    \ell(i) (p (i) - z^{*2} )
    -
    \sqrt{t}
    \frac{1}{\sqrt{p(i)}}
    \left(
      z^* - \sqrt{p(i)}
    \right)^2 
    \\
    &
    \nonumber
    =
    \left(\sqrt{p(i)} - z^{*} \right)
    \left(
    \ell(i)
    \left( \sqrt{p(i)} + z^{*}\right)
    -
    \frac{ \sqrt{t} }{\sqrt{p(i)}}
    \left( \sqrt{p(i)} - z^{*} \right)
    % ( z^{*} - \sqrt{p(i)})
    \right)
    \\
    &
    \nonumber
    =
    \frac{{p(i)} \ell(i)}{ \sqrt{t} + \sqrt{p(i)} \ell(i) }
    \left(
    \ell(i)
    % \left( 
    \sqrt{p(i)}
    +
    \left(
      \ell(i)
      +
      \frac{\sqrt{t}}{\sqrt{p(i)}}
    \right)
    z^{*}
    -
    \sqrt{t}
    % ( z^{*} - \sqrt{p(i)})
    \right)
    \\
    &
    =
    \frac{{p(i)} \ell(i)}{ \sqrt{t} + \sqrt{p(i)} \ell(i) }
    \ell(i)
    \sqrt{p(i)}
    =
    \sqrt{p(i)} \frac{\left(\sqrt{p(i)}\ell(i) \right)^2}{\sqrt{t} + \sqrt{p(i)}\ell(i)}
    =
    \sqrt{t p(i)}
    \zeta \left(
    \frac{\sqrt{p(i)} \ell(i)}{\sqrt{t}}
    \right) .
    \label{eq:boundBregWeakTsallis0}
  \end{align}
  In a similar way to that for showing \eqref{eq:boundBregWeakTsallis0},
  we can show
  \begin{align}
    \nonumber
    &
    \ell(i) ( p(i) - q(i) )
    -
    \sqrt{t}
    d^{(2)}(1 -  q(i), 1 - p(i) )
    \\
    &
    \nonumber
    =
    -
    \ell(i) ( (1 - p(i)) - (1 - q(i)) )
    -
    \sqrt{t}
    d^{(2)}(1 -  q(i), 1 - p(i) )
    \\
    &
    \leq
    \sqrt{t (1 - p(i))}
    \zeta \left(
      -
    \frac{\sqrt{1 - p(i)} \ell(i)}{\sqrt{t}}
    \right) .
    \label{eq:boundBregWeakTsallis1}
  \end{align}
  Combining \eqref{eq:boundBregWeak0},
  \eqref{eq:boundBregWeakShanon0},
  \eqref{eq:boundBregWeakShanon1},
  \eqref{eq:boundBregWeakTsallis0} and
  \eqref{eq:boundBregWeakTsallis1},
  we obtain \eqref{eq:boundBregWeak}.
\end{proof}
Note that $\xi(x)$ and $\zeta(x)$ defined in \eqref{eq:defxizeta} satisfy
$\xi(x) \leq x^2$ for $x \geq -1$ and
$\zeta(x) \leq 2 x^2$ for $x \geq - \frac{1}{2}$.

Using Lemma~\ref{lem:boundBregWeak},
we evaluate
$\linner \hat{\ell}_t, q_{t} - q_{t+1} \rinner - D_t(q_{t+1}, q_t)$.
As we define $p_t$ by \eqref{eq:defFTRL} with $U = D$,
we have $p_t(i) \geq \frac{\gamma_t}{|D|}$ for all $i \in D$.
Hence,
for any $i \in V_1 = \bigcup_{ j \in D } \Nout ( j )$,
the value of $P_t(i) $ defined by in \eqref{eq:defhatell} is bounded as
\begin{align}
  P_t (i)
  =
  \sum_{j \in \Nin (i)} p_t(j)
  \geq
  \frac{\gamma_t}{|D|},
  \label{eq:boundPtWeak}
\end{align}
% at least $\frac{\gamma_{t}}{|D|}$,
which implies $\hat{\ell}_t \leq \frac{\ell_t(i)}{ P_{t}(i) } \leq \frac{|D|}{\gamma_t}$.
From this and the assumption of
$\gamma_t \geq \frac{2 |D|}{\beta_t}$,
we have
$
\frac{\hat{\ell}_t(i)}{\beta_t}
\leq
\frac{|D|}{\beta_t \gamma_t }
\leq
\frac{1}{2}
$
for all $i \in V_1$.
As we have
$\zeta(x) \leq x^2 $ for $x \leq - \frac{1}{2}$,
it holds for any $i \in V_1$ that
\begin{align}
  &
  \E \left[
    \min \left\{
      q_t(i)
      \xi \left(
        \frac{\hat{\ell}_t(i)}{\beta_t}
      \right)
      ,
      (1 - q_t(i))
      \xi \left(
        -
        \frac{\hat{\ell}_t(i)}{\beta_t}
      \right)
    \right\}
  \right]
  \nonumber
  \\
  &
  \leq
  \E \left[
    \min \left\{
      q_t(i),
      (1 - q_t(i))
    \right\}
    \left(
      \frac{\hat{\ell}_t(i)}{\beta_t}
    \right)^2
  \right]
  \nonumber
  \\
  &
  =
  \E \left[
    \min \left\{
      q_t(i),
      (1 - q_t(i))
    \right\}
    \left(
      \frac{{\ell}_t(i)^2 \mathbf{1}\left[ i \in \Nout(I_t) \right]}{ P_t(i)^2 \beta_t}
    \right)^2
  \right]
  \nonumber
  \\
  &
  =
  \E \left[
    \min \left\{
      q_t(i),
      (1 - q_t(i))
    \right\}
      \frac{{\ell}_t(i)^2 }{ P_t(i) \beta_t^2}
  \right]
  \leq
  \E \left[
    \frac{2|D|}{\beta_t^2 \gamma_t}
    % \min \left\{
      q_t(i)
      (1 - q_t(i))
    % \right\}
  \right],
  \nonumber
\end{align}
where the last inequality follows from \eqref{eq:boundPtWeak}
and the inequality  $\min \{x, 1 - x \} \leq 2 x (1-x) $ that holds for any $x \in [0, 1]$.
We hence have
\begin{align}
  \E \left[
    \sum_{i\in V_1}
    \min \left\{
      q_t(i)
      \xi \left(
        \frac{\hat{\ell}_t(i)}{\beta_t}
      \right)
      ,
      (1 - q_t(i))
      \xi \left(
        -
        \frac{\hat{\ell}_t(i)}{\beta_t}
      \right)
    \right\}
  \right]
  &
  \leq
  \E \left[
    \frac{2|D|}{\beta_t \gamma_t}
    % \min \left\{
      \sum_{i \in V_1}
      q_t(i)
      (1 - q_t(i))
    % \right\}
  \right]
  \nonumber
  \\
  &
  =
  \E \left[
    \frac{2|D|b_t}{\beta_t^2 \gamma_t}
  \right] .
  \label{eq:boundStabilityWeak}
\end{align}

For any $i \in V_2$,
we have $i \in \Nin (i)$,
which implies
$P_t(i) \geq p_t(i) \geq (1-\gamma_t) q_t(i) \geq \frac{1}{2} q_t(i)$.
We hence have
\begin{align}
  \E \left[
    \zeta \left(
      \frac{\sqrt{q_t(i)} \hat{\ell}_t(i)}{\sqrt{t}}
    \right)
    % |
    % p_t
  \right]
  &
  \leq
  \E \left[
    \zeta \left(
      \frac{\sqrt{q_t(i)} \hat{\ell}_t(i)}{\sqrt{t}}
    \right)
    % |
    % p_t
  \right]
  \leq
  \E \left[
    \left(
      \frac{\sqrt{q_t(i)} \hat{\ell}_t(i)}{\sqrt{t}}
    \right)^2
    % |
    % p_t
  \right]
  \nonumber
  \\
  &
  =
  \E \left[
    \frac{ q_t(i)}{t}
    \frac{ \ell_t(i)^2 \mathbf{1}[ i \in \Nout(I_t) ]}{P_t(i)^2}
    % |
    % p_t
  \right]
  \leq
  \E \left[
    \frac{ q_t(i)}{t P_t(i)}
    % |
    % p_t
  \right]
  \leq
  \frac{2}{t }.
  \label{eq:boundStabilityTsallis1}
\end{align}
Further,
if $q_t(i) \geq \frac{15}{16}$,
we have
$
\frac{\sqrt{1 - q_t(i)}\hat{\ell}_t(i)}{\sqrt{t}} 
\leq
\frac{1}{4 P_t(i) \sqrt{t}} 
\leq
\frac{1}{2 q_t(i) \sqrt{t}} 
\leq
\frac{8}{15} 
$.
As $\zeta(x)$ satisfies
$\zeta(x) \leq \frac{x^2}{1 + x} \leq \frac{15}{7} x^2$ for 
any $x \geq - \frac{8}{15}$,
we have
% This implies that
\begin{align}
  \nonumber
  &
  \zeta \left(
    -
    \frac{\sqrt{1 - q_t(i)}\hat{\ell}_t(i)}{\sqrt{t}} 
  \right)
  \leq
  \frac{15}{7}
  \left(
    \frac{\sqrt{1 - q_t(i)}\hat{\ell}_t(i)}{\sqrt{t}} 
  \right)^2
  =
  \frac{15}{7}
  \frac{1 - q_t(i)}{t}
  \frac{\ell_t(i)^2 \mathbf{1}[i \in \Nout(I_t)]}{P_t(i)^2} 
  \\
  &
  \leq
  \frac{60}{7}
  \frac{1 - q_t(i)}{t}
  \frac{\mathbf{1}[i \in \Nout(I_t)]}{q_t(i)^2} 
  \leq
  \frac{60}{7}
  \left(
    \frac{16}{15}
  \right)^2
  \frac{1 - q_t(i)}{t}
  \leq
  10
  \frac{1 - q_t(i)}{t}
  \label{eq:boundStabilityTsallis2}
\end{align}
if $i \in V_2$ and $q_t(i) \geq \frac{15}{16}$.
From \eqref{eq:boundStabilityTsallis1} and \eqref{eq:boundStabilityTsallis2},
for $i \in V_2$,
we have
\begin{align}
  \nonumber
  &
  \E
  \left[
    \min \left\{
    \sqrt{ q_t(i) }
    \zeta \left(
      \frac{\sqrt{q_t(i)} \hat{\ell}_t(i)}{\sqrt{t}}
    \right),
    \sqrt{ 1 - q_t(i) }
    \zeta \left(
      -
      \frac{ \sqrt{1 - q_t(i)} \hat{\ell}_t(i)}{\sqrt{t}}
    \right)
    \right\}
    |
    q_t(i)
  \right]
  \\
  &
  \leq
  \left\{
    \begin{array}{ll}
      2 \frac{\sqrt{q_t(i)}}{t} & \left( q_t(i) < \frac{15}{16} \right)
      \\
      10 \frac{1 - q_t(i)}{t} & \left( q_t(i) \geq \frac{15}{16} \right)
    \end{array}
  \right.
  % \\
  % &
  =
  O \left(
  \frac{1}{t} \sqrt{q_t(i) ( 1 - q_t(i) )}
  \right) .
  \label{eq:boundStabilityTsallis}
\end{align}
% From this,
% we have
% \begin{align}
%   \E
%   \left[
%     \sum_{i \in V_2}
%     \min \left\{
%     \sqrt{ q_t(i) }
%     \zeta \left(
%       \frac{\sqrt{q_t(i)} \hat{\ell}_t(i)}{\sqrt{t}}
%     \right),
%     \sqrt{ 1 - q_t(i) }
%     \zeta \left(
%       -
%       \frac{ \sqrt{1 - q_t(i)} \hat{\ell}_t(i)}{\sqrt{t}}
%     \right)
%     \right\}
%     |
%     p_t
%   \right]
% \end{align}

We further have
\begin{align}
  \nonumber
  &
  \sum_{t=1}^T
  \left( \psi_t(q_{t+1}) - \psi_{t+1}(q_{t+1}) \right)
  +
  \psi_{T+1} (\mu_{i^*})
  -
  \psi_1(q_1)
  \\
  &
  \nonumber
  =
  \sum_{i \in V_1} \left(
  \sum_{t=1}^T
  \left( \beta_t - \beta_{t+1} \right) h(q_{t+1}(i))
  \right)
  +
  \sum_{i \in V_2} \left(
  \sum_{t=1}^T
  \left( \sqrt{t} - \sqrt{t+1} \right) g(q_{t+1}(i))
  \right)
  \\
  &
  \nonumber
  \quad
  - 2
  \sqrt{T+1} \cdot |V_2| 
  +
  \beta_1 
  \sum_{i \in V_1}
  h(q_1(i))
  +
  2
  \sum_{i \in V_2}
  g(q_1(i))
  \\
  &
  \nonumber
  =
  \sum_{t=1}^T
  \left( \beta_t - \beta_{t+1} \right) a_{t+1}
  +
  2
  \sum_{i \in V_2} \left(
  \sum_{t=0}^T
  \left( \sqrt{t+1} - \sqrt{t} \right) \left(\sqrt{q_{t+1}(i)} + \sqrt{1 - q_{t+1}(i)} - 1 \right)
  \right)
  +
  \beta_1 a_1
  \\
  &
  \leq
  \sum_{t=1}^T
  \left( \beta_t - \beta_{t+1} \right) a_{t+1}
  +
  \beta_1 a_1
  +
  2
  \sum_{t=1}^{T+1}
  \frac{1}{\sqrt{t}}
  \sum_{i \in V_2} 
  \sqrt{ 
    q_{t}(i)(1 - q_{t}(i))
  },
  \label{eq:boundPenaltyWeak}
\end{align}
where $a_t$ and $b_t$ are defined by \eqref{eq:defatbtweak} and
the last inequality follows from 
$\sqrt{t+1} - \sqrt{t} \leq \frac{1}{\sqrt{t+1}}$ and 
$\sqrt{x} + \sqrt{1 - x} - 1 \leq \sqrt{x(1-x)}$.
From Lemma~\ref{lem:FTRL} combined with \eqref{eq:boundStabilityWeak},
\eqref{eq:boundStabilityTsallis} and \eqref{eq:boundPenaltyWeak},
we have
\begin{align}
  R_T
  =
  O \left(
    \sum_{t=1}^T
    \left(
      \gamma_t
      +
      \frac{|D|b_t}{\beta_t \gamma_t}
      +
      (\beta_t - \beta_{t+1 })a_{t+1}
      +
      \frac{1}{\sqrt{t}}
      \sum_{i \in V_2}
      \sqrt{ 
        q_{t}(i)(1 - q_{t}(i))
      }
    \right)
    +
    \beta_1 a_1
  \right) .
\end{align}

\subsection{Proof of Proposition~\ref{prop:weak}}
\begin{proof}
  We note that
  $b_t \leq 1$ and $b_t \leq a_t \leq 2 \ln K $.
  We define $z_t = \frac{b_t a_{t+1}}{\gamma'_t}$
  and $Z_t = \sum_{s=1}^t z_s$.
  Then,
  from the definition of $\gamma'_t$,
  we have
  \begin{align}
    z_t
    =
    \frac{a_{t+1}b_t}{\gamma'_t}
    =
    4
    \frac{a_{t+1}}{c_1}
    \left(
    c_1
    +
    B_t^{1/3}
    \right)
    % \in
    % \left[
    \geq
    a_{t+1}
    \geq
    b_{t+1}
    %   % \frac{\ln K + 1}{c_1} 
    %   ( c_3 + t^{1/3} )
    % \right] ,
  \label{eq:zb}
  \end{align}
  where the second inequality follows from $b_t \leq a_t$.
  % and the last inequality follows from $c_1 \leq c_3$.
  % which follows from
  Further,
  % from
  % $a_{t+1} \leq 2 \ln K$ and
  % $c_1 \geq 2 \ln K$,
  we have
  % and $B_t \geq 0$.
  % Hence,
  % we have
  \begin{align}
    z_t
    =
    4
    \frac{a_{t+1}}{c_1} 
    \left(
      c_1 +
      % \left(
      %   \sum_{s=1}^t a_s
      % \right)
      B_t^{1/3}
    \right)
    \leq
    4
    \left(
      c_1 +
      % \left(
      %   \sum_{s=1}^t a_s
      % \right)
      B_t^{1/3}
    \right)
    \leq
    4
    c_1
    +
    4
    \left( 
      b_1
      +
      % \frac{c_1}{c_3}
      \sum_{s=1}^{t-1} z_s
    \right)^{1/3}
    \leq
    8
    \left(
    c_1
    +
    Z_{t-1}
    % \sum_{s=1}^{t-1} \zeta_s
    \right),
    % \frac{\ln K +1}{c_1} 
    % \left(
    %   c_3
    %   +
    %   \left(
    %   a_1
    %   +
    %   \frac{c_1}{c_3}
    %   \sum_{s=1}^{t-1} \zeta_s
    %   \right)^{1/3}
    % \right)
  \label{eq:zZ}
  \end{align}
  where the first inequality follows from
  $a_{t+1} \leq 2 \ln K$ and $c_1 \geq 2 \ln K$
  % the second inequality follows from \eqref{eq:zb} and $c_1 \leq c_4$,
  and the last inequality follows from
  $c_1 \geq 2$ and $b_1 \leq 1$.
  From this,
  we have
  \begin{align}
    \nonumber
    &
    \sum_{t=1}^T 
    ( \beta_{t+1} - \beta_t ) a_{t+1}
    =
    c_2
    \sum_{t=1}^T 
    \frac{z_{t}}{ \sqrt{ c_1  +  Z_{t-1} } }
    =
    4 c_2
    \sum_{t=1}^T 
    \frac{Z_{t} - Z_{t-1}}{ 3 \sqrt{ c_1  +  Z_{t-1} } + \sqrt{c_1 + Z_{t-1}} }
    \\
    &
    \leq
    4 c_2
    \sum_{t=1}^T 
    \frac{Z_{t} - Z_{t-1}}{ \sqrt{ c_1  +  Z_{t} } + \sqrt{c_1 + Z_{t-1}} }
    =
    4 c_2
    \sum_{t=1}^T 
    \left( \sqrt{ c_1  +  Z_{t} } - \sqrt{c_1 + Z_{t-1}} \right)
    % =
    \leq
    % O \left(
      4
      c_2
      \sqrt{ 
        % \sum_{t=1}^T \zeta_t 
        Z_T
      },
      \label{eq:BB1}
    % \right) .
    % \\
    % =
    % O \left(
    %   \frac{c_2}{\sqrt{c_1}}
    %   \sqrt{ \sum_{t=1}^T a_t  \left( 1 + \sum_{t=1}^T b_t \right)^{1/3}  }
    % \right).
  \end{align}
  where the first equality follows from the definitions of $\beta_t$ and $z_t$,
  and the first inequality follows from \eqref{eq:zZ}.

  We define $w_t = \frac{b_t}{\gamma'_t}$ and $W_t = \sum_{s=1}^t w_s$.
  From the definition of $\gamma'_t$,
  we have
  \begin{align}
    w_t
    =
    \frac{b_t}{\gamma'_t}
    =
    % \frac{c_3}{c_1}
    4
    \left(
    1
    +
    \frac{1}{c_1}
    B_t^{1/3}
    \right)
    \geq
    % 1,
    % \frac{c_3}{c_1}.
    4 .
    \label{eq:boundw}
  \end{align}
  Further,
  we have
  \begin{align}
    w_{1} \leq 8 ,
    \quad
    w_{t+1} = 
    4\left(
    1 +
    \frac{1}{c_1}
    B_{t+1}^{1/3}
    \right)
    \leq 
    4\left(
    1 +
    \frac{1}{c_1}
    (B_{t} + 1)^{1/3} 
    \right)
    \leq
    2 w_t ,
    \quad
    w_{t} \leq
    4 \left(
      1 + t^{1/3}
    \right) .
    \label{eq:boundw2}
  \end{align}
  % $w$
  % where the last inequality follows from $c_1 \leq c_3$.
  Then $\beta_t$ can be bounded as
  \begin{align*}
    \beta_t
    &
    =
    % \frac{c_2 c_3}{c_1}
    c_2
    +
    c_2
    \sum_{s=1}^{t-1} \frac{w_s}{ \sqrt{ c_1 + Z_{s-1} } }
    \geq
    \frac{c_2}{\sqrt{
      c_1 + Z_t
    }}
    % \left( c_4 +  \sum_{s=1}^t \zeta_s \right)^{-1/2}
    \left(
      % \frac{c_3}{c_1}
      1
      +
      \sum_{s=1}^{t-1} w_s
      % \frac{b_s}{\gamma_s'}
    \right)
    \\
    &
    =
    \frac{c_2}{\sqrt{
      c_1 + Z_t
    }}
    \left(
      % \frac{c_3}{c_1}
      1
      +
      W_{t-1}
    \right)
    \geq
    % \frac{c_2}{\sqrt{
    %   c_4 + Z_t
    % }}
    % \left(
    %   % \frac{c_3}{c_1}
    %   1
    %   +
    %   \frac{c_3}{c_1} (t - 1)
    %   % W_{t-1}
    % \right)
    % =
    \frac{c_2 t}{\sqrt{
      c_1 + Z_t
    }} ,
    % \left(
    %   1
    %   +
    %   \frac{c_3}{c_1} (t-1)
    % \right) .
  \end{align*}
  where the second inequality follows from \eqref{eq:boundw}.
  Hence,
  we have
  \begin{align}
    \sum_{t=1}^T
    \frac{b_t}{\gamma_t \beta_t}
    \leq
    \sum_{t=1}^T
    \frac{b_t}{\gamma'_t \beta_t}
    \leq
    \sum_{t=1}^T
    \frac{\sqrt{c_1 + Z_t}}{c_2}
    \frac{w_t}{1 + W_{t-1}}
    \leq
    \frac{\sqrt{c_1 + Z_T}}{c_2}
    \sum_{t=1}^T
    \frac{w_t}{1 + W_{t-1}}
    \\
    \leq
    O \left(
      \frac{\sqrt{c_1 + Z_T}}{c_2}
      \ln \left(
        1 + W_{T}
      \right)
    \right)
    \leq
    O \left(
      \frac{\sqrt{c_1 + Z_T}}{c_2}
      \ln T
    \right),
    \label{eq:BB2}
  \end{align}
  where the last inequality follows from \eqref{eq:boundw2} and
  the fourth inequality can be shown by taking the sum of the following inequality:
  \begin{align*}
    \ln (1 + W_{t}) - \ln (1  + W_{t-1} )
    =
    \ln 
      \frac{1 + W_t}{ 1 + W_{t-1}}
    =
    \ln \left( 
      1 + 
      \frac{w_t}{ 1 + W_{t-1}}
    \right)
    \geq
    \frac{1}{4}
    \cdot
    \frac{w_t}{ 1 + W_{t-1}},
  \end{align*}
  where the inequality follows from the facts that 
  $\ln(1+x) \geq \frac{1}{4} x$ holds for any $x \in [0, 8]$
  and that \eqref{eq:boundw2} implies
  $ \frac{w_t}{ 1 + W_{t-1}} \leq 8$ for all $t$.
  We further have
  \begin{align}
    \sum_{t=1}^T
    \frac{1}{\beta_t}
    \leq
    \sum_{t=1}^T
    \frac{ \sqrt{c_1 + Z_t}}{c_2  t}
    \leq
    \frac{ \sqrt{c_1 + Z_T}}{c_2 }
    \sum_{t=1}^T
    \frac{1}{t}
    =
    O \left(
      \frac{\sqrt{c_1 + Z_T}}{c_2 }
      \ln T
    \right) .
    \label{eq:BB3}
    % \sum_{t=1}^T
    % \frac{1}{t}
    % \frac{1}{1 + \frac{c_3}{c_1}(t-1)}
  \end{align}
  In addition,
  we have
  \begin{align}
    \sum_{t=1}^T \gamma'_{t}
    \leq
    \sum_{t=1}^T
    \frac{b_t}{c_1 + B_t^{1/3} }
    \leq
    \frac{3 c_1}{2}
    \sum_{t=1}^T
    \left(
      B_{t}^{2/3} - B_{t-1}^{2/3}
    \right)
    \leq
    \frac{3 c_1}{2}
    B_{T}^{2/3} 
    \label{eq:BB4}
    % \frac{}
  \end{align}
  where the first inequality follows from
  $y^{2/3} - x^{2/3} \geq \frac{2}{3} ( y - x ) y^{-1/3} $,
  which holds for any $y \geq x > 0$.
  Combining \eqref{eq:BB1}, \eqref{eq:BB2}, \eqref{eq:BB3} and \eqref{eq:BB4},
  we obtain
  \begin{align*}
    &
    \sum_{t=1}^T
    \left(
      \gamma_t + 
      \frac{\delta b_t}{\gamma_t \beta_t}
      +
      (\beta_{t+1} - \beta_t) a_{t+1}
    \right)
    =
    \sum_{t=1}^T
    \left(
      \gamma'_t 
      +
      \frac{2 \delta}{\beta_t}
      +
      \frac{\delta b_t}{\gamma_t \beta_t}
      +
      (\beta_{t+1} - \beta_t) a_{t+1}
    \right)
    \\
    &
    =
    O\left(
      c_1 B_T^{2/3}
      +
      \left(
        % \left(
        %   \frac{1}{c_2}
        %   +
        %   \frac{c_1}{c_2 c_3}
        % \right)\delta \ln T
        \frac{ \delta \ln T }{c_2}
        +
        c_2
      \right)
      \sqrt{
        c_1
        +
        Z_T
        % \sum_{t=1}^T \zeta_t
      }
    \right)
    \\
    &
    =
    O\left(
      c_1 B_t^{2/3}
      +
      \left(
        % \left(
          % \frac{1}{c_2}
          % +
          % \frac{c_1}{c_3}
        % \right)
        \frac{ \delta \ln T }{c_2}
        +
        c_2
      \right)
      \sqrt{
        c_1
        +
        \sum_{t=1}^T \frac{a_{t+1} }{c_1}
        \left(
          c_1
          +
          % \left(\sum_{t=1}^T b_t \right)
          B_t^{1/3}
        \right)
      }
    \right)
    \\
    &
    =
    O\left(
      c_1 B_t^{2/3}
      +
      \frac{1}{\sqrt{c_1}}
      \left(
        % \left(
          % \frac{1}{c_2}
          % +
          % \frac{c_1}{c_3}
        % \right)
        \frac{ \delta \ln T }{c_2}
        +
        c_2
      \right)
      \sqrt{
        c_1 ^2
        +
        % \frac{1}{c_1}
        \left(
          \ln K
          +
          A_T
        \right)
        \left(
          c_1
          +
          % \left(\sum_{t=1}^T b_t \right)
          B_T^{1/3}
        \right)
      }
    \right),
  \end{align*}
  where the third equality follows from \eqref{eq:zb}
  and the last equality follows from $a_{T+1} = O(\ln K)$.
\end{proof}

\subsection{Proof of Lemma~\ref{lem:boundATBT}}
\begin{proof}
  From the definition of $h(x)$,
  it holds for any $p \in \cP(V)$ and $i^* \in [K]$ that
  \begin{align}
    &
    -
    \sum_{i \in V_1}
    h ( p(i) )
    \leq
    -
    \sum_{i \in V}
    h ( p(i) )
    =
    \sum_{i \in V}
    \left(
      p(i)
      \ln
      \frac{1}{p(i)}
      +
      ( 1 - p(i) ) \ln \frac{1}{1 - p(i)}
    \right) 
    \nonumber
    \\
    &
    =
    H(p)
    +
    \sum_{i \in V}
    ( 1 - p(i) ) \ln \frac{1}{1 - p(i)}
    \leq
    (1 - p(i^*)) \ln \frac{\mathrm{e} K}{1 - p(i^*)} 
    +
    \sum_{i \in V}
    ( 1 - p(i) ) \ln \frac{1}{1 - p(i)} ,
    \label{eq:boundhp0}
    % \leq
    % \sum_{i \in V}
    % \left(
    %   p_i 
    %   \ln
    %   \frac{1}{p_i}
    %   +
    %   ( 1 - p_i ) \frac{1}{1 - p_i}
    % \right).
  \end{align}
  where the last inequality follows from \eqref{eq:boundH}.
  We further have
  \begin{align}
    \nonumber
    \sum_{i \in V} ( 1 - p(i) )  \ln \frac{1}{1 - p(i)}
    =
    (  1 - p(i^*)) \ln \frac{1}{1 - p(i^*)}
    +
    \sum_{i \in V \setminus \{ i^* \} } (1 - p(i))
    \ln \left(
      1 + \frac{p(i)}{1 - p(i)}
    \right)
    \\
    \leq
    (  1 - p(i^*)) \ln \frac{1}{1 - p(i^*)}
    +
    \sum_{i \in V \setminus \{ i^* \} } (1 - p(i))
    \left(
      \frac{p(i)}{1 - p(i)}
    \right)
    =
    (  1 - p(i^*)) \left( \ln \frac{1}{1 - p(i^*)} + 1 \right) .
    \label{eq:boundhp1}
  \end{align}
  Combining \eqref{eq:boundhp0} and \eqref{eq:boundhp1},
  we obtain
  \begin{align*}
    -
    \sum_{i \in V_1}
    h ( p(i) )
    \leq
    2 
    (  1 - p(i^*)) \ln \frac{\mathrm{e} K}{1 - p(i^*)} .
  \end{align*}
  From this,
  we have
  \begin{align*}
    A_T
    =
    -
    \sum_{t=1}^T
    \sum_{i \in V_1}
    h ( q_t(i) )
    \leq
    2 
    \sum_{t=1}^T
    (  1 - q_t(i^*)) \ln \frac{\mathrm{e} K}{1 - q_t(i^*)} 
    \leq
    2 
    Q( i^* ) \ln \frac{ \mathrm{e} K T }{ Q(i^*) },
  \end{align*}
  where the last inequality follows from the similar argument to Lemma~\ref{lem:boundat}.
  We also have
  \begin{align*}
    B_T
    &
    \leq
    \sum_{t=1}^T
    \sum_{i \in V}
    q_t(i) (1 - q_t(i) )
    % \\
    % &
    =
    \sum_{t=1}^T
    \left(
      q(i^*)(1 - q_t(i^*))
      +
      \sum_{i \in V \setminus \{i^*\}}
      q_t(i) (1 - q_t(i^*))
    \right)
    \\
    &
    \leq
    \sum_{t=1}^T
    \left(
      (1 - q_t(i^*))
      +
      \sum_{i \in V \setminus \{i^*\}}
      q_t(i)
    \right)
    =
    2
    \sum_{t=1}^T
    \left(
      1 - q_t(i^*)
    \right)
    =
    2 Q(i^*) .
  \end{align*}
  for any $i^* \in [K]$.
  This completes that proof of Lemma~\ref{lem:boundATBT}.
\end{proof}

\subsection{Proof of Lemma~\ref{lem:boundsqrt}}
\begin{proof}
  We have
  \begin{align}
    % -
    % \sum_{t=1}^T \sum_{i \in V_2} 
    % \frac{1}{\sqrt{t}}
    % \sum_{i \in V_2} g( q_{t} (i) )
    % =
    &
    \sum_{t=1}^T 
    \frac{1}{\sqrt{t}}
    \sum_{i \in V_2} 
    \sqrt{
    q_t(i)( 1 - q_t(i) )
    }
    \leq
    \sum_{t=1}^T 
    \frac{1}{\sqrt{t}}
    \sqrt{
      |V_2|
      \sum_{i \in V_2} 
      q_t(i)( 1 - q_t(i) )
    }
    \nonumber
    \\
    &
    \leq
    \sqrt{
      \left(
      \sum_{t=1}^T 
      \frac{1}{t}
      \right)
      \left(
      |V_2|
      \sum_{t=1}^T 
      \sum_{i \in V_2} 
      q_t(i)( 1 - q_t(i) )
      \right)
    }
    \leq
    \sqrt{
      |V_2|
      ( \ln T + 1 )
      \sum_{t=1}^T 
      \sum_{i \in V_2} 
      q_t(i)( 1 - q_t(i) )
    },
    \label{eq:lemsqrt}
  \end{align}
  where inequalities follow from the Cauchy-Schwarz inequality.
  We further have
  \begin{align*}
    \sum_{t=1}^T 
    \sum_{i \in V_2} 
    q_t(i)( 1 - q_t(i) )
    \leq
    \sum_{t=1}^T 
    ( 1 - q_t(i^*) )
    +
    \sum_{t=1}^T 
    \sum_{i \in V_2 \setminus \{ i^* \}} 
    q_t(i)
    \leq
    2
    \sum_{t=1}^T 
    ( 1 - q_t(i^*) )
    = 2 Q(i^*) 
  \end{align*}
  for any $i^* \in [K]$.
  Combining this with \eqref{eq:lemsqrt},
  we obtain $R^{(2)}_T = O \left( \sqrt{ |V_2| \ln T \cdot Q } \right)$.
\end{proof}

% \revise{
\section{Comparison with the result by \citet{rouyer2022near}}
% }
\label{sec:rouyer}
While \citet{rouyer2022near} consider the same research question as this paper,
their approach is different from ours in the following points.
Their algorithm follows the approach by \citet{seldin2014one} and \citet{seldin2017improved},
in which the suboptimality gaps $\Delta_i$ are explicitly estimated.
In contrast,
our algorithms do not use explicit estimation for suboptimality gap,
and instead employ the self-bounding technique to lead to stochastic regret bounds,
similarly to the algorithms by \citet{zimmert2021tsallis,wei2018more}.
Due to these differences in algorithm design and regret analysis,
it seems difficult to integrate these algorithms or provide a unified analysis.
% These differences in algorithm design and regret analysis result in the following different strength:

The differences in results can be summarized as follows:
\begin{itemize}
  \item Advantage of our results:
  \begin{itemize}
    \item Covered classes of feedback graphs: We provide algorithms for both strongly observable graphs and weakly observable graphs. On the other hand, the algorithms by \citet{rouyer2022near} only deal with graphs with self-loops, which is a special case of strongly observable graphs.
    \item Our algorithms can also handle stochastic environments with adversarial corruptions.
    \item Our regret bounds for strongly observable graph depend on the independence number $\alpha$ while the algorithms by \citet{rouyer2022near} depend on strong independent number $\tilde{\alpha}$,
    which is the independence number of the subgraph consisting of bidirectional edges. In general $\alpha \leq \tilde{\alpha}$, and for symmetric graphs $\alpha = \alpha'$.
    We also note that, in some cases, there is a significant discrepancy between $\alpha$ and $\tilde{\alpha}$.
    For example, a directed graph $G=(V,E)$ defined by $V=[K]$, $E = \{ (i, j) \in V \times V \mid i \leq j \}$ has $\alpha=1$ and $\tilde{\alpha} = K$.
  \end{itemize}
  \item Advantage of results by \citet{rouyer2022near}:
  \begin{itemize}
    \item Their algorithm has a regret bound expressed with individual suboptimality gaps $\Delta_i$ for stochastic environments, while the regret bounds in this paper depend only on $\Delta_{\min} = \min_{i \in [K] \setminus \{ i^* \}} \Delta_{i}$.
    Consequently,
    if many actions $i$ have large suboptimality gaps $\Delta_i \gg \Delta_{\min}$,
    their algorithms will perform better.
    \item Their regret bound has an improved dependency on $\ln T$.
    More precisely,
    their stochastic regret bounds for problems with strongly observable graphs scale with $O( (\ln T)^2 )$,
    which is better than our regret bounds of $O((\ln T)^3)$.
    \item Their paper includes extension to time varying feedback graphs though our algorithms seem to be extendable in a similar way.
  \end{itemize}
\end{itemize}

% \revise{
  \section{An alternative algorithm for the weakly observable case}
% }
\label{sec:weak-strong}
In the weakly observable case,
as shown in Theorem~\ref{thm:weak-informal},
our regret bound for stochastic environments
include an $O(\frac{K' \ln T}{\Delta_{\min}})$-term,
where $K' = |V_2|$ is the number vertices that are not dominated by the weakly dominating set $D$.
When $T$ is sufficiently larger than other problem parameters,
this term is negligibly small compared to the other term $\frac{\delta(\ln T)^2}{\Delta^2_{\min}}$.
However,
if $K'$ is larger than $\frac{\delta \ln T}{\Delta_{\min}}$,
this $O(\frac{K' \ln T}{\Delta_{\min}})$-term can be dominant.
In such a case,
the regret upper bound may be improved by modifying the algorithm.
Roughly speaking,
by combining the approach to strongly observable case,
the $O(\frac{K' \ln T}{\Delta_{\min}})$-term can be replaced with
an $O( \frac{\alpha^{(2)} ( \ln T )^{3}}{\Delta_{\min}} )$-term,
where $\alpha^{(2)}$ is the independent number of the subgraph induced by $V_2$,
i.e.,
\begin{align}
  G_2 = (V_2, E \cap (V_2 \times V_2)),
  \quad
  \alpha^{(2)} = \alpha(G_2).
\end{align}
We here note that $G_2$ is a strongly observable graph with self-loops as $D$ is a weakly dominating set (Definition~\ref{def:alphadelta})
and $V_2 = V \setminus \bigcup_{i\in D} \Nout (i)$.
If $\alpha^{(2)} (\ln T)^2 \leq K'$,
the modified version provides a better regret bound.
The details of the modification are given below.

Consider the following regularizer function:
\begin{align}
  \nonumber
  \psi_t (p)
  =
  \beta_t^{(1)}
  \sum_{i \in V_1} h (p(i))
  +
  \beta_t^{(2)}
  \sum_{i \in V_2} h (p(i)),
  \quad
  \mbox{where}
  \quad
  h(x) = x \ln x + (1 - x) \ln (1 - x)  .
  \label{eq:defpsiweak-alt}
\end{align}
We define $\beta_t^{(1)}$ and $\gamma_t^{(1)}$ in the same way as \eqref{eq:defgammabetaweak} in Section~\ref{sec:weak} with repracement of
$c_1 := c_1^{(1)}$ and $c_2 := c_2^{(1)}$.
Similarly,
we define 
$\beta_t^{(2)}$ and $\gamma_t^{(2)}$ in a similar way as \eqref{eq:defbeta} in Section~\ref{sec:strong} with
$c_1 := c_1^{(2)}$ and $a_s := \sum_{i \in V_2} h (q_s (i)) $.
Parameters
$c_1^{(1)}$, $ c_2^{(1)}$ and $c_1^{(2)}$ are specified later.
Using this regularizer function,
we compute $q_t$ using FTRL given by \eqref{eq:defFTRL}.
Then,
we compute $p_t$ by
\begin{align}
  p_t = (1-\gamma_{t}^{(1)} - \gamma_{t}^{(2)}) q_t
  +
  \gamma_{t}^{(1)} \mu_{D}
  +
  \gamma_{t}^{(2)} \mu_{V_2} .
\end{align}
We then have the following regret bound:
\begin{align}
  \nonumber
  R_T 
  &
  \leq 
  \hat{c}^{(1)} \cdot \max \left\{ \bar{Q}^{2/3}, \left( c_1^{(1)} \right)^2 \right\} 
  +
  \hat{c}^{(2)} \cdot \max \left\{ \bar{Q}^{1/2}, 1 \right\} 
  \quad
  \mbox{where}
  \\
  &
  \quad
  \hat{c}^{(1)}
  =
  O \left(
    c_1^{(1)}
    +
    \frac{1}{\sqrt{c_1^{(1)}}}
    \left(
      \frac{ |D| \ln T}{c_2^{(1)}}
      +
      c_2^{(1)}
    \right)
    \sqrt{\ln (KT)}
  \right) ,
  \nonumber
  \\
  &
  \quad
  \hat{c}^{(2)}
  =
  O \left(
    \left(
    \frac{\alpha^{(2)} \ln T \cdot \ln (c_1^{(2)} K T ) }{c_1^{(2)} \sqrt{\ln K}} 
    +
    c_1^{(2)} 
    \sqrt{\ln K}
    \right)
    \sqrt{\ln(KT)}
  \right) .
  \label{eq:thmweak-alt}
\end{align}
Consequently,
in adversarial regimes with self-bounding constraints,
we have
\begin{align}
  \label{eq:thmweak-alt2}
  R_T = 
      O\left( 
        \frac{(\hat{c}^{(1)})^3}{ \Delta_{\min}^2 } + \left(\frac{C^2 (\hat{c}^{(1)})^3}{\Delta_{\min}^2} \right)^{1/3}
        +
        O\left(  \frac{(\hat{c}^{(2)})^2}{ \Delta_{\min} } + \sqrt{ \frac{C (\hat{c}^{2})^2}{\Delta_{\min}} } \right)
      \right)  .
\end{align}
Similarly to the analysis in Section~\ref{sec:weak},
we obtain
$
\hat{c}^{(1)}
=
O \left(
  \left(
    |D| \ln T \cdot \ln (KT)
  \right)^{1/3}
\right) 
$ 
by setting
$c_1^{(1)} = \Theta\left( \left( |D| \ln T \cdot \ln (KT)  \right)^{1/3} \right) $
and
$c_2^{(1)} = \Theta \left( \sqrt{ |D| \ln T } \right)$.
Further,
by setting
$c_1^{(2)} = \Theta \left(  \sqrt{ \frac{ \alpha^{(2)} \ln T \cdot \ln (KT) }{ \ln K } }  \right)$,
we obtain
$\hat{c}^{(2)}  =
O \left( \sqrt{ \alpha^{(2)} \ln T \cdot (\ln (KT))^2 } \right)$.

Consequently,
the modified algorithm achieves
\begin{align}
  R_T = |D|^{1/3} (T \ln T )^{2/3} + \sqrt{ \alpha^{(2)} T (\ln T)^3 }
\end{align}
for adversarial environments
and
\begin{align}
  R_T = 
  \frac{ |D| (\ln T )^2 }{\Delta_{\min}^2}
  +
  \left( \frac{C^2 |D| (\ln T )^2 }{\Delta_{\min}^2} \right)^{1/3}
  +
  \frac{\alpha^{(2)} (\ln T)^3 }{\Delta_{\min}}
  +
  \left(
  \frac{C \alpha^{(2)} (\ln T)^3}{\Delta_{\min}}
  \right)^{1/2}
\end{align}
for stochastic environments with adversarial corruptions (more generally, in adversarial regimes with self-bounding constraints).

% \revise{
\section{Note on the definition of weak domination}
% }
\label{sec:weak-domination-definition}
Previous studies,
e.g., \citet{alon2015online},
have adopted 
a slightly different definition of \textit{weak domination} rather than one in this paper:
\begin{definition}[alternative difitnition of weak domination, \citep{alon2015online}]
  \label{def:WDalt}
  For any directed graph $G=(V, E)$ with a set of weakly observable vertices $W \subseteq V$,
  a \textit{weakly observable set} $D' \subseteq V$ is a set of vertices that dominates $W$,
  i.e.,
  that satisfies $ W \subseteq \bigcup_{i \in D'} \Nout (i)$.
  The \textit{weak domination number} $\delta'(G)$ of G is the size of its smallest weakly dominating set.
\end{definition}
We can see that our definition of weakly dominating set in Definition~\ref{def:alphadelta} and that in Definition~\ref{def:WDalt} coincide,
with some very limited exceptions.
Consequently,
we will see that
$\delta(G)$ and $\delta'(G)$ in Definitions~\ref{def:alphadelta} and \ref{def:WDalt} satisfy
$\delta(G) \leq \delta'(G) \leq \delta(G) + 1$.
Further,
if $\delta(G) \geq 2$ then $\delta(G) = \delta'(G)$.
These facts can be confirmed as follows.

From the definition observability (Definition~\ref{def:observability}),
the vertices of a weakly observable graph are classified into the following three type:
\begin{description}
  \item[strongly observable vertices, type 1] $V_{\mathrm{SO1}} = \{ i \in V \mid i \in \Nin(i) \}$:
  vertices with self-loop
  (strongly observable vertices, type 1).
  \item[strongly observable vertices, type 2] $V_{\mathrm{SO2}} = \{ i \in V \mid \Nin(i) = V \setminus \{ i \} \}$:
  vertices without self-loop, with edges from all other vertices.
  (strongly observable vertices, type 2).
  \item[weakly observable vertices] $V_{\mathrm{WO}} = V \setminus ( V_{\mathrm{SO1}} \cup V_{\mathrm{SO2}} )$:
  weakly observable vertices.
\end{description}
Weakly dominating set $D$ in Definition~\ref{def:alphadelta} dominates
all vertices except $V_{\mathrm{SO1}}$,
i.e.,
all vertices in $V_{\mathrm{SO2}} \cup V_{\mathrm{WO}}$.
Weakly dominating set $D'$ in Definition~\ref{def:WDalt} dominates $V_{\mathrm{WO}}$.
It is clear that $D'$ dominates $V_{\mathrm{SO1}}$,
which means that $D'$ is a weakly dominating set in the sense of Definition~\ref{def:alphadelta} as well.
On the other hand,
if the size of $D$ is greater than or equal to $2$,
then it also dominates all vertices $V_{\mathrm{SO2}}$.
% as $\bigcup_{i \in D}$ dominates $V_{\mathrm{SO2}} \setminus \{ i_1 \}$  
This implies that $D$ is a weakly dominating set in the sense of Definition~\ref{def:WDalt} as well.
Therefore,
for vertex sets of size at least $2$,
the concept of weak domination is the same in Definition~\ref{def:alphadelta} as in Definition~\ref{def:WDalt}.
Consequently,
we have $\delta(G) = \delta'(G)$ if $\delta(G) \geq 2$.

The only exception is the case in which $|D| = 1$ and $D \subseteq V_{\mathrm{SO2}}$.
In this case,
however,
by adding an arbitrary vertex to $D$,
we can make it dominate $V_{\mathrm{SO2}}$ as well.
In other words,
for any $i \in V \setminus D$,
$D \cup \{ i \}$ dominates $V_{\mathrm{SO2}}$,
and hence,
is a weakly dominating set in the sense of Definition~\ref{def:WDalt} as well.
Hence,
even if $\delta(G) = 1$,
we have $1 \leq \delta'(G) \leq 2$.

\end{document}